\newcommand{\la}{\left \langle}
\newcommand{\ra}{\right\rangle}
\newcommand{\norm}[1]{\left\lVert #1 \right\rVert}
\newtheorem{theorem}{Theorem}[section]
\newtheorem{lemma}[theorem]{Lemma}
\newtheorem{proposition}[theorem]{Proposition}
\theoremstyle{definition}
\newtheorem{assumption}[theorem]{Assumption}
\theoremstyle{remark}
\newtheorem{remark}[theorem]{Remark}
\numberwithin{equation}{section}
\def \d {\mathrm{d}}
\newcommand{\dd}[2]{\frac{\d #1}{\d #2}}
\newcommand{\pp}[2]{\frac{\partial #1}{\partial #2}}
\newcommand{\ol}[1]{\overline{#1}}
\newcommand{\olu}{\overline{u}}
\newcommand{\Rey}{\mathrm{Re}}
\title{PDE-constrained Models with Neural Network Terms: Optimization and Global Convergence}
\author{Justin Sirignano\footnote{Mathematical Institute at the University of Oxford, E-mail: Justin.Sirignano@maths.ox.ac.uk}, Jonathan MacArt\footnote{Department of Aerospace \& Mechanical Engineering at the University of Notre Dame, E-mail: jmacart@nd.edu}, and Konstantinos Spiliopoulos\footnote{Department of Mathematics \& Statistics at Boston University, Corresponding Author, E-mail: kspiliop@math.bu.edu}\footnote{K.S. was partially supported by the National Science Foundation (DMS 1550918, DMS 2107856) and Simons Foundation Award 672441.}}
\date{\today}
\begin{document}

\maketitle

\begin{abstract}
Recent research has used deep learning to develop partial differential equation (PDE) models in science and engineering. The functional form of the PDE is determined by a neural network, and the neural network parameters are calibrated to available data. Calibration of the embedded neural network can be performed by optimizing over the PDE. Motivated by these applications, we rigorously study the optimization of a class of linear elliptic PDEs with neural network terms. The neural network parameters in the PDE are optimized using gradient descent, where the gradient is evaluated using an adjoint PDE. As the number of parameters become large, the PDE and adjoint PDE converge to a non-local PDE system. Using this limit PDE system, we are able to prove convergence of the neural network-PDE to a global minimum during the optimization. Finally, we use this adjoint method to train a neural network model for an application in fluid mechanics, in which the neural network functions as a closure model for the Reynolds-averaged Navier--Stokes (RANS) equations. The RANS neural network model is trained on several datasets for turbulent channel flow and is evaluated out-of-sample at different Reynolds numbers.
\end{abstract}

\section{Introduction}

Consider the optimization problem where we seek to minimize the objective function
\begin{eqnarray}
J(\theta) = \frac{1}{2}  \sum_{\ell=1}^L \bigg{(} \la u, m_{\ell} \ra - \la h, m_{\ell} \ra \bigg{)}^2,
\label{ObjectivePDEIntro}
\end{eqnarray}
where $h$ is a target profile, $\{m_{\ell}\}_{\ell=1}^{L}$ are given functions that could be interpreted as moments or eigenfunctions for example. Here, we use the inner product notation $\la u, m_{\ell} \ra = \int_{U} u(x) m_{\ell}(x) dx$  to denote the inner product between two functions in $L^{2}(U)$. The variable $u$ satisfies the elliptic partial differential equation (PDE)
\begin{align}
A u(x) &= f( x, \theta ), \qquad x\in U\nonumber\\
u(x)&= 0,\qquad x\in \partial U,
\label{EllipticPDEIntro}
\end{align}
where $A$ is a standard  second-order elliptic operator and $U \subset \mathbb{R}^{d}$. \textcolor{black}{Assumption \ref{A:AssumptionTargetProfile} specifies the assumptions on the target profile $h$ and on $m_{\ell}, \ell=1,\cdots,L$ that are taken to be orthonormal functions in $L^{2}(U)$.} Thus, we wish to select the parameters $\theta$ such that the solution $u$ of the PDE (\ref{EllipticPDEIntro}) is as close as possible to the target profile $h$.

If $A^{\dagger}$ denotes the formal adjoint operator to $A$, then the adjoint PDE is
\begin{align}
A^{\dagger} \hat u(x)&=  ( \la u, m \ra - \la h, m \ra  )^{\top} m(x), \qquad x\in U\nonumber\\
\hat u(x)&= 0,\qquad x\in \partial U,
\label{AdjointPDEIntro}
\end{align}
where $m(x) = ( m_1(x), m_2(x), \ldots, m_L(x))$ \footnote{\textcolor{black}{With some slight abuse of notation if $L>1$, then $m(x)$ will always denote the vector $( m_1(x), m_2(x), \ldots, m_L(x))$ whereas $m_{i}(x)$ will denote the $i^{th}$ element of the vector. }} and $( \la u, m \ra - \la h, m \ra  )^{\top} m(x) = \sum_{\ell=1}^L ( \la u, m_{\ell} \ra - \la h, m_{\ell} \ra  ) m_{\ell}(x)$.

The optimization of (\ref{ObjectivePDEIntro}) with the PDE constraint (\ref{EllipticPDEIntro}) using the adjoint equation (\ref{AdjointPDEIntro}) is a classic PDE-constrained optimization problem. Adjoint methods have been widely used for the optimization of PDE models for engineering systems: see, for example, \cite{Brandenburg2009,Bueno,Cagnetti,Giles3,Giles0,Giles1,Giles2,Hinze2009,Jameson4,Jameson6,Jameson2,Jameson3,Jameson5,Giles4,Protas,Jameson1,Knopoff2013,Bosse2014,GaugerHambi2012,Gunther2016,Hazra2007,HazraSchultz2004,Kaland2014,Taasan1991,Taasan1995}. \cite{BergNystrom} recently numerically optimized a linear PDE with a neural network source term using adjoint methods. \cite{LES1,LES2}, \cite{Duraisamy}, \cite{Duraisamy3}, and \cite{Brenner} use adjoint methods to calibrate nonlinear PDE models with neural network terms.  \cite{BergNystrom,LES1,LES2, Duraisamy, Duraisamy3, Brenner} do not include a mathematical analysis of the adjoint method for PDE models with neural network terms.

We set the source term $f$ in equation (\ref{EllipticPDEIntro}) to be a neural network $f^{N}$ with $N$ hidden units:
\begin{eqnarray}
f^{N}(x, \theta) = \frac{1}{N^{\beta}} \sum_{i=1}^N c^i \sigma( w^i x+\eta^{i}),\label{Eq:NeuralNetwork}
\end{eqnarray}
where $(c^i, w^i, \eta^i) \in \mathbb{R} \times \mathbb{R}^d \times \mathbb{R}$, $w^i$ is a column vector, $x$ is a row vector, $\theta= \{ (c^i, w^i, \eta^i), i=1,\cdots,N   \}$, and $\beta\in(\frac{1}{2},1)$. The factor $\frac{1}{N^{\beta}}$ is a normalization.

 As the number of hidden units increase in the neural network (\ref{Eq:NeuralNetwork}), the number of parameters also increases. The neural network can represent more complex functions with larger numbers of hidden units and parameters.

The parameters $\theta$ are learned via continuous-time gradient descent:
\begin{eqnarray}
\dot{\theta}_{t}^{i} &=&  - \alpha^{N} \nabla_{\theta^{i}} J(\theta_t),
\label{GradientDescentIntro0}
\end{eqnarray}
where $\alpha^{N}$ is the learning rate and $\theta_{t}^{i}=(c^{i}_{t}, w^{i}_{t}, \eta^{i}_{t})$ for $i=1,\cdots, N$. The gradient $\nabla_{\theta^{i}} J(\theta_t)$ is evaluated by solving the adjoint PDE (\ref{AdjointPDEIntro}) and using the formula
\begin{eqnarray}
\nabla_{\theta} J(\theta) =\int_{U} \hat u(x) \nabla_{\theta} f(x, \theta) dx.
\label{GradientObjIntro0}
\end{eqnarray}

We will analyze the solution to equations (\ref{EllipticPDEIntro}), (\ref{AdjointPDEIntro}), and (\ref{GradientDescentIntro0}) when the source term is a neural network $f^N$ and as the number of hidden units $N \rightarrow \infty$.

Define
\begin{align}
g^{N}_{t}(x)&=f^{N}(x,\theta_{t})=\frac{1}{N^{\beta}} \sum_{i=1}^N c^i_{t} \sigma( w^i_{t} x+\eta^{i}_{t}).\label{Eq:NN}
\end{align}

Let $u(x; f)$ and $\hat u(x;f)$ respectively be the solutions $u(x)$ and $\hat u(x)$ with a forcing function $f$. Define $(u^{N}(t,x),\hat u^{N}(t,x))=(u(x;g^{N}_{t}),\hat u(x;g^{N}_{t}))$. Let us also define the notation $u^N(t) = u^N(t, \cdot)$ and $\hat u^N(t) =\hat u^N(t, \cdot)$.  We explicitly denote the dependence of the objective function (\ref{ObjectivePDEIntro}) on the number of hidden units $N$ in the neural network by introducing the notation
\begin{eqnarray}
J^{N}_{t} \equiv J(\theta_t)=\frac{1}{2} \norm{ \la u^N - h, m \ra }_2^2,
\label{ObjectiveFunctionN}
\end{eqnarray}
where $\norm{\cdot}_2$ is the Euclidean norm.

We prove that for any $\epsilon > 0$, there exists a time $\tau > 0$ and an $N_0\in\mathbb{N}$ such that
\begin{align}
 \sup_{t\geq \tau, N \geq N_0} \mathbb{E} J_t^N &\leq \epsilon.\nonumber
\end{align}

The proof consists of three steps. First, we prove that, as $N\rightarrow\infty$, $u^N \rightarrow u^{\ast}$ and $\hat u^N \rightarrow \hat u^{\ast}$, where $u^{\ast}$ and $\hat u^{\ast}$ satisfy a non-local PDE system. We then prove that the limit PDE system reaches a the global minimum; i.e., for any $\epsilon > 0$, there exists a time $\tau$ such that $\norm{ ( u^{\ast}, m)(\tau) - (m, h) }_2 < \epsilon$. This analysis is challenging due to the lack of a spectral gap for the eigenvalues of the non-local linear operator in the limit PDE system. The third and final step is to use the analysis of the limit PDE system to prove global convergence for the pre-limit PDE system (\ref{EllipticPDEIntro}), which requires proving that the pre-limit solution remains in a neighborhood of the limit solution on some time interval $[T_0, \infty)$.

\subsection{Applications in Science and Engineering}\label{SS:Applications}

The governing equations for a number of scientific and engineering applications are well-understood. However, numerically solving these equations can be computationally intractable for many real-world applications. The exact physics can involve complex phenomena occurring over disparate ranges of spatial and temporal scales, which can require infeasibly large computational grids to accurately resolve. Examples abound in air and space flight, power generation, and biomedicine~\cite{Curran1996,Pope2000,Wang2012,Ju2015,Sabelnikov2017}. For instance, fully resolving turbulent flows in fluid mechanics can quickly become infeasible at flight-relevant Reynolds numbers.

The exact physics equations must therefore be approximated in practice, resulting in reduced-order PDE models which can be numerically solved at tractable cost. This frequently introduces substantial approximation error, and model predictions can be unreliable and fail to match observed real-world outcomes. These reduced-order PDE models are derived from the ``exact'' physics but introduce unclosed terms which must be approximated with a model.

Examples of reduced-order PDE models in fluid mechanics include the Reynolds-averaged Navier--Stokes (RANS) equations and Large-eddy simulation (LES) equations. Recent research has explored using deep learning models for the closure of these equations \cite{Brenner, Menon1, Menon2, PeymanGivi1, Ling1,Ling2, PhysicsofFluidsLES, Duraisamy, Duraisamy3, LES1, LES2}. In parallel, other classes of machine learning methods have also been developed for PDEs such as physics-informed neural networks (PINNs) \cite{Karniadakis1,Karniadakis2, Karniadakis3, Karniadakis4, Karniadakis5, Karniadakis6, Karniadakis7}. An overview of the current progress in developing machine learning models in fluid mechanics can be found in \cite{FreundMLPersp:2019, Duraisamy2, Brunton:2020}.

This recent research on combining PDE models with neural networks motivates the analysis in our paper. \textcolor{black}{In Section~\ref{Numerical}, we first demonstrate numerically the convergence theory of this paper. We then implement the adjoint method to train a neural network RANS closure model for turbulent channel flow. }This numerical section demonstrates (A) that the present theoretical results are applicable beyond the linear case and (B) how the approach can be applied in practice to nonlinear differential equations, as well as within the framework of a strong formulation of the objective function (\ref{ObjectivePDEIntro}). The extension of our theoretical results to the nonlinear PDE case is left for future work.

\subsection{Organization of the Article}

Section~\ref{S:MainResults} presents the main theorems regarding the global convergence of the PDE solution $u^N(t,x)$ to a global minimizer of the objective function (\ref{ObjectivePDEIntro}). \textcolor{black}{Section~\ref{Numerical} includes numerical studies that demonstrate the convergence theory as well as a numerical example of neural network PDE models in fluid mechanics.} The remaining sections contain the proof of the theorems. Section~\ref{S:aprioriBounds} contains several useful \emph{a priori} bounds. The convergence $u^N \rightarrow u^{\ast}$ and $\hat u^N \rightarrow \hat u^{\ast}$ as $N \rightarrow \infty$ is proven in Section~\ref{S:ProofConvergencePDE}. The uniqueness and existence of a solution to the non-local linear PDE system is proven in Section~\ref{S:WellPosedness}. Section~\ref{S:PreliminaryCalc} studies the spectral properties of the limit PDE system and proves that $u^{\ast}(t)$ reaches a the global minimum; i.e., for any $\epsilon > 0$, there exists a time $\tau$ such that $\norm{ ( u^{\ast}, m)(\tau) - (m, h) }_2 < \epsilon$. Section~\ref{GlobalConvergencePrelimit} proves the global convergence of the pre-limit PDE solution. In Section \ref{S:Extensions}, we discuss possible extensions of this work.

\section{Main Results}\label{S:MainResults}

Let U be an open, bounded, and connected subset of $\mathbb{R}^{d}$.   We will denote by $(\cdot,\cdot)$ the usual inner product in $H=L^{2}(U)$. We shall assume that the operator $A$ is uniformly elliptic, diagonalizable and dissipative, per Assumption \ref{A:Assumption0}.

\begin{assumption} \label{A:Assumption0}
The operator $A$ is a second order uniformly elliptic operator with uniformly bounded coefficients and self-adjoint. The countable complete orthonormal basis $\{e_{n}\}_{n\in\mathbb{N}}\subset H$ that consists of eigenvectors of $A$ corresponds to a non-negative sequence $\{\lambda_{n}\}_{n\in\mathbb{N}}$ of eigenvalues which satisfy
\[
(-A)e_{n}=-\lambda_{n}e_{n}, \quad n\in\mathbb{N}
\]
and the dissipativity condition $\lambda= \displaystyle \inf_{n\in\mathbb{N}} \lambda_{n}>0$ holds.
\end{assumption}

An example of a second-order  elliptic operator is
\begin{align*}
Au(x)=-\sum_{i,j=1}^{n}\left(a^{i,j}(x)u_{x_{j}}(x)+b^{i}(x)u(x)\right)_{x_{i}}+\sum_{i=1}^{n}b^{i}(x)u_{x_{i}}(x)-c(x)u(x).
\end{align*}
The assumption above that $A$ is uniformly elliptic and self-adjoint guarantees that $A$ is diagonalizable \cite[Theorem 8.8.37]{GilbargTrudinger}.

\begin{assumption}\label{A:AssumptionTargetProfile}
We assume that the target profile $h(x)$ and the functions $m(x)$ satisfy the following conditions:
\begin{itemize}
\item The target profile $h$ satisfies $h\in L^{2}(U)$ and, in addition, $Ah\in L^{2}(U)$.
\item $\{m_{\ell}\}_{\ell=1}^{L}$ are orthonormal functions, $A m_{\ell} \in L^{2}(U)$, and $m_{\ell}\in H^{1}_{0}(U)$ for $\ell\in\{1,\cdots, L\}$.
\end{itemize}
\end{assumption}

Notice that if $L=1$, then we can consider any function $m(x)$ as long as $Am\in L^{2}(U)$ and $m\in H^{1}_{0}(U)$ since one can always normalize by $\norm{m}_{L_{2}}$.

\begin{assumption} \label{A:Assumption1} The following assumptions for the neural network (\ref{Eq:NeuralNetwork}) and the learning rate are imposed:
\begin{itemize}
\item The activation function $\sigma\in C^{2}_{b}(\mathbb{R})$, i.e. $\sigma$ is twice continuously differentiable and bounded. We also assume that $\sigma$ is non-constant.
\item The randomly initialized parameters $(C_0^i, W_0^i,\eta_{0}^{i})$ are i.i.d., mean-zero random variables with a distribution $ \mu_0(dc, dw, d\eta)$.
\item The random variable $C_0^i$ has compact support and $\la \norm{w}+|\eta|, \mu_0 \ra =\int_{\mathbb{R}^{d+2}}(\norm{w}+|\eta|)\mu_{0}(dc,dw,d\eta)< \infty$.
\item  The learning rate is chosen to be $\alpha^{N}=\frac{\alpha}{N^{1-2\beta}}$ where $0<\alpha<\infty$ and $\beta\in(1/2,1)$.
\end{itemize}
\end{assumption}

We first study the limiting behavior of the PDE system (\ref{EllipticPDEIntro})-(\ref{AdjointPDEIntro}) when the source term $f$ in (\ref{EllipticPDEIntro}) is $g^{N}_{t}(x)$. The pre-limit system of PDEs can be written as follows
\begin{align}
A u^{N}(t,x) &= g^N_{t}(x), \quad x\in U\nonumber\\
u^{N}(t,x)&= 0,\quad x\in \partial U,\label{Eq:PreLimitPDE1}
\end{align}
and
\begin{align}
A^{\dagger} \hat u^{N}(t,x)&=  ( \la u^N, m \ra - \la h, m \ra  )^{\top} m(x), \quad x\in U\nonumber\\
\hat u^{N}(t,x)&= 0,\quad x\in \partial U.\label{Eq:PreLimitPDE2}
\end{align}

$\theta_t$ satisfies the differential equation
\begin{align}
\dot{\theta}^{i}_{t}=-\alpha^{N}\nabla_{\theta^{i}_{t}}J(\theta_{t}),
\label{Eq:thetaEvolution}
\end{align}
where $\theta^{i}_{0}\sim \mu_{0}$ are sampled independently for each $i=1,\cdots, N$ and $\alpha^{N}$ is the learning rate. We will denote by $\Theta$ the space in which the initialization $\theta^{i}_{0}=(c^{i}_{0},w^{i}_{0},\eta^{i}_{0})$ take values in. The gradient is evaluated via the formula (see Lemma \ref{Eq:ObjFcnRep})
\begin{eqnarray}
\nabla_{\theta_i} J(\theta_t) =\int_{U} \hat u^{N}(t,x)\nabla_{\theta_i} f^N(x, \theta_t) dx.
\label{GradientObjIntro}
\end{eqnarray}

Before proceeding to describe the limiting PDE system, a few remarks on the pre-limit system (\ref{Eq:PreLimitPDE1})-(\ref{Eq:PreLimitPDE2}) are in order.
\begin{remark}\label{R:GlobalRegularity}
Based on our assumptions, the Lax--Milgram theorem (see for example Chapter 5.8 of \cite{GilbargTrudinger}) says that the elliptic boundary-value problem
(\ref{Eq:PreLimitPDE1})-(\ref{Eq:PreLimitPDE2}) has a unique weak solution $(u^{N},\hat{u}^{N})\in H^{1}_{0}(U)\times H^{1}_{0}(U)$. Let us also recall here that classical elliptic regularity theory guarantees that if for given $m\in\mathbb{N}$, the coefficients of the linear elliptic operator $A$ are in $\mathcal{C}^{m+1}(\bar{U})$  and $\partial U\in\mathcal{C}^{m+2}$, then the unique solution $(u^{N},\hat{u}^{N})$ to (\ref{Eq:PreLimitPDE1})-(\ref{Eq:PreLimitPDE2}) actually satisfies $(u^{N},\hat{u}^{N})\in H^{m+2}(U)\times H^{m+2}(U)$. In fact, if the coefficients of $A$ are in $\mathcal{C}^{\infty}(\bar{U})$  and $\partial U\in\mathcal{C}^{\infty}$, then $(u^{N},\hat{u}^{N})\in \mathcal{C}^{\infty}(\bar{U})\times \mathcal{C}^{\infty}(\bar{U})$.  The interested reader is referred to classical manuscripts for more details, see for example \cite[Chapter 8]{GilbargTrudinger}. Here $\bar{U}$ is the closure of the set $U$ and $\mathcal{C}^{k}$ denotes the set of $k$-times continuously differentiable functions.
\end{remark}

\begin{remark}\label{R:NonZeroBoundaryData}
The assumption of zero boundary conditions for the PDE (\ref{Eq:PreLimitPDE1})-(\ref{Eq:PreLimitPDE2}) is for notational convenience and is done without loss of generality. For example, we can consider the PDE (\ref{Eq:PreLimitPDE1}) with non-zero boundary data, say $u^{N}= q, \text{ on } \partial U$, under the assumption $q\in H^{1}(U)$ for unique solvability of (\ref{Eq:PreLimitPDE1}); see \cite[Chapter 8]{GilbargTrudinger} for more details.
\end{remark}

Let us now describe the limiting system. For a given measure $\mu$, let us set
\begin{align}
B(x,x';\mu)&= \alpha\la  \sigma(w x' + \eta) \sigma(w x + \eta)+    \sigma'(w x' + \eta)   \sigma'(w x  + \eta) c^2 (x'x+1), \mu \ra.\label{Eq:Bfcn}
\end{align}

Define the empirical measure
\begin{eqnarray}
\mu_t^N = \frac{1}{N} \sum_{i=1}^N \delta_{c^i_t, w^i_t, \eta^i_t}.\nonumber
\end{eqnarray}

As it will be shown in the proof of Lemma \ref{L:Bound_g}, see equation (\ref{Eq:Rep_g}), we can write the following representation for the neural network output
 \begin{align}
g_{t}^{N}(x)&= g_{0}^{N}(x)-\int_{0}^{t}\int_{U}\hat{u}^{N}(s,x') B(x,x';\mu^{N}_{s})dx'ds.\label{Eq:Rep_g0}
\end{align}

Consider now the non-local PDE system
\begin{align}
A u^{\ast}(t,x) &= g_{t}(x), \quad x\in U\nonumber\\
u^{\ast}(t,x)&= 0,\quad x\in \partial U,\label{Eq:LimitPDE1}
\end{align}
and
\begin{align}
A^{\dagger} \hat u^{\ast}(t,x)&= (\la u^{\ast}(t,x), m \ra - \la h, m \ra)^{\top} m(x), \quad x\in U\nonumber\\
\hat u^{\ast}(t,x)&= 0,\quad x\in \partial U.\label{Eq:LimitPDE2}
\end{align}
with
\begin{align}
g_t(x)
&=-\int_{0}^{t} \left(\int_{U} \hat u^{\ast}(s,x') B(x,x';\mu_{0}) dx'\right) ds.\label{Eq:Limiting_g}
\end{align}
The system of equations given by (\ref{Eq:LimitPDE1})-(\ref{Eq:LimitPDE2}) is well-posed as our next Theorem \ref{T:WellPosednessLimitSystem} shows. Its proof is in Section~\ref{S:WellPosedness}.
\begin{theorem}\label{T:WellPosednessLimitSystem}
Let $0<T<\infty$ be given and assume that Assumptions \ref{A:Assumption0} and \ref{A:AssumptionTargetProfile} hold. Then, there is a unique solution to the system (\ref{Eq:LimitPDE1})-(\ref{Eq:LimitPDE2}) in $C([0,T];H^{1}_{0}(U))\times C([0,T];H^{1}_{0}(U))$.
\end{theorem}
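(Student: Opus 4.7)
The plan is to reduce the coupled system (\ref{Eq:LimitPDE1})--(\ref{Eq:LimitPDE2}) to a single Volterra-type integral equation for $\hat u^{\ast}$ alone, and then apply the Banach fixed-point theorem in $C([0,T];H^{1}_{0}(U))$. The key observation is that, given $\hat u^{\ast}\in C([0,T];H^{1}_{0}(U))$, the formula (\ref{Eq:Limiting_g}) determines $g_{t}$ explicitly in terms of $\hat u^{\ast}|_{[0,t]}$; solving the elliptic problem $A u^{\ast}(t)=g_{t}$ via Lax--Milgram (using Assumption \ref{A:Assumption0}) yields a unique $u^{\ast}(t)\in H^{1}_{0}(U)$; and substituting this into (\ref{Eq:LimitPDE2}) produces a closed self-referential equation for $\hat u^{\ast}$.

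Concretely, I would define a map $\Phi:C([0,T];H^{1}_{0}(U))\to C([0,T];H^{1}_{0}(U))$ sending $\hat v$ to the unique $H^{1}_{0}$-solution $\Phi(\hat v)(t)$ of
\[
A^{\dagger}\Phi(\hat v)(t,x)=\bigl(\langle A^{-1}g_{t}^{\hat v},m\rangle-\langle h,m\rangle\bigr)^{\top} m(x),
\]
where $g_{t}^{\hat v}(x)=-\int_{0}^{t}\int_{U}\hat v(s,x')\,B(x,x';\mu_{0})\,dx'\,ds$. The crucial structural fact is that the kernel $B(\cdot,\cdot;\mu_{0})$ is uniformly bounded on $\bar U\times\bar U$: by Assumption \ref{A:Assumption1}, $\sigma$ and $\sigma'$ are bounded, $c$ has compact support under $\mu_{0}$, the first absolute moment of $(w,\eta)$ under $\mu_{0}$ is finite, and $U$ is bounded. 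Combining this with the standard $L^{2}\to H^{1}_{0}$ elliptic estimates for $A$ and $A^{\dagger}$ (Assumption \ref{A:Assumption0}) and the $L^{2}$-regularity of $m$ and $h$ (Assumption \ref{A:AssumptionTargetProfile}) yields a Volterra-type Lipschitz estimate
\[
\|\Phi(\hat u)(t)-\Phi(\hat v)(t)\|_{H^{1}_{0}}\leq C\int_{0}^{t}\|\hat u(s)-\hat v(s)\|_{H^{1}_{0}}\,ds,
\]
for a constant $C$ depending on $T$, $U$, $A$, $m$, and $\mu_{0}$, but not on $\hat u,\hat v$.

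From here the argument is standard. I would equip $C([0,T];H^{1}_{0}(U))$ with the weighted norm $\|\hat v\|_{\lambda}=\sup_{t\in[0,T]}e^{-\lambda t}\|\hat v(t)\|_{H^{1}_{0}}$ for sufficiently large $\lambda$, under which $\Phi$ becomes a strict contraction; equivalently, iterate $\Phi$ and use that the Volterra bound gives $\|\Phi^{n}(\hat u)-\Phi^{n}(\hat v)\|\le \tfrac{(CT)^{n}}{n!}\|\hat u-\hat v\|$. Banach's fixed-point theorem then produces a unique $\hat u^{\ast}\in C([0,T];H^{1}_{0}(U))$, and defining $u^{\ast}(t)$ as the $H^{1}_{0}$-solution of $A u^{\ast}(t)=g_{t}^{\hat u^{\ast}}$ completes the pair. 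Continuity of $t\mapsto u^{\ast}(t)$ and $t\mapsto\hat u^{\ast}(t)$ in $H^{1}_{0}(U)$ then follows from $t$-continuity of the right-hand sides in $L^{2}(U)$ (immediate from boundedness of $B$ and absolute continuity of the time integrals) together with elliptic regularity.

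The main obstacle I anticipate is the careful verification, at the $H^{1}_{0}$ level, of the boundedness and Lipschitz constant of the map $\hat v\mapsto g_{t}^{\hat v}$: this requires exploiting the $C^{2}_{b}$-regularity of $\sigma$, the compact support of $c$, and the first-moment bound on $(w,\eta)$ under $\mu_{0}$ to conclude that $B\in L^{\infty}(\bar U\times\bar U)$, while tracking how this interacts with the two elliptic inversions. Once that is established, the self-adjointness and ellipticity of $A$ supply all remaining regularity, and the rest of the argument reduces to a routine Volterra fixed-point in a Bochner space.
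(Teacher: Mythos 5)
Your proof is correct, but it organizes the fixed-point argument genuinely differently from the paper. The paper first proves uniqueness by a separate energy/Gronwall estimate on the difference $\zeta=g^{1}-g^{2}$ (using the bounds $\norm{\phi(t)}_{H^{1}_{0}}\leq C\norm{\zeta(t)}_{L^{2}}$ and $\norm{\psi(t)}_{H^{1}_{0}}\leq C\norm{\phi(t)}_{H^{1}_{0}}$), and then proves existence by a contraction for the \emph{pair} $(u^{\ast},\hat u^{\ast})$ on $V_{T_{1}}(M_{1})\times V_{T_{1}}(M_{1})$ for a small time $T_{1}$ (requiring $C_{2}T_{1}<1$ and an invariant ball), followed by an iterative continuation over intervals $[T_{k},T_{k+1}]$ supported by a global a priori bound $\norm{u^{\ast}(t)}_{H^{1}_{0}}+\norm{\hat u^{\ast}(t)}_{H^{1}_{0}}\leq C_{1}\norm{h}_{L^{2}(U)}(e^{C_{2}t}+1)$. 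You instead collapse the system to a single Volterra equation for $\hat u^{\ast}$ alone and exploit the fact that the entire system is affine in $\hat u^{\ast}$ --- so the Volterra Lipschitz constant is global --- to obtain a contraction on all of $[0,T]$ in one step via the weighted norm (equivalently the $(CT)^{n}/n!$ iterate bound). This buys you existence and uniqueness simultaneously and eliminates both the continuation bookkeeping and the separate uniqueness argument; the underlying ingredients (boundedness of $B(\cdot,\cdot;\mu_{0})$, Lax--Milgram estimates for $A$ and $A^{\dagger}$, Cauchy--Schwarz, and the orthonormality/regularity of $m$ and $h$) are identical. The paper's local-plus-continuation template is the more robust one --- it is what one would need if the right-hand side were genuinely nonlinear in $\hat u^{\ast}$ and no global Lipschitz constant were available --- but for this linear system your route is shorter and equally rigorous. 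One small remark: the uniform boundedness of $B$ follows already from the boundedness of $\sigma$ and $\sigma'$, the compact support of $c$ under $\mu_{0}$, and the boundedness of $U$; the first-moment condition on $(w,\eta)$ that you invoke is not actually needed for that step.
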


We will show that $(u^N,\hat{u}^N,g^{N}) \rightarrow (u^{\ast},\hat{u}^{\ast}, g) $ as $N\rightarrow\infty$. In particular, denoting by $\mathbb{E}$ the expectation operator associated with the randomness of the system (generated by the initial conditions), we have the following result whose proof is in Section~\ref{S:ProofConvergencePDE}.
\begin{theorem} \label{T:ConvergenceNTheorem}
Let $0<T<\infty$ be given and assume the Assumptions \ref{A:Assumption0}, \ref{A:AssumptionTargetProfile} and \ref{A:Assumption1} hold. Then, as $N \rightarrow \infty$,
\begin{eqnarray}
\max_{0 \leq t \leq T} \mathbb{E}\norm{u^N(t)-u^{\ast}(t)}_{H^{1}_{0}(U)} &\rightarrow& 0, \notag \\
\max_{0 \leq t \leq T} \mathbb{E}\norm{\hat u^N(t)-\hat u^{\ast}(t)}_{H^{1}_{0}(U)} & \rightarrow& 0, \notag \\
\max_{0 \leq t \leq T} \mathbb{E}\norm{g^N_{t} -g_{t}}_{L^{2}(U)} &\rightarrow& 0. \label{Eq:SystemConvergence}
\end{eqnarray}
In addition, $\la f, \mu^{N}_{t} \ra \overset{p} \rightarrow \la f, \mu_0 \ra$ (convergence in probability) for each $t\geq 0$ for all $f \in C^2_b(\mathbb{R}^{2 + d })$.
\end{theorem}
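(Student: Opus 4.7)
\textbf{Proof proposal for Theorem \ref{T:ConvergenceNTheorem}.}

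The strategy is to reduce everything to controlling $\|g^N_t-g_t\|_{L^2(U)}$, since both $u^N-u^*$ and $\hat u^N-\hat u^*$ are solutions of linear elliptic equations driven by quantities that are linear functionals of $g^N-g$. Specifically, $A(u^N-u^*)=g^N_t-g_t$ with zero boundary data, and Lax--Milgram plus elliptic regularity (see Remark \ref{R:GlobalRegularity}) gives $\|u^N(t)-u^*(t)\|_{H^1_0(U)}\le C\|g^N_t-g_t\|_{L^2(U)}$. Similarly, $A^{\dagger}(\hat u^N-\hat u^*)=(\langle u^N-u^*,m\rangle)^{\top}m$, so $\|\hat u^N(t)-\hat u^*(t)\|_{H^1_0(U)}\le C\|u^N(t)-u^*(t)\|_{L^2(U)}\le C\|g^N_t-g_t\|_{L^2(U)}$. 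Thus it suffices to prove the third line of (\ref{Eq:SystemConvergence}); the first two will follow.

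To bound $\|g^N_t-g_t\|_{L^2(U)}$, I subtract (\ref{Eq:Limiting_g}) from (\ref{Eq:Rep_g0}) and split the resulting difference into three pieces:
\begin{align*}
g^N_t(x)-g_t(x) &= g^N_0(x) \;-\; \int_0^t\!\!\int_U(\hat u^N-\hat u^*)(s,x')\,B(x,x';\mu^N_s)\,dx'ds \\
&\quad -\; \int_0^t\!\!\int_U \hat u^*(s,x')\bigl[B(x,x';\mu^N_s)-B(x,x';\mu_0)\bigr]dx'ds.
\end{align*}
Taking the $L^2(U)$ norm in $x$ and then expectation, the first term vanishes because, by the $1/N^{\beta}$ scaling with $\beta>1/2$ together with the i.i.d.\ mean-zero initialization,
\[
\mathbb{E}\|g^N_0\|_{L^2(U)}^2=\frac{1}{N^{2\beta}}\sum_{i=1}^N \mathbb{E}\!\int_U\!\bigl(c^i_0\sigma(w^i_0x+\eta^i_0)\bigr)^2 dx =O(N^{1-2\beta})\to 0.
\]
The \emph{a priori} bounds of Section \ref{S:aprioriBounds} provide uniform $L^2$/$L^\infty$ control of $\hat u^N$, $\hat u^*$ and of $B(x,x';\mu^N_s)$ over $(s,x,x')\in[0,T]\times U\times U$; combined with the elliptic reduction $\|\hat u^N-\hat u^*\|_{L^2}\le C\|g^N-g\|_{L^2}$, the second term is bounded by $C\int_0^t\mathbb{E}\|g^N_s-g_s\|_{L^2(U)}\,ds$.

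The main work is the third term, which requires showing $\mathbb{E}\sup_{x,x'\in U}|B(x,x';\mu^N_s)-B(x,x';\mu_0)|\to 0$ uniformly in $s\in[0,T]$. I handle this in two sub-steps. First, because $\alpha^N=\alpha N^{2\beta-1}$ and each individual gradient $\nabla_{\theta^i}J$ is $O(N^{-\beta})$ (direct inspection of (\ref{GradientObjIntro}) and the form of $\nabla_{\theta^i}f^N$ together with the uniform bound on $\hat u^N$), the drift in (\ref{Eq:thetaEvolution}) is $O(N^{\beta-1})$, so that $\max_i|\theta^i_t-\theta^i_0|=O(N^{\beta-1})\to 0$ uniformly on $[0,T]$. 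Since $B(x,x';\cdot)$ is an integral of a $C^2_b$ test function in $(c,w,\eta)$ with Lipschitz constants bounded uniformly in $(x,x')\in U\times U$ (using the $C^2_b$ assumption on $\sigma$, compact support of $c^i_0$, and boundedness of $U$), this yields $\sup_{x,x'}|B(x,x';\mu^N_s)-B(x,x';\mu^N_0)|\to 0$ in expectation. Second, $B(x,x';\mu^N_0)-B(x,x';\mu_0)$ is an empirical-measure fluctuation at initialization, handled by the standard law of large numbers for i.i.d.\ samples, combined with an equicontinuity argument in $(x,x')$ (or a chaining/covering-number bound on $U\times U$) to upgrade pointwise convergence to uniform convergence in expectation. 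Putting these two sub-steps together, the third term is bounded by $\varepsilon_N\to 0$.

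Collecting everything gives an integral inequality
\[
\mathbb{E}\|g^N_t-g_t\|_{L^2(U)}\;\le\;\varepsilon_N + C\int_0^t \mathbb{E}\|g^N_s-g_s\|_{L^2(U)}\,ds,\qquad t\in[0,T],
\]
with $\varepsilon_N\to 0$, and Gr\"onwall's inequality closes the argument and yields the third convergence in (\ref{Eq:SystemConvergence}); the first two follow from the elliptic estimates above. Finally, the convergence in probability $\langle f,\mu^N_t\rangle\to\langle f,\mu_0\rangle$ for $f\in C^2_b(\mathbb{R}^{d+2})$ is a byproduct of the same analysis: since $\max_i|\theta^i_t-\theta^i_0|=O(N^{\beta-1})$, one has $|\langle f,\mu^N_t\rangle-\langle f,\mu^N_0\rangle|=O(N^{\beta-1})$ deterministically, while $\langle f,\mu^N_0\rangle\to\langle f,\mu_0\rangle$ in probability by the classical law of large numbers. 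The main obstacle I expect is the uniform-in-$(x,x')$ control of $B(x,x';\mu^N_s)-B(x,x';\mu_0)$: the parameter-displacement estimate is straightforward, but upgrading the LLN at initialization to a uniform statement over the two-parameter family indexed by $U\times U$ requires either the equicontinuity provided by $\sigma\in C^2_b$ and $\la\|w\|+|\eta|,\mu_0\ra<\infty$, or an explicit covering argument.
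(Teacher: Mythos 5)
Your proposal follows essentially the same skeleton as the paper's proof: reduce the first two limits to the third via the elliptic estimates $\norm{u^N-u^{\ast}}_{H^1_0}\le C\norm{g^N-g}_{L^2}$ and $\norm{\hat u^N-\hat u^{\ast}}_{H^1_0}\le C\norm{u^N-u^{\ast}}_{L^2}$, represent $g^N_t$ through the kernel $B(\cdot,\cdot;\mu^N_s)$ as in (\ref{Eq:Rep_g0}), split the error into the initialization term $g^N_0$, a term linear in $\hat u^N-\hat u^{\ast}$, and a kernel-fluctuation term, and close with Gr\"onwall. (Your add-and-subtract is the mirror image of the paper's, which pairs $\hat u^N$ with $B(\mu^N_s)-B(\mu_0)$ and $\psi^N=\hat u^N-\hat u^{\ast}$ with $B(\mu_0)$, but the two decompositions are interchangeable given the uniform bounds on $\hat u^N$, $\hat u^{\ast}$ and on both kernels.) The one place where you genuinely diverge is the step you yourself flag as the main obstacle: you insist on controlling $\sup_{x,x'}|B(x,x';\mu^N_s)-B(x,x';\mu_0)|$, which forces a uniform law of large numbers over the two-parameter family indexed by $U\times U$ (a Glivenko--Cantelli/covering argument complicated by the fact that the Lipschitz constant in $(x,x')$ depends on the unbounded variable $w$). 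The paper avoids this entirely: since the kernel term is paired with an $L^2$ function, Cauchy--Schwarz only requires $\norm{B(\cdot,\cdot;\mu^N_s-\mu_0)}_{L^2(U\times U)}$, and $\mathbb{E}\norm{B(\cdot,\cdot;\mu^N_0-\mu_0)}^2_{L^2(U\times U)}=O(1/N)$ follows from the pointwise i.i.d.\ variance bound integrated over the bounded set $U\times U$ --- no uniformity in $(x,x')$ is needed. Switching to the $L^2(U\times U)$ norm would make your hardest step routine.

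One further inaccuracy to fix: you claim $\max_i|\theta^i_t-\theta^i_0|=O(N^{\beta-1})$ \emph{deterministically}, citing a ``uniform bound on $\hat u^N$.'' The only deterministic bound available from Lemma \ref{L:C_Bound} gives $\norm{\hat u^N(s)}_{L^2(U)}=O(N^{1-\beta})$ pathwise, so the pathwise displacement bound is only $O(1)$; the $O(N^{\beta-1})$ rate holds after taking expectations (as in (\ref{CboundNewMay})--(\ref{UniformWparticleBound}), using $\mathbb{E}\norm{\hat u^N(s)}_{L^2}\le C$ from Lemma \ref{L:UniformBoundedness}). Consequently the statement $|\la f,\mu^N_t\ra-\la f,\mu^N_0\ra|=O(N^{\beta-1})$ should also be an expectation bound, upgraded to convergence in probability via Markov's inequality, which is exactly how the paper concludes that part. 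These are repairable issues rather than fatal gaps, but as written the ``deterministic'' claims are not justified by the available estimates.
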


Let us return now to the objective function (\ref{ObjectiveFunctionN}). $J^{N}_{t}\rightarrow \bar{J}_{t}$ as $N\rightarrow\infty$ where
\begin{align}
\bar{J}_{t}&=\frac{1}{2} \norm{ \la u^{\ast}(t) - h, m \ra }_2^2,\nonumber
\end{align}

Differentiating this quantity yields
\begin{align}
\frac{\partial \bar{J}_{t}}{\partial t}&=\int_{U} \left(u^{\ast}(t,x) - h(x), m^{\top} \right) m(x) \frac{\partial u^{\ast}(t,x)}{\partial t}dx' .
\end{align}

Next notice that $\frac{\partial u^{\ast}(t,x)}{\partial t}$ is the solution to (\ref{Eq:LimitPDE1}) with $f(x,\theta)$ replaced by $\dot{g}_t(x)$. Due to equation (\ref{Eq:Limiting_g}), we also know that
\begin{align}
\dot{g}_t(x)
&=-\int_{U} \hat u^{\ast}(t,x') B(x,x';\mu_{0}) dx'.\nonumber
\end{align}

Therefore,
\begin{align}
\frac{\partial \bar{J}_{t}}{\partial t}&=\int_{U} \left(u^{\ast}(t,x) - h(x), m^{\top} \right) m(x) \frac{\partial u^{\ast}(t,x)}{\partial t}dx' \nonumber\\
&=\int_{U} A^{\dagger} \hat u^{\ast}(t,x) \frac{\partial u^{\ast}(t,x)}{\partial t}dx\nonumber\\
&=\int_{U}  \hat u^{\ast}(t,x) A\frac{\partial u^{\ast}(t,x)}{\partial t}dx\nonumber\\
&=\int_{U}  \hat u^{\ast}(t,x) \dot{g}_t(x) dx\nonumber\\
&=-\int_{U}  \hat u^{\ast}(t,x) \int_{U} \hat u^{\ast}(t,x') B(x,x';\mu_{0}) dx' dx\nonumber\\
&=-\int_{U} \int_{U}  \hat u^{\ast}(t,x) \hat u^{\ast}(t,x') B(x,x';\mu_{0}) dx' dx\nonumber\\
&=-\int_{\Theta} \left|\int_{U}   \hat u^{\ast}(t,x) \sigma(w x+\eta) dx\right|^{2} d\mu_{0}(dcdwd\eta)
-\int_{\Theta} \left|\int_{U}   \hat u^{\ast}(t,x)c \sigma'(w x+\eta)x dx\right|^{2} d\mu_{0}(dcdwd\eta)\nonumber\\
&\quad-\int_{\Theta} \left|\int_{U}   \hat u^{\ast}(t,x)c \sigma'(w x+\eta) dx\right|^{2} d\mu_{0}(dcdwd\eta).
&\leq 0\label{Eq:LyapunovFnct}
\end{align}

The calculation (\ref{Eq:LyapunovFnct}) shows that the objective function $\bar{J}_t$ is monotonically decreasing. However, it is not immediately clear
whether $u^{\ast}(t, \cdot)$ converges to a local minimum, saddle point, or global minimum. Using a spectral analysis, we are able to prove that $u^{\ast}(t,\cdot)$ will approach arbitrarily close to a global minimizer.

\begin{theorem} \label{T:GlobalConvergenceLimitPDE}
Let Assumptions \ref{A:Assumption0}, \ref{A:AssumptionTargetProfile} and \ref{A:Assumption1} hold. Then, for any $\epsilon > 0$ and $t > 0$, there exists a time
$\tau > t$ such that
\begin{eqnarray}
\bar{J}_{\tau} < \epsilon.\nonumber
\end{eqnarray}
\end{theorem}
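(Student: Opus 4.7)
The starting point is the Lyapunov identity (\ref{Eq:LyapunovFnct}), which gives $d\bar{J}_t/dt \le 0$. Since $\bar{J}_t \ge 0$, the monotone limit $\bar{J}_\infty := \lim_{t\to\infty}\bar{J}_t$ exists, so by monotonicity the conclusion of the theorem is equivalent to showing $\bar{J}_\infty = 0$.

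The strategy is to reduce the infinite-dimensional PDE dynamics to a finite-dimensional linear ODE on the moment vector $r(t) := (\langle u^{\ast}(t) - h, m_\ell\rangle)_{\ell=1}^{L}$. Because the source in (\ref{Eq:LimitPDE2}) is $\sum_\ell r_\ell(t) m_\ell$ and $A$ is self-adjoint, $\hat{u}^{\ast}(t) = \sum_\ell r_\ell(t)\psi_\ell$ with $\psi_\ell := A^{-1} m_\ell \in H_0^1(U)$. Differentiating $r_\ell$ and inserting $A\dot{u}^{\ast} = \dot{g}_t = -\int_U B(\cdot,x';\mu_0)\hat{u}^{\ast}(t,x')\,dx'$ (from (\ref{Eq:LimitPDE1}) and (\ref{Eq:Limiting_g})) yields
\[
\dot{r}(t) = -M r(t), \qquad M_{k,\ell} := \int_U\!\int_U \psi_k(x)\,B(x,x';\mu_0)\,\psi_\ell(x')\,dx'\,dx,
\]
where $M \in \mathbb{R}^{L\times L}$ is symmetric and positive semi-definite because $B(\cdot,\cdot;\mu_0)$ is a symmetric positive semi-definite kernel. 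Consequently $\bar{J}_t = \tfrac{1}{2}\|e^{-Mt}r(0)\|_2^{2}$, so the theorem reduces to proving that $M$ is strictly positive definite.

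To show $\lambda_{\min}(M) > 0$, suppose $\xi \in \mathbb{R}^L$ satisfies $\xi^{\top} M\xi = 0$. Applying the same manipulation that produced (\ref{Eq:LyapunovFnct}) with $\hat{u}^{\ast}$ replaced by $\tilde{u} := \sum_\ell \xi_\ell \psi_\ell$ forces $\tilde{u}$ to satisfy, for $\mu_0$-almost every $(c,w,\eta)$, the three orthogonality conditions
\[
\int_U \tilde{u}(x)\sigma(wx+\eta)\,dx = 0,\quad \int_U \tilde{u}(x)\,c\sigma'(wx+\eta)\,x\,dx = 0,\quad \int_U \tilde{u}(x)\,c\sigma'(wx+\eta)\,dx = 0.
\]
A universal-approximation-type density argument, using that $\sigma \in C^2_b$ is non-constant together with the richness of $\mathrm{supp}(\mu_0)$, should then force $\tilde{u} \equiv 0$ on $U$; since $\{m_\ell\}$ are orthonormal and $A$ is invertible, $\{\psi_\ell\}$ are linearly independent, so $\xi = 0$ and $M$ is positive definite.

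The main obstacle is this final density step. The introduction's remark that the non-local operator with kernel $B(\cdot,\cdot;\mu_0)$ lacks a spectral gap warns that positive definiteness of $M$ cannot be deduced from a crude spectral bound on the underlying infinite-dimensional operator; instead one must exploit the finite-dimensional nature of $\mathrm{span}\{\psi_\ell\}$ and the fact that the three components of $B$ (the $\sigma$, $c\sigma'x$, and $c\sigma'$ pieces) together furnish enough features to separate any nonzero element of this subspace. Once $M$ is positive definite, the theorem follows immediately from the linear ODE $\dot{r} = -Mr$.
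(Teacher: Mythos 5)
Your argument is correct, but it takes a genuinely different and in fact sharper route than the paper's. The paper works in the infinite-dimensional eigenbasis of the compact operator $T_B$: it shows every eigenvalue $\lambda_k$ of $T_B$ is strictly positive (via the discriminatory property of $\sigma$), but since the $\lambda_k$ accumulate at zero there is no spectral gap, so it only extracts $\int_0^\infty c_k(s)^2\,ds<\infty$ mode by mode, deduces that finitely many modes can be made simultaneously small at \emph{some} time $\tau>t$, and then tests $\hat u^{\ast}(\tau)$ against $m_\ell$ to conclude $\bar J_\tau<\epsilon$ --- a subsequential statement with no rate. You instead observe that $\hat u^{\ast}(t)$ lies for all $t$ in the fixed $L$-dimensional subspace $\mathrm{span}\{\psi_\ell\}$ with $\psi_\ell=(A^{\dagger})^{-1}m_\ell$, so the moment vector $r(t)$ satisfies the exact finite-dimensional ODE $\dot r=-Mr$; your derivation of $M$ checks out (one needs $\langle A^{-1}T_B\hat u^{\ast},m_\ell\rangle=\langle T_B\hat u^{\ast},\psi_\ell\rangle$, which uses the self-adjointness of $A$ from Assumption \ref{A:Assumption0}). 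Strict positive definiteness of $M$ then gives $\bar J_t\le e^{-2\lambda_{\min}(M)t}\bar J_0$, i.e.\ monotone convergence to zero with an exponential rate rather than convergence along a subsequence --- the ``no spectral gap'' obstruction is irrelevant on a finite-dimensional subspace. The step you leave tentative ($\xi^{\top}M\xi=0\Rightarrow\tilde u=0$) is exactly the paper's own argument for positivity of the eigenvalues of $T_B$: only the first orthogonality condition is needed, and $\int_U\tilde u(x)\sigma(wx+\eta)\,dx=0$ for $\mu_0$-a.e.\ $(w,\eta)$ upgrades by continuity to all $(w,\eta)$ and forces $\tilde u=0$ by Cybenko/Hornik, after which linear independence of $\{\psi_\ell\}$ gives $\xi=0$; note that both your proof and the paper's implicitly require the $(w,\eta)$-marginal of $\mu_0$ to have full support, which is asserted in the paper's proof but not literally contained in Assumption \ref{A:Assumption1}. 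The one thing the paper's heavier machinery buys is that it transfers to the strong formulation of Section \ref{S:Extensions}, where the adjoint source is $u^{\ast}-h$ and no finite-dimensional reduction is available.
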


Furthermore, we prove in Theorem \ref{T:PrelimitGlobalMinimumConvTheorem} that the original pre-limit objective function $J^N_t$ converges to a global minimum. Similar to the analysis in equation (\ref{Eq:LyapunovFnct}), one can also show that, for every $N\in\mathbb{N}$, $J^{N}_{t}$ is non-increasing in $t$. Using (\ref{Eq:Rep_g0}) and a similar calculation as in (\ref{Eq:LyapunovFnct}),
 \begin{align}
\frac{\partial J^{N}_{t}}{\partial t}
&=-\int \left|\int_{U}   \hat u^{N}(t,x) \sigma(w x+\eta) dx\right|^{2} d\mu^{N}_{t}(dcdwd\eta)
-\int \left|\int_{U}   \hat u^{N}(t,x)c \sigma'(w x+\eta)x dx\right|^{2} d\mu^{N}_{t}(dcdwd\eta)\nonumber\\
&\quad -\int \left|\int_{U}   \hat u^{N}(t,x)c \sigma'(w x+\eta) dx\right|^{2} d\mu^{N}_{t}(dcdwd\eta)\nonumber\\
&\leq 0\label{Eq:LyapunovFnct2}
\end{align}

Calculation (\ref{Eq:LyapunovFnct2}) implies that $J^{N}_{t}$ acts as a Lyapunov function for the pre-limit system and we have the following convergence result.

\begin{theorem}\label{T:PrelimitGlobalMinimumConvTheorem}
Let Assumptions \ref{A:Assumption0}, \ref{A:AssumptionTargetProfile} and \ref{A:Assumption1} hold. For any $\epsilon > 0$, there exists a time $\tau > 0$ and an $N_0\in\mathbb{N}$ such that
\begin{align}
 \sup_{t\geq \tau, N \geq N_0} \mathbb{E} J_t^N &\leq \epsilon.\nonumber
\end{align}
\end{theorem}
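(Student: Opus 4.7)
The plan is to chain three ingredients already in place: the global convergence of the limit objective $\bar{J}_t$ from Theorem \ref{T:GlobalConvergenceLimitPDE}, the convergence $u^N \to u^{\ast}$ from Theorem \ref{T:ConvergenceNTheorem}, and the pathwise monotonicity $\partial_t J^N_t \leq 0$ established in (\ref{Eq:LyapunovFnct2}). Fix $\epsilon > 0$. First, by Theorem \ref{T:GlobalConvergenceLimitPDE}, I obtain a deterministic time $\tau > 0$ with $\bar{J}_{\tau} < \epsilon/2$. The goal then reduces to establishing $\mathbb{E} J^N_{\tau} \leq \epsilon$ for all $N$ sufficiently large, since the pathwise monotonicity will propagate this bound to all $t \geq \tau$.

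The central step is to upgrade the $H^{1}_0$-convergence $u^N(\tau) \to u^{\ast}(\tau)$ into convergence of the expected quadratic objective at time $\tau$. I would linearize the squared difference via
\begin{align*}
J^N_\tau - \bar{J}_\tau = \frac{1}{2}\la u^N(\tau) + u^{\ast}(\tau) - 2h,\, m \ra \cdot \la u^N(\tau) - u^{\ast}(\tau),\, m \ra,
\end{align*}
where the product denotes the Euclidean dot product of two vectors in $\mathbb{R}^{L}$. Cauchy--Schwarz applied componentwise, together with the orthonormality of the $m_\ell$, yields
\begin{align*}
|J^N_\tau - \bar{J}_\tau| \leq \frac{L}{2} \norm{u^N(\tau) + u^{\ast}(\tau) - 2h}_{L^{2}(U)} \cdot \norm{u^N(\tau) - u^{\ast}(\tau)}_{L^{2}(U)}.
\end{align*}
Taking expectations and combining the uniform-in-$N$ a priori $L^{2}$ moment bounds furnished by Section~\ref{S:aprioriBounds} (to control the first factor) with the vanishing of $\mathbb{E}\norm{u^N(\tau) - u^{\ast}(\tau)}_{L^{2}(U)}$ supplied by Theorem \ref{T:ConvergenceNTheorem} (using $H^{1}_0(U) \hookrightarrow L^{2}(U)$ for the second), I obtain $\mathbb{E}|J^N_\tau - \bar{J}_\tau| \to 0$ as $N\to\infty$. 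Hence there exists $N_0$ with $\mathbb{E} J^N_\tau \leq \bar{J}_\tau + \epsilon/2 < \epsilon$ for every $N \geq N_0$.

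Finally, I would invoke the pathwise monotonicity in (\ref{Eq:LyapunovFnct2}), which holds for every realization of the random initialization and every $N$, to conclude that $J^N_t \leq J^N_{\tau}$ almost surely whenever $t \geq \tau$. Taking expectations then delivers $\mathbb{E} J^N_t \leq \mathbb{E} J^N_{\tau} \leq \epsilon$ uniformly over $t \geq \tau$ and $N \geq N_0$, which is precisely the claim. The main obstacle I anticipate is the central step: since Theorem \ref{T:ConvergenceNTheorem} supplies only first-moment convergence $\mathbb{E}\norm{u^N(t)-u^{\ast}(t)}_{H^{1}_0(U)}\to 0$, a direct attack on the squared objective would require a stronger mode of convergence. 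The linearization above converts the quadratic difference into the product of an a priori bounded factor and a first-order difference, making the available first-moment bound sufficient; securing control of the first factor relies precisely on the a priori estimates of Section~\ref{S:aprioriBounds}.
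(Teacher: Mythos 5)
Your overall skeleton --- choose $\tau$ from the limit result, transfer to the pre-limit system at the single time $\tau$, then propagate forward using the pathwise monotonicity from (\ref{Eq:LyapunovFnct2}) --- is the same as the paper's, and both the monotonicity step and the difference-of-squares identity are fine. The gap is in the central transfer step. After the linearization you must control $\mathbb{E}[XY]$ with $X=\norm{u^N(\tau)+u^{\ast}(\tau)-2h}_{L^2(U)}$ and $Y=\norm{u^N(\tau)-u^{\ast}(\tau)}_{L^2(U)}$. What is actually available is $\mathbb{E}[X^2]\leq C$ uniformly in $N$ (Lemma \ref{L:UniformBoundedness}) and $\mathbb{E}[Y]\rightarrow 0$ (Theorem \ref{T:ConvergenceNTheorem}); these two facts do \emph{not} imply $\mathbb{E}[XY]\rightarrow 0$. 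The factor $X$ is not almost surely bounded uniformly in $N$: the pathwise estimates of Section \ref{S:aprioriBounds} only give $\norm{g^N_\tau}_{L^2(U)}\leq CN^{1-\beta}$ a.s., and the uniform-in-$N$ control of $u^N$ is purely a second-moment statement, so the ``bounded factor times vanishing first moment'' mechanism you invoke is not available. Cauchy--Schwarz would require $\mathbb{E}[Y^2]\rightarrow 0$, i.e.\ second-moment convergence of $u^N\rightarrow u^{\ast}$, which Theorem \ref{T:ConvergenceNTheorem} does not supply; and truncating $X$ at a level $M$ leaves a remainder of the form $\mathbb{E}[XY\mathbf{1}_{X>M}]$ which, since $Y\leq X+c$, is essentially $\mathbb{E}[X^2\mathbf{1}_{X>M}]$ and cannot be made small uniformly in $N$ from $\mathbb{E}[X^2]\leq C$ alone (no uniform integrability is established). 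Your closing remark that the linearization makes the first-moment bound sufficient is therefore not correct as stated.

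The paper circumvents exactly this difficulty by working with $Q^N(t)=\big(\la u^N(t)-h,m\ra^{\top}\la u^N(t)-h,m\ra\big)^{1/2}$ rather than with the quadratic objective directly: the triangle inequality gives $Q^N(\tau)\leq (m^{\top},m)^{1/2}\norm{u^N(\tau)-u^{\ast}(\tau)}+Q(\tau)$, Markov's inequality converts the first-moment convergence of Theorem \ref{T:ConvergenceNTheorem} into $\mathbb{P}[Q^N(\tau)<\epsilon/2]>1-\kappa$, monotonicity propagates this event to all $t\geq\tau$, and on the complementary event of probability at most $\kappa$ one applies Cauchy--Schwarz against the uniform second-moment bound $\mathbb{E}[(Q^N(0))^2]\leq C_J$, with the choice $\kappa=\epsilon^2/(4C_J)$ closing the estimate. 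To repair your argument you would need either to adopt this probability/truncation device or to first upgrade Theorem \ref{T:ConvergenceNTheorem} to convergence of the second moment $\mathbb{E}\norm{u^N(t)-u^{\ast}(t)}^2_{H^{1}_{0}(U)}$, neither of which is done in your proposal.
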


The proofs of Theorems \ref{T:GlobalConvergenceLimitPDE} and \ref{T:PrelimitGlobalMinimumConvTheorem} are in Sections \ref{S:PreliminaryCalc} and \ref{GlobalConvergencePrelimit}.

\begin{remark}
In our theoretical analysis we have chosen the normalization of the neural network to be $1/N^{\beta}$ with $\beta\in(1/2,1)$. We believe that similar results will also hold for the boundary cases $\beta=1/2$ and $\beta=1$, even though, the proofs presented below for the case $\beta\in(1/2,1)$ suggest that if $\beta=1/2$ or $\beta=1$ additional terms survive in the limit that require extra analysis. In this first work on this topic we decided to present the proofs for $\beta\in(1/2,1)$ and to leave the consideration of the boundary cases for future work.
\end{remark}

\section{Numerical studies}\label{Numerical}
\subsection{Numerical verification of convergence theory} \label{NumericalVerification}

In this section, we verify numerically the convergence of the neural network PDE model for the exact neural network architecture and training method that is studied in our mathematical theory. As a simple example of the class of PDEs (\ref{EllipticPDEIntro}) that we studied in the mathematically theory, we consider the heat equation $\mu u_{xx} - u = 0$ in $d = 1$ and $U = [0,1]$. The objective function (\ref{ObjectivePDEIntro}) is constructed for a target function $h(x) = (1 -x ) x^5 + x(1 - x)^5$ and the orthonormal basis $\cos( \ell \pi x)$ with $\ell = 0, \ldots, 9$. The diffusion constant $\mu = \frac{1}{10}$. We use exactly the neural network architecture (\ref{Eq:NeuralNetwork}) for which our mathematical theory is proven with a choice of $\beta = \frac{2}{3}$ for the normalization factor. The parameters are trained with standard gradient descent (again as in our theoretical setting). The gradients are evaluated by solving the adjoint equation (\ref{AdjointPDEIntro}). An implicit solver is used for fast solutions of the forward neural network-equation (\ref{EllipticPDEIntro}) and its corresponding adjoint equation (\ref{AdjointPDEIntro}).

Results are presented below for different numbers of hidden units $N$. For each value of $N$, the neural network is randomly initialized with (A) i.i.d.\ Gaussian random variables for the $(w^i, b^i)$ parameters and (B) i.i.d.\ uniform random variables for the $c^i$ parameters. This random initialization satisfies our Assumption \ref{A:Assumption1}. For each value of $N$, the solution $u(x)$ to the heat equation (\ref{Eq:NeuralNetwork}) with the trained neural network $f(x; \theta)$ is plotted in Figure \ref{fig:1DNumericalExample}. As the number of hidden units $N$ increases, the neural network PDE solution converges to the target function $h(x)$. We also evaluate the objective function (\ref{ObjectivePDEIntro}) for the trained neural network PDE as the number of hidden units is increased. Table (\ref{1D:ObjectiveFunction}) reports the value of the objective function as the number of hidden units $N$ is increased; it quickly becomes very small for larger numbers of hidden units.

\begin{figure}
  \centering
  \includegraphics[width=0.5\textwidth]{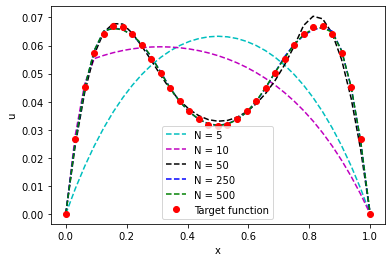}
  \caption{Solutions to (\ref{EllipticPDEIntro}) in one spatial dimension as the number of hidden units $N$ increases. The target function $h(x)$ is also displayed for comparison.}
  \label{fig:1DNumericalExample}
\end{figure}

\begin{table}
  \centering
  \begin{tabular}{c c }
    \toprule
    N & Objective Function \\
    \midrule
    5  &  $1.31 \times 10^{-4}$  \\
    10 &  $9.49 \times 10^{-5}$   \\
    50  &  $6.23 \times 10^{-7}$   \\
    250  &  $2.49 \times 10^{-10}$   \\
    500  & $3.94 \times 10^{-13}$     \\
    \bottomrule
  \end{tabular}
  \caption{Value of the objective function (\ref{ObjectivePDEIntro}) for the trained neural network PDE solution, in one spatial dimension, as the number of hidden units $N$ is increased.}
  \label{1D:ObjectiveFunction}
\end{table}

In addition, we also study the numerical convergence of the neural network PDE model for an example of (\ref{EllipticPDEIntro}) in two dimensions. Specifically, we consider the heat equation $\mu (u_{xx}  + u_{yy})- u = 0$ and $U = [0,1] \times [0,1]$
with $\mu = \frac{1}{10}$. The objective function (\ref{ObjectivePDEIntro}) is constructed for a target function $h(x,y) = (x - x^2) (y - y^2) $ and the orthonormal basis $\cos( \ell_x \pi x) \times \cos( \ell_y \pi y)$ with $\ell_x, \ell_y = 0, \ldots, 9$. The same
neural network architecture, normalization factor, gradient descent algorithm, and parameter initialization are used as in the previous one-dimensional example. A 2D implicit solver is used for fast solutions of the forward neural network-equation (\ref{EllipticPDEIntro}) and its corresponding adjoint equation (\ref{AdjointPDEIntro}). The neural network and training algorithm match the framework studied in our mathematical theory. Results are presented below for different numbers of hidden units $N$. For each value of $N$, the neural network is randomly initialized, trained with gradient descent, and then the solution $u(x)$ to the heat equation (\ref{Eq:NeuralNetwork}) with the trained neural network $f(x; \theta)$ is plotted in Figure \ref{fig:2DNumericalExample}. As the number of hidden units $N$ increases, the neural network PDE solution converges towards the target function $h(x)$. We also evaluate the objective function (\ref{ObjectivePDEIntro}) for the trained neural network PDE as the number of hidden units is increased. Table (\ref{2D:ObjectiveFunction}) reports the value of the objective function as the number of hidden units $N$ is increased; it quickly becomes very small for larger numbers of hidden units.

\begin{figure}
  \centering
  \includegraphics[width=0.32\textwidth]{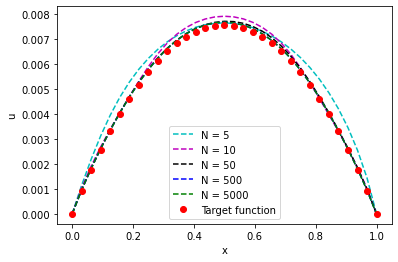}
  \includegraphics[width=0.32\textwidth]{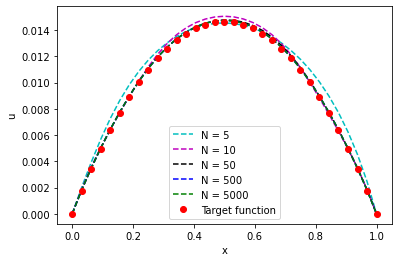}
  \includegraphics[width=0.32\textwidth]{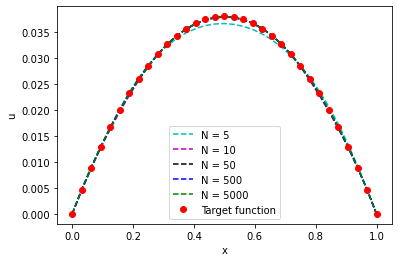}
  \includegraphics[width=0.32\textwidth]{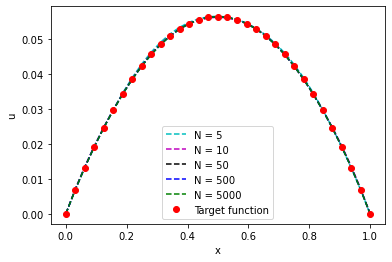}
  \includegraphics[width=0.32\textwidth]{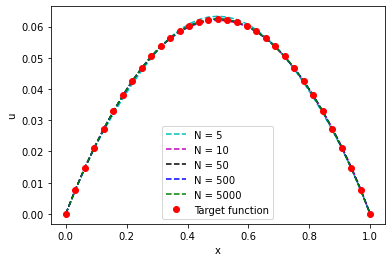}
  \includegraphics[width=0.32\textwidth]{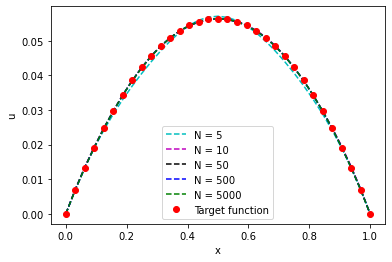}
  \includegraphics[width=0.32\textwidth]{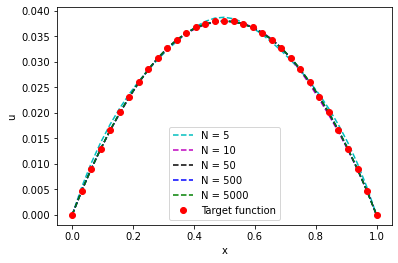}
  \includegraphics[width=0.32\textwidth]{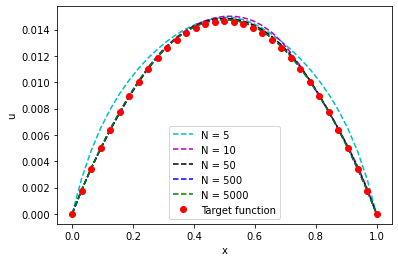}
  \includegraphics[width=0.32\textwidth]{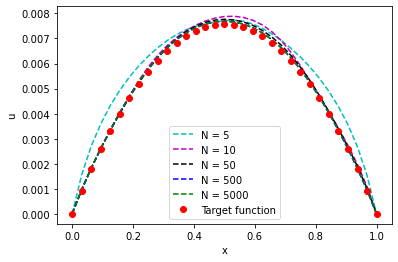}
  \caption{Solutions to (\ref{EllipticPDEIntro}) in two spatial dimensions as the number of hidden units $N$ increases. The target function $h(x)$ is also displayed for comparison. Solutions $u^N(x,y)$ are plotted for different values of $y$. In the top row: $y = 0.03125,  0.0625,$ and $0.1875$. In the middle row: $y = 0.34375, 0.5,$ and $0.65625$. In the bottom row: $y = 0.8125, 0.9375,$ and $0.96875$.}
  \label{fig:2DNumericalExample}
\end{figure}

\begin{table}
  \centering
  \begin{tabular}{c c }
    \toprule
    N & Objective Function \\
    \midrule
    5  &  $7.03 \times 10^{-8}$  \\
    10 &  $6.25 \times 10^{-9}$   \\
    50  &  $1.25 \times 10^{-9}$   \\
    500  &  $4.75 \times 10^{-10}$   \\
    5000  & $2.60 \times 10^{-10}$     \\
    \bottomrule
  \end{tabular}
  \caption{Value of the objective function (\ref{ObjectivePDEIntro}) for the trained neural network PDE solution, in two spatial dimensions, as the number of hidden units $N$ is increased.}
  \label{2D:ObjectiveFunction}
\end{table}

\subsection{Example Application: RANS Turbulence Closures} \label{Numerical2}

In this section, we demonstrate that the approach can also be applied in practice to nonlinear equations. We use a neural network for closure of the RANS equations, which leads to a nonlinear differential equation model with neural network terms. The neural network RANS model is then trained, using the adjoint equations, to match target data generated from the fully resolved physics.

The incompressible Navier--Stokes equations
\begin{align}
  \pp{u_i}{t} + u_j\pp{u_i}{x_j} &= -\frac{1}{\rho}\pp{p}{x_i} + \pp{}{x_j}\left[\nu\left(\pp{u_i}{x_j}+\pp{u_j}{x_i}\right)\right]
    \label{e:DNSmomentum} \\
    \pp{u_j}{x_j} &= 0 \label{e:DNSmass}
\end{align}
govern the evolution of a three-dimensional, unsteady velocity field $u_i(x_j,t)=(u,v,w)$,  where $x_j=(x,y,z)$ are the Cartesian coordinates. Repeated indices imply summation. The mass density $\rho$ and kinematic viscosity $\nu$ are constant, and the hydrodynamic pressure $p(x_j,t)$ is obtained from a Poisson equation~\cite{Chorin1968} that discretely enforces the divergence-free constraint~\eqref{e:DNSmass}.

Direct Numerical Simulation (DNS) solves the Navier--Stokes equations with sufficient resolution such that the turbulence may be considered fully resolved. Doing so is prohibitively expensive for high Reynolds numbers or complicated geometries. The RANS approach averages along statistically homogeneous directions (potentially including time) and so reduces the dimensionality of \eqref{e:DNSmomentum} and \eqref{e:DNSmass} with concomitant cost reduction. However, the RANS equations contain unclosed terms that require modeling.

\subsubsection{Turbulent Channel Flow}  \label{ChannelFlow}

An application to turbulent channel flow is considered here. The flow is statistically homogeneous in the streamwise direction ($x$), the spanwise direction ($z$), and time ($t$); hence the mean flow varies only in $y$. The RANS-averaged velocity components are
\begin{equation}
  \olu_i(y) = \frac{1}{L_xL_z(t_f-t_0)}\int_{t_0}^{t_f}\int_{0}^{L_z}\int_0^{L_x} u_i(x,y,z;t)\, \d x\, \d z\, \d t,
    \label{e:RANS}
\end{equation}
where $L_x$ and $L_z$ are the extents of the computational domain in the $x$- and $z$-directions, respectively, and $t_f-t_0$ is the time interval over which averaging is performed. The cross-stream and spanwise mean velocity components are identically zero: $\ol{v}=\ol{w}=0$. A schematic of the flow is shown in Figure~\ref{fig:channel}.
\begin{figure}
  \centering
  \includegraphics[width=0.75\textwidth]{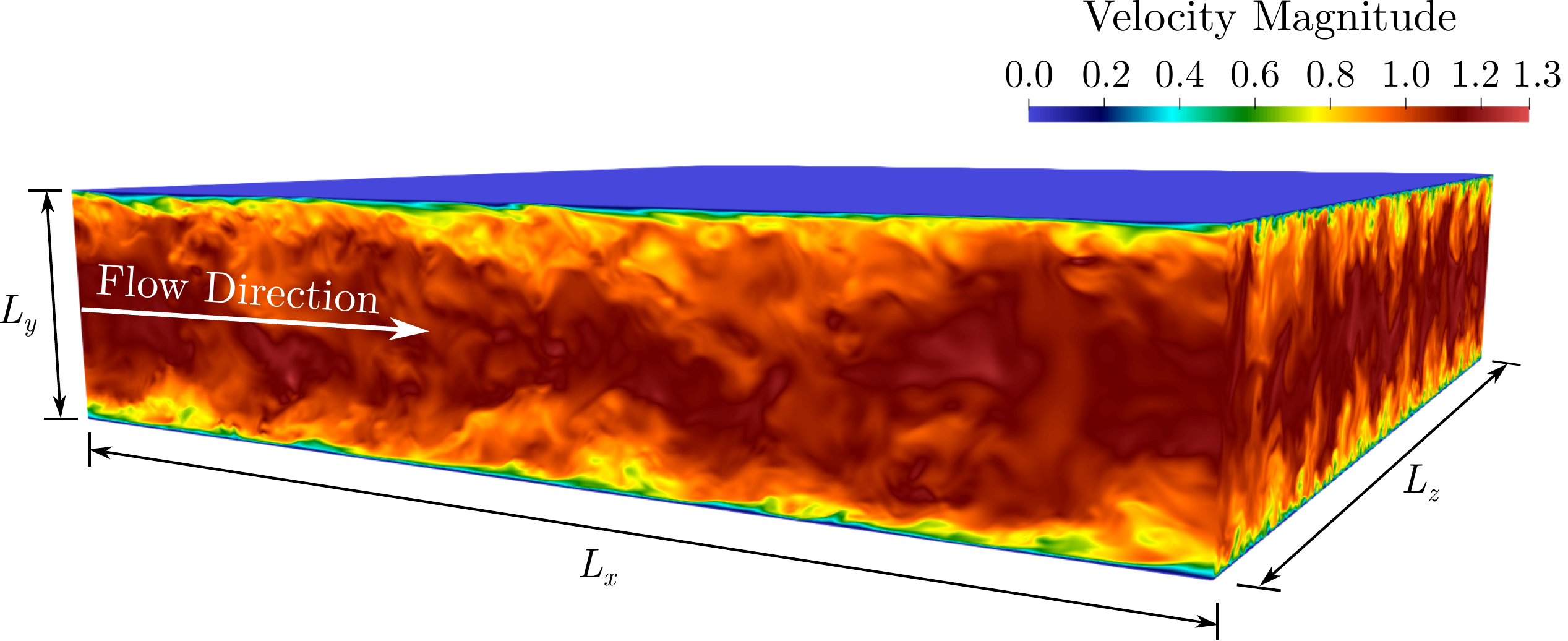}
  \caption{Schematic of the turbulent channel-flow configuration showing the domain extents and flow direction. Data is from instantaneous DNS of fully developed flow at $\Rey=18\times10^3$ (Case D).}
  \label{fig:channel}
\end{figure}

The three-dimensional, unsteady Navier--Stokes equations \eqref{e:DNSmomentum} and \eqref{e:DNSmass} are solved using DNS to provide high-resolution training and testing data. For turbulent channel flow, the characteristic length scale is the channel half-height $\delta=L_y/2$, and three characteristic velocity scales may be defined:
\begin{align*}
  \ol{U} &\equiv \frac{1}{\delta}\int_0^\delta\olu(y)\,\d y \qquad \text{(bulk velocity)}, \\
  U_0 &\equiv \olu(y=0) \qquad\quad \text{(centerline velocity)}, \\
  u_\tau &\equiv \sqrt{\tau_w/\rho} \qquad\quad\quad \text{(friction velocity)},\nonumber
\end{align*}
where $\tau(y)=\rho\nu(\d\olu/\d y)$ is the mean shear stress and $\tau_w\equiv\tau(y=-\delta)=-\tau(y=\delta)$ is the wall shear stress. These define bulk, centerline, and friction Reynolds numbers
\begin{align*}
  \Rey &\equiv \frac{2\delta\ol{U}}{\nu}, &
  \Rey_0 &\equiv \frac{\delta U_0}{\nu}, & &\text{and} &
  \Rey_\tau &\equiv \frac{u_\tau\delta}{\nu} = \frac{\delta}{\delta_v},\nonumber
\end{align*}
respectively, where $\delta_v=\nu/u_\tau$ is a viscous length scale. Flow in the near-wall boundary layer is typically presented in terms of ``wall units'' $y^+=y/\delta_v$ and the corresponding normalized velocity $u^+=\olu/u_\tau$.

\subsubsection{DNS Datasets} \label{DNS}

DNS datasets at $\Rey=\{6,9,12,18,24\}\times10^3$ were generated for training and testing. Table~\ref{tab:cases} lists the bulk, centerline, and friction Reynolds numbers for each, as well as the computed friction coefficient
\begin{equation}
  C_f\equiv \frac{\tau_w}{\frac{1}{2}\rho\ol{U}^2}.\nonumber
\end{equation}
The friction coefficient for the $\Rey_\tau=190$ case is in good agreement with that of the $\Rey_\tau=180$ DNS of Kim, Moin, and Moser~\cite{Kim1987}. Table~\ref{tab:cases} also lists the status of each dataset as it was used for training or testing. The near-wall streamwise mean velocity, in wall units, is shown for each case in Figure~\ref{fig:wall}, showing good agreement with the theoretical log-law, particularly for the higher-Reynolds-number cases.

\begin{table}
  \centering
  \begin{tabular}{c c c c c c c}
    \toprule
    Case & $\Rey$ & $\Rey_0$ & $\Rey_\tau$ & $C_f$ & Train & Test \\
    \midrule
    A   &  6,001 &  3,380 & 190.0 & $8.02\times10^{-3}$ & $\bullet$ & \\
    B   &  9,000 &  5,232 & 270.3 & $7.22\times10^{-3}$ &  & $\bullet$ \\
    C   & 12,000 &  6,800 & 345.3 & $6.62\times10^{-3}$ & $\bullet$ & \\
    D   & 18,000 & 10,191   & 495.0 & $6.05\times10^{-3}$ & $\bullet$  &  \\
    E   & 24,000 & 13,440   & 638.2 & $5.66\times10^{-3}$ & & $\bullet$ \\
    \bottomrule
  \end{tabular}
  \caption{Reynolds numbers, friction coefficients, and train/test status for channel-flow DNS cases.}
  \label{tab:cases}
\end{table}
\begin{figure}
  \centering
  \includegraphics[width=0.65\textwidth]{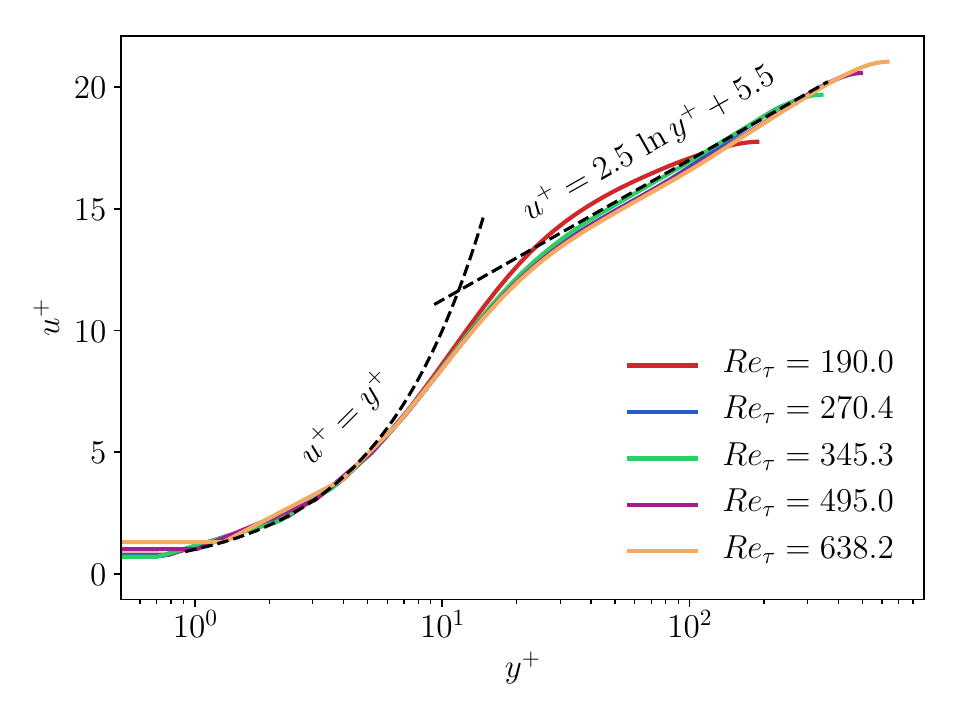}
  \caption{Mean velocity profiles in viscous units for DNS datasets listed in Table~\ref{tab:cases}. The inner and log layers are indicated by dashed lines for reference.}
  \label{fig:wall}
\end{figure}

The DNS solves the Navier--Stokes equations using second-order centered differences on a staggered mesh~\cite{Harlow1965}. Time integration is by a fractional-step method~\cite{Kim1985} with a linearized trapezoid method in an alternating-direction implicit (ADI) framework for the viscous terms~\cite{Desjardins2008}. A fixed time-step size is used, with resulting CFL numbers about 0.35.

Uniform mesh spacings $\Delta x=L_x/n_x$ and $\Delta z=L_z/n_z$ are used in the $x$- and $z$-directions, respectively, where $n_x$, $n_y$, and $n_z$ are the number of mesh cells in each direction. The mesh is stretched in the $y$-direction, with discrete cell-face points given by a hyperbolic tangent function
\begin{equation*}
  y_j = \delta\frac{\tanh\left[s_y\left(\frac{2j}{n_y}-1\right)\right]}{\tanh s_y}, \qquad j=0,1,\dotsc,n_y,
\end{equation*}
where $s_y=1.05$ is a stretching parameter. Grid resolutions, listed in Table~\ref{tab:grids}, are sufficient to place the first cell midpoint $0.71\leq\Delta y_w^+\leq1.30$ from the wall with maximum centerline grid spacing $3.61\leq\Delta y_0^+\leq6.67$. Depending on the case, the $x$- and $z$-directions are discretized with resolutions between $\Delta x^+=\Delta x/\delta_v=2.64$ and $\Delta z^+=\Delta z/\delta_v=5.39$.
\begin{table}
  \centering
  \begin{tabular}{c c c c c c c}
    \toprule
    Case & $(L_x,L_y,L_z)/\delta$ & $n_x\times n_y\times n_z$ & $\Delta y^+_w$ & $\Delta y^+_0$ & $\Delta x^+$ & $\Delta z^+$ \\
    \midrule
    A   &  $29\times2\times24$ & $1024\times128\times768$ & 0.78 & 3.99  & 5.38 & 5.94 \\
    B   &  $10\times2\times10$ & $1024\times192\times768$ & 0.74 & 3.78  & 2.64 & 3.52 \\
    C   &  $20\times2\times20$ & $1280\times256\times1280$ & 0.71 & 3.61 & 5.39 & 5.39 \\
    D   &  $10\times2\times10$ & $1280\times256\times1280$ & 1.02 & 5.19 & 3.87 & 3.87 \\
    E   &  $10\times2\times10$ & $1280\times256\times1280$ & 1.30 & 6.67 & 4.99 & 4.99 \\
    \bottomrule
  \end{tabular}
  \caption{Domain sizes, grid sizes, and resolutions in wall units for turbulent channel-flow DNS cases.}
  \label{tab:grids}
\end{table}

\subsubsection{Turbulent Channel-flow RANS}

The momentum equation solved by RANS is obtained by applying \eqref{e:RANS} to \eqref{e:DNSmomentum} and letting $t_f \rightarrow \infty$. For turbulent channel flow, it reduces to a steady, one-dimensional equation for the streamwise mean velocity component $\olu(y)$,
\begin{equation}
  0 = -\frac{1}{\rho}\dd{\ol{p}}{x} + \dd{}{y}\left(\nu\dd{\olu}{y} -
  \ol{u'v'}\right),
  \label{eq:RANS_channel}
\end{equation}
which is essentially the heat equation with a streamwise pressure-gradient forcing term $\d \ol{p}/\d x=-\tau_w/\delta$, which is adjusted to maintain constant $\ol{U} = 1$. Note that $\d\ol{p}/\d x$ is uniform in $y$. The term $\ol{u'v'}$ (the Reynolds stress) represents the action of the velocity fluctuations $u_i'=u_i-\olu_i$ on the mean flow and is unclosed. A standard closure model used for comparison is summarized in Sec.~\ref{sec:ke}, and the neural network model to be tested is described in Sec.~\ref{RANSneuralNetwork}.

\subsubsection{The $k$-$\epsilon$ Model} \label{sec:ke}

The eddy-viscosity (Boussinesq) hypothesis enables one to relate the Reynolds stress to the mean velocity gradients and so close \eqref{eq:RANS_channel}. For turbulent channel flow, the eddy-viscosity hypothesis is stated
\begin{equation}
  \ol{u'v'} = -\nu_t\dd{\olu}{y},
  \label{eq:eddyvisc}
\end{equation}
where $\nu_t$ is the eddy viscosity to be computed. Applying \eqref{eq:eddyvisc} to \eqref{eq:RANS_channel}, the RANS equation is
\begin{equation}
  0 = -\frac{1}{\rho}\pp{\ol{p}}{x} + \dd{}{y}\left[(\nu + \nu_t)\dd{\olu}{y}\right],
  \label{eq:RANS_channel_ev}
\end{equation}
which is closed.

The eddy viscosity may be obtained in many ways. It is likely that the most common approach for engineering applications is the standard $k$-$\epsilon$ model~\cite{Jones1972,Launder1974},  which is a ``two-equation'' model in that it requires the solution of two additional PDEs. It computes $\nu_t$ from
\begin{equation}
  \nu_t = C_\mu\frac{k^2}{\epsilon},
  \label{eq:nut}
\end{equation}
where  $k\equiv\frac{1}{2}\ol{u_i'u_i'}$ is the turbulent kinetic energy (TKE), $\epsilon$ is the TKE dissipation rate, and $C_\mu$ is a model parameter. The TKE and its dissipation rate may be computed from DNS and experimental data but are unclosed in RANS. Instead, the $k$-$\epsilon$ model obtains them from PDEs with modeled terms, which for channel flow reduce to
\begin{align}
  0 &= \dd{}{y}\left(\frac{\nu+\nu_t}{\sigma_k}\dd{k}{y}\right) +
  \nu_t \left(\dd{\olu}{y}\right)^2 - \epsilon, \notag \\
    0 &= \dd{}{y}\left(\frac{\nu+\nu_t}{\sigma_\epsilon}\dd{\epsilon}{y}\right) + C_{1\epsilon} C_{\mu}  k \left(\dd{\olu}{y}\right)^2 - C_{2\epsilon}\frac{\epsilon^2}{k}.
   \label{EpsilonandKequations}
\end{align}
Typical values for the constants are~\cite{Launder1974}:
\begin{align*}
  C_\mu &= 0.09, & \sigma_k&=1.00, & \sigma_\epsilon&=1.30, & C_{1\epsilon}&=1.44, \textrm{\phantom{....} and} & C_{2\epsilon}&=1.92.
\end{align*}

\subsubsection{Neural Network Closure} \label{RANSneuralNetwork}

Our neural network RANS model augments the eddy-viscosity model \eqref{eq:eddyvisc} by
\begin{equation}
  \ol{u'v'} = -\nu_t\dd{\olu}{y} + f_1\left(y, k, \epsilon, \frac{\d \olu}{\d y}, \theta \right) +  \frac{C_R}{\Rey} f_2\left(y, k, \epsilon, \frac{\d \olu}{\d y}, \theta \right),
  \label{eq:ke_NN}
\end{equation}
where $f(\cdot, \theta )$ is a neural network with a vector output, $f_i(\cdot, \theta)$ is the $i$-th output of the neural network, and $\theta$ are its parameters. The constant $C_R = 6,001$ is the lowest Reynolds number in the training dataset. The input variables for the neural network are $(y, k, \epsilon, \d \olu/\d y)$.

 The RANS equation with the eddy-viscosity model and the neural network is
\begin{equation}
  0 = -\frac{1}{\rho}\dd{\ol{p}}{x} + \dd{}{y}\left[(\nu + \nu_t)\dd{\olu}{y} - f_1\left(y, k, \epsilon, \frac{\d \olu}{\d y}, \theta \right) -  \frac{C_R}{\Rey} f_2\left(y, k, \epsilon, \frac{\d \olu}{\d y}, \theta \right) \right],
  \label{eq:RANS_channel_NN}
\end{equation}
with $\nu_t$, $k$, and $\epsilon$ obtained as above.

The optimization of the PDE model (\ref{eq:RANS_channel_NN}) is a nonlinear version of our mathematical framework (\ref{EllipticPDEIntro}). The objective function for the model is the mean-squared error for the velocity profile (which can be thought of as a strong formulation of the objective function (\ref{ObjectivePDEIntro})):
\begin{equation}
J(\theta) = \frac{1}{2} \int_{-\delta}^\delta \bigg{(} h(y) - \olu(y) \bigg{)}^2 dy,
\end{equation}
where the target profile $h(y)$ is the averaged DNS data, and $\olu$ is the solution to (\ref{eq:RANS_channel_NN}). The operator $A$ from (\ref{EllipticPDEIntro}) has, in this case, become a nonlinear operator $A$ acting on $(\olu, k, \epsilon)$ and corresponding to the right-hand side of the equations (\ref{eq:RANS_channel_NN}) and (\ref{EpsilonandKequations}). Specifically,
\begin{eqnarray}
A \begin{pmatrix} \olu \\ k \\ \epsilon \end{pmatrix} &=& \begin{pmatrix}   -\frac{1}{\rho}\dd{\ol{p}}{x} + \dd{}{y} [(\nu + \nu_t)\dd{\olu}{y} ] \\
 \dd{}{y}\left(\frac{\nu+\nu_t}{\sigma_k}\dd{k}{y}\right) +
    \nu_t (\dd{\olu}{y})^2- \epsilon  \\
 \dd{}{y}\left(\frac{\nu+\nu_t}{\sigma_\epsilon}\dd{\epsilon}{y}\right) + C_{1\epsilon} C_{\mu}  k (\dd{\olu}{y})^2  - C_{2\epsilon}\frac{\epsilon^2}{k} \end{pmatrix},\nonumber
\end{eqnarray}
where we recall that the forcing term $\d \ol{p}/\d x=-\tau_w/\delta$ plus an adaptive term to enforce $\ol{U} \approx 1$, where $\tau_w=\rho\nu(\d\olu/\d y)_{y=-\delta}$ is the wall shear stress defined in Subsection \ref{ChannelFlow}.

\subsubsection{Numerical Results} \label{NumericalResults}

The RANS equations are discretized using second-order centered differences on a staggered mesh. The solution $\olu$ and its adjoint are computed at the $y$-faces, and the model variables $k$ and $\epsilon$ are computed at cell centers. A coarse grid of $n_y=32$ mesh points is used for all Reynolds numbers. The grid is nonuniform and is generated by the transformation
\begin{equation}
  \tilde{y} = \frac{\sin(\eta \pi \hat{y}/2 )}{ \sin( \eta \pi/2)},
\end{equation}
where $\hat{y}\in[-\delta,\delta]$ is a uniform grid and $\eta = 0.995$.  The system is solved using a pseudo-time-stepping scheme, in which the velocity solution is relaxed from an initially parabolic (laminar) profile using the fourth-order Runge--Kutta method until the solution has converged. Thus, although the RANS equations for channel flow satisfy a system of nonlinear differential equations, their solution is obtained by solving a time-dependent PDE.

We train the neural network-augmented closure model \eqref{eq:ke_NN} on the DNS-evaluated mean velocity for $\Rey=\{6, 12, 18 \} \times 10^{3}$. The adjoint of \eqref{eq:RANS_channel_NN} is constructed using automatic differentiation as implemented by the Pytorch library~\cite{Pytorch}. 100 relaxation steps are taken per optimization iteration.  Efficient, scalable training is performed using MPI parallelization and GPU offloading. The learning rate is decayed during training according to a piecewise constant learning rate schedule. Quadrature on the finite difference mesh is used to approximate the objective function.

The neural network architecture is similar to that used for \cite{LES2} and \cite{LES1}. Exponential linear units (ELUs) and sigmoids are used for the activation functions. Note that ELUs violate the assumption on boundedness of the activation function under which our theoretical results hold. Nevertheless, this example demonstrates that the mathematical results of this paper may indeed be true under less stringent assumptions. The neural network architecture is:

\begin{eqnarray}
H^{(1)}(z;\theta) &=& \textrm{ELU}( W^{(1)} z + b^{(1)} ), \notag \\
H^{(2)}(z;\theta) &=& \textrm{ELU}( W^{(2)} H^{(1)}(z;\theta) + b^{(2)} ), \notag \\
G(z;\theta) &=& \sigma( W^{(3)} z + b^{(3)} ), \notag \\
U(z;\theta) &=& G(z;\theta) \odot H^{(2)}(z;\theta), \notag \\
f(z; \theta) &=& W^{(4)} U(z;\theta) + b^{(4)},
\end{eqnarray}
where the parameters $\theta = \{ W^{(1)}, b^{(1)}, W^{(2)}, b^{(2)}, W^{(3)}, b^{(4)} \}$, $\odot$ denotes the element-wise product, $\textrm{ELU}(\cdot)$ is an element-wise ELU activation function, and $\sigma(\cdot)$ is an element-wise sigmoid activation function.


Once trained, the neural network-augmented model \eqref{eq:ke_NN} is tested out-of-sample at  $\Rey=\{9, 24\} \times 10^{3}$. We observed that if the model was optimized for a very large number of iterations, instability could occur (presumably due to the equations becoming very stiff near the walls) or it could overfit on the test cases. The results presented are for a model trained with a moderately large number of optimization iterations which produces both good performances on the training cases as well as the test cases. The results provide a useful numerical example of this paper's optimization method applied to nonlinear equations, but the challenges with overfitting on out-of-sample Reynolds numbers suggest that perhaps a PDE model with more physics (such as large-eddy simulation) should be combined with a neural network closure model instead of using RANS. The training mean-squared error is $\sim 1.8 \times 10^{-3}$. Figures~\ref{FigureResultsOutofSample} and~\ref{fig:wall_RANS} compare the mean velocity in mean-flow and wall units, respectively, for the DNS, RANS with the standard $k$-$\epsilon$ model, and RANS with the neural network-augmented model. The neural network model improves the accuracy of the $k$-$\epsilon$ model. In particular, the near-wall prediction (Figure~\ref{fig:wall_RANS}) is improved by the neural network but still does not exactly match the DNS, as might be anticipated for the coarse grid.
\begin{figure}
\centering
\includegraphics[scale=0.5]{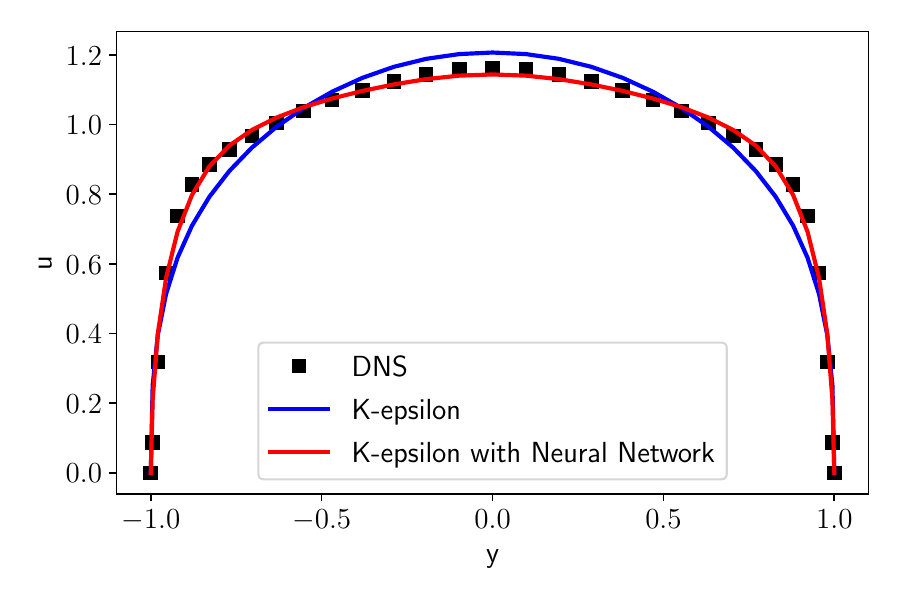}
\includegraphics[scale=0.5]{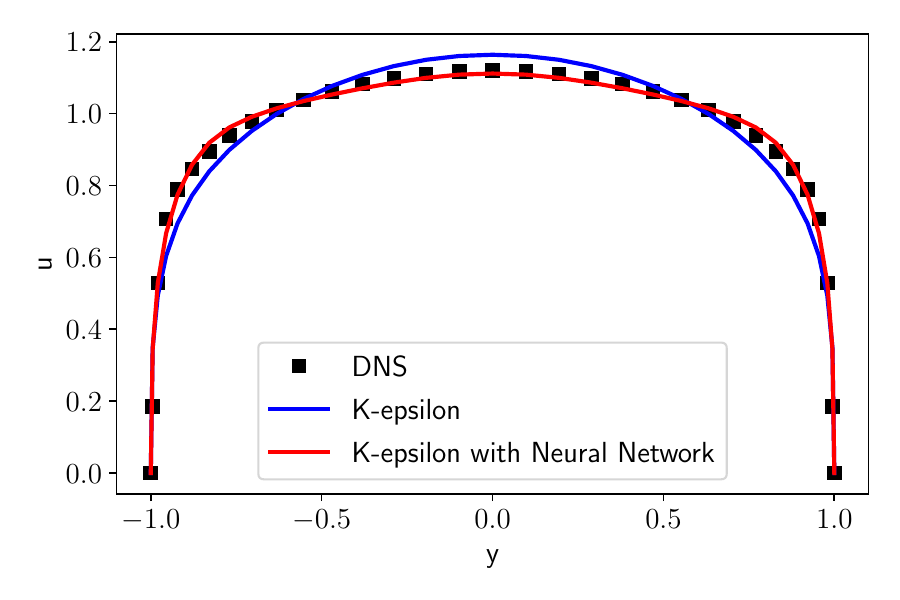}
\caption{Mean velocity profiles obtained from spatio-temporally averaged DNS, RANS using the standard $k$-$\epsilon$ model, and RANS using the neural network-augmented model. DNS were computed using the meshes in Table~\ref{tab:grids}; points indicate the cell-face locations used for the RANS calculations. Left: $\Rey= 9\times10^3$; right: $\Rey=24\times10^3$. The neural network model was trained on $\Rey=\{6, 12, 18 \} \times 10^{3}$.}
\label{FigureResultsOutofSample}
\end{figure}
\begin{figure}
  \centering
  \includegraphics[width=0.65\textwidth]{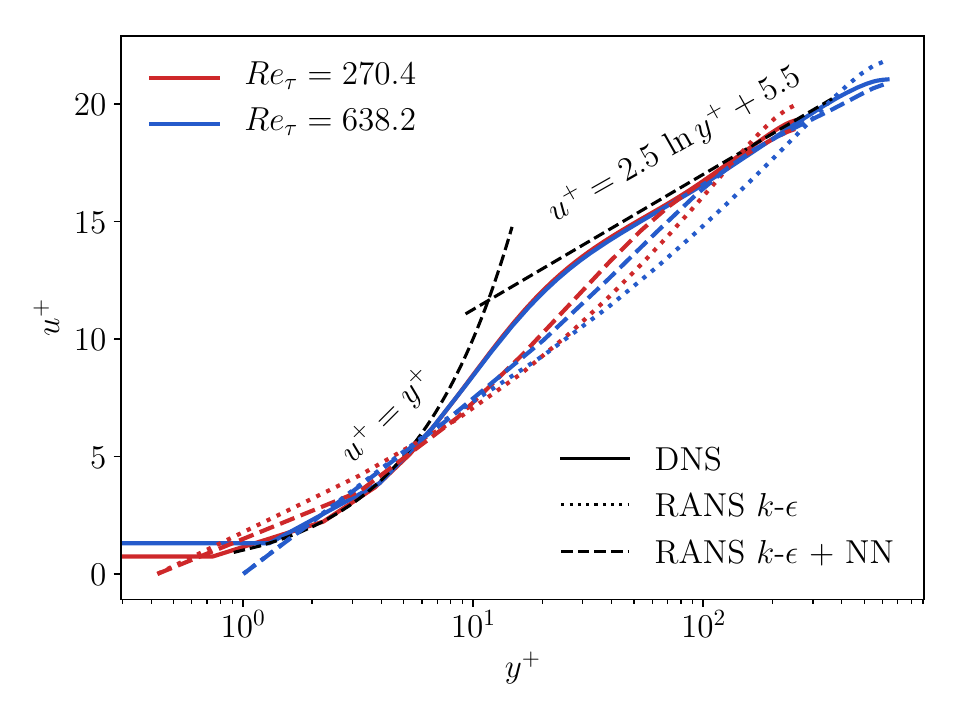}
  \caption{Mean velocity profiles in viscous units for out-of-sample testing cases: $\Rey=\{9,24\}\times10^3$. Solid lines: DNS; dotted lines: RANS with the $k$-$\epsilon$ model; dashed lines: RANS with the neural network-augmented model. The log-law is indicated by dashed lines for reference. The friction velocity used to normalize each $\Rey_\tau$ was computed from DNS. The solid lines for the two DNS datasets nearly overlap.}
  \label{fig:wall_RANS}
\end{figure}

\section{A-priori preliminary bounds}\label{S:aprioriBounds}

Let us start by obtaining certain crucial a-priori bounds. The gradient of the objective function (\ref{ObjectivePDEIntro}) can be evaluated using the solution $\hat u$ to the adjoint PDE (\ref{AdjointPDEIntro}). Indeed, this is the statement of  Lemma \ref{Eq:ObjFcnRep} below.
\begin{lemma}\label{Eq:ObjFcnRep}
Let Assumptions \ref{A:Assumption0}, \ref{A:AssumptionTargetProfile} hold. Then,
\begin{eqnarray}
\nabla_{\theta} J(\theta) =\int_{U} \hat u(x;f) \nabla_{\theta} f(x, \theta) dx
\label{GradientObj}
\end{eqnarray}
\end{lemma}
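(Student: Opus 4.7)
The plan is to compute $\nabla_\theta J(\theta)$ directly by chain rule, exploit the linearity of the state equation to obtain a PDE for the sensitivity $\nabla_\theta u$, and then pair against $\hat u$ via an integration-by-parts that is legal because both $u$ and $\hat u$ vanish on $\partial U$.

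First I would differentiate $J(\theta) = \tfrac12 \sum_{\ell} (\la u, m_\ell\ra - \la h, m_\ell\ra)^2$. By the chain rule
\begin{align*}
\nabla_\theta J(\theta) = \sum_{\ell=1}^L (\la u, m_\ell\ra - \la h, m_\ell\ra)\, \la \nabla_\theta u, m_\ell \ra = \la \nabla_\theta u,\; (\la u, m\ra - \la h, m\ra)^\top m \ra.
\end{align*}
Next, since $A$ does not depend on $\theta$ and the boundary data is zero, differentiating $A u = f(x,\theta)$ in $\theta$ formally yields $A (\nabla_\theta u) = \nabla_\theta f(x,\theta)$ on $U$ with $\nabla_\theta u = 0$ on $\partial U$. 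To make this rigorous I would fix a coordinate $\theta^i$, form the difference quotient $(u(\theta + \varepsilon e_i) - u(\theta))/\varepsilon$, which by linearity solves the PDE with right-hand side $(f(x,\theta+\varepsilon e_i) - f(x,\theta))/\varepsilon$, and pass $\varepsilon \downarrow 0$ using Lax--Milgram well-posedness (Remark \ref{R:GlobalRegularity}) together with the continuous dependence of the $H^1_0(U)$ solution on the $L^2(U)$ source; the required $L^2$ convergence of the forcing difference-quotients to $\partial_{\theta^i} f$ follows from the $C^2_b$ regularity of $\sigma$ in Assumption \ref{A:Assumption1} and the compact support of $c^i_0$.

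With the sensitivity equation in hand, I would recognize that by Assumption \ref{A:AssumptionTargetProfile} the right-hand side of the adjoint equation (\ref{AdjointPDEIntro}) is exactly $(\la u, m\ra - \la h, m\ra)^\top m(x) = A^\dagger \hat u$. Substituting into the expression for $\nabla_\theta J$ gives
\begin{align*}
\nabla_\theta J(\theta) = \la \nabla_\theta u,\; A^\dagger \hat u \ra = \la A (\nabla_\theta u),\; \hat u \ra = \la \nabla_\theta f(\cdot,\theta),\; \hat u \ra = \int_U \hat u(x;f)\, \nabla_\theta f(x,\theta)\, dx,
\end{align*}
which is the claimed formula. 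The integration-by-parts step $\la \nabla_\theta u, A^\dagger \hat u\ra = \la A(\nabla_\theta u), \hat u\ra$ is justified because $\nabla_\theta u$ and $\hat u$ both lie in $H^1_0(U)$, so all boundary terms produced by the second-order operator $A$ vanish; this is where the assumption $m_\ell \in H^1_0(U)$ and $A m_\ell \in L^2(U)$, together with $Ah \in L^2(U)$, feeds in to ensure the adjoint source is in $L^2$ and hence $\hat u \in H^1_0(U)$.

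The main obstacle, such as it is, is the differentiability of $\theta \mapsto u(\cdot\,;f(\cdot,\theta))$ as an $H^1_0(U)$-valued map and the legitimacy of interchanging $\nabla_\theta$ with the solution operator of $A$; once that is in place the rest is the standard self-adjoint pairing. Everything else is bookkeeping: verifying that $\nabla_\theta f^N$ is a smooth $L^2$ function of $x$ (immediate from the explicit form of $f^N$ and $\sigma \in C^2_b$) and that the identity extends coordinate-by-coordinate to the full parameter vector $\theta = (c^i, w^i, \eta^i)_{i=1}^N$.
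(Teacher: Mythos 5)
Your proposal is correct and follows essentially the same route as the paper's proof: differentiate the state equation to obtain the sensitivity PDE $A(\nabla_\theta u) = \nabla_\theta f$, pair against $\hat u$ via the adjoint identity $(\hat u, A\tilde u) = (A^\dagger \hat u, \tilde u)$, and conclude by the chain rule applied to $J$. The only difference is that you spell out the rigorous justification of the sensitivity equation (difference quotients plus Lax--Milgram continuity), which the paper takes for granted.
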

\begin{proof}[Proof of Lemma \ref{Eq:ObjFcnRep}]
We start by defining $\tilde u = \nabla_{\theta} u$. If we differentiate (\ref{EllipticPDEIntro}) we obtain
\begin{align}
A \tilde u(x) &= \nabla_{\theta} f( x,\theta ), \quad x\in U\nonumber\\
\tilde u(x)&= 0,\quad x\in \partial U\nonumber
\end{align}

Integration by parts  gives
\begin{eqnarray*}
( \hat u, A \tilde u ) &=&  ( A^{\dagger} \hat u, \tilde u ).
\end{eqnarray*}

Using now (\ref{EllipticPDEIntro}) and (\ref{AdjointPDEIntro}) for the left and right hand side we obtain
\begin{eqnarray*}
(\la u, m \ra - \la h, m \ra)^{\top} (  \tilde u, m) &=& ( \hat u, \nabla_{\theta} f(\theta) ).
\end{eqnarray*}

Due to (\ref{ObjectivePDEIntro}), we finally obtain
\begin{eqnarray}
\nabla_{\theta} J(\theta) &=& \nabla_{\theta} \bigg{[} \frac{1}{2}  ( \la u, m \ra - \la h, m \ra )^{\top}  ( \la u, m \ra - \la h, m \ra )  \bigg{]} \notag \\
&=&  ( \la u, m \ra - \la h, m \ra )^{\top}  \la \tilde u, m \ra   \notag \\
&=& ( \hat u, \nabla_{\theta} f(\theta) ),\
\end{eqnarray}
which concludes the proof of the lemma.
\end{proof}

Lemma \ref{Eq:ObjFcnRep} shows that the gradient descent update can be written as follows:
\begin{eqnarray}
\dot{\theta}_{t}^{i} &=& - \alpha^{N} \nabla_{\theta^{i}_{t}} J(\theta_t) \notag \\
&=& - \alpha^{k,N}  \left( \hat u^{N}(t,x), \nabla_{\theta^{i}_{t}} f(x, \theta_t) \right)_{L^{2}(U)},\label{SGDupdates}
\end{eqnarray}

It can be proven that the gradient descent updates are uniformly bounded with respect to $t,N$ for $t\leq T$ for a given $T<\infty$.

\begin{lemma}\label{L:C_Bound}
Let  $T<\infty$ be given and Assumptions \ref{A:Assumption0}-\ref{A:Assumption1} hold. There exists a finite constant $C<\infty$, depending on $T$ and on the operator $A$, such that for all $i\in\mathbb{N}$ and all $t\in[0,T]$,
\begin{align}
\sup_{i\in\{1,\cdots, N\}, N\in\mathbb{N},t\in[0,T]}| c_t^i | &\leq C < \infty, \notag\nonumber\\
\sup_{i\in\{1,\cdots, N\}, N\in\mathbb{N},t\in[0,T]} \mathbb{E}\left(\norm{ w_{t}^{i} }+| \eta_t^i |\right) &\leq C < \infty.\nonumber
\end{align}
\end{lemma}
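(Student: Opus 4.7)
\medskip

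\noindent\textbf{Proof proposal.} The plan is to close a Grönwall loop on $M^N_t := \max_{1\le i\le N}|c^i_t|$, then use the resulting bound to control $w^i_t$ and $\eta^i_t$ separately. Using Lemma \ref{Eq:ObjFcnRep} together with the explicit form of $f^N$ in \eqref{Eq:NeuralNetwork}, the gradient descent equations \eqref{SGDupdates} read
\begin{align*}
\dot{c}^i_t &= -\frac{\alpha}{N^{1-\beta}}\bigl(\hat u^N(t,\cdot),\,\sigma(w^i_t\cdot+\eta^i_t)\bigr)_{L^2(U)},\\
\dot{w}^i_t &= -\frac{\alpha}{N^{1-\beta}}\,c^i_t\bigl(\hat u^N(t,\cdot),\,\sigma'(w^i_t\cdot+\eta^i_t)\,\cdot^{\top}\bigr)_{L^2(U)},\\
\dot{\eta}^i_t &= -\frac{\alpha}{N^{1-\beta}}\,c^i_t\bigl(\hat u^N(t,\cdot),\,\sigma'(w^i_t\cdot+\eta^i_t)\bigr)_{L^2(U)},
\end{align*}
since $\alpha^N\cdot N^{-\beta}=\alpha N^{\beta-1}$.

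\medskip

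\noindent First I would bound $\|\hat u^N(t)\|_{L^2}$ in terms of $M^N_t$. From \eqref{Eq:PreLimitPDE1}, standard elliptic estimates under Assumption \ref{A:Assumption0} give $\|u^N(t)\|_{L^2}\le C_A\|g^N_t\|_{L^2}$, and since $\|\sigma\|_\infty<\infty$,
\begin{equation*}
\|g^N_t\|_{L^2}\le \frac{|U|^{1/2}\|\sigma\|_\infty}{N^{\beta}}\sum_{i=1}^N|c^i_t|\le |U|^{1/2}\|\sigma\|_\infty\,N^{1-\beta}M^N_t.
\end{equation*}
Combining with the adjoint equation \eqref{Eq:PreLimitPDE2} and Assumption \ref{A:AssumptionTargetProfile} on $m,h$, one obtains $\|\hat u^N(t)\|_{L^2}\le C(1+N^{1-\beta}M^N_t)$ for a constant $C$ depending on $A$, $h$, $m$.

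\medskip

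\noindent Plugging this back into $\dot c^i_t$ and using $\|\sigma\|_\infty<\infty$ yields
\begin{equation*}
|\dot c^i_t|\le \frac{C}{N^{1-\beta}}\bigl(1+N^{1-\beta}M^N_t\bigr)\le C_1+C_2 M^N_t,
\end{equation*}
uniformly in $i$ and $N$ (here $\beta\in(1/2,1)$ so $N^{\beta-1}\le 1$). Taking the Dini derivative of the max, $M^N_t$ satisfies $\frac{d^+}{dt}M^N_t\le C_1+C_2 M^N_t$; since $C_0^i$ has compact support, $M^N_0\le K$ deterministically, and Grönwall gives $M^N_t\le (K+C_1T)e^{C_2T}=:C(T)$ uniformly in $N,t\in[0,T]$. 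This proves the first bound.

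\medskip

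\noindent For the second bound, with $|c^i_t|\le C(T)$ and $\|\hat u^N(t)\|_{L^2}\le C(T)N^{1-\beta}$ now available, and using $\|\sigma'\|_\infty<\infty$ together with the boundedness of $U$ (so $\|x\|_{L^2(U;\mathbb{R}^d)}\le C_U$), the $w$- and $\eta$-equations give
\begin{equation*}
\|\dot w^i_t\|+|\dot\eta^i_t|\le \frac{C}{N^{1-\beta}}\cdot C(T)\cdot C(T)N^{1-\beta}\le C'(T),
\end{equation*}
uniformly in $i,N,t\in[0,T]$. Integrating in $t$ and taking expectations, $\mathbb{E}(\|w^i_t\|+|\eta^i_t|)\le \mathbb{E}(\|w^i_0\|+|\eta^i_0|)+C'(T)\,T$, which is finite by Assumption \ref{A:Assumption1}.

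\medskip

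\noindent The only nontrivial step is closing the Grönwall inequality for $M^N_t$: the difficulty is that $\hat u^N$ depends on \emph{all} parameters through the PDE system, and the a priori estimate for $\hat u^N$ degrades as $N^{1-\beta}$. The mechanism that saves the argument is the exact cancellation of this growth against the $\alpha^N\cdot N^{-\beta}=N^{\beta-1}$ prefactor in the parameter ODEs; this is the only place where the scaling $\beta\in(1/2,1)$ and the specific choice of learning rate $\alpha^N$ enter the proof of this lemma.
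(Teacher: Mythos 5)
Your proposal is correct and follows the same skeleton as the paper's proof: an a priori elliptic estimate bounding $\norm{\hat u^N(t)}_{L^2}$ by $N^{1-\beta}$ times an aggregate statistic of the $c$'s, the exact cancellation of that $N^{1-\beta}$ against the $N^{-(1-\beta)}$ prefactor in the parameter ODEs, a Gr\"onwall closure on that statistic, and then a direct integration for $w^i_t,\eta^i_t$ using the resulting bounds. The only structural difference is your choice of aggregate: you run Gr\"onwall on $M^N_t=\max_i|c^i_t|$ (via a Dini derivative of the max, or equivalently by taking the max over $i$ of the integrated inequality), whereas the paper bounds $\norm{g^N_t}_{L^2}$ by Cauchy--Schwarz in terms of $\kappa^N_t=\frac{1}{N}\sum_i|c^i_t|^2$, applies Gr\"onwall to $\kappa^N_t$, and then re-substitutes into the individual inequality for $|c^i_t|^2$ to extract the uniform-in-$i$ bound. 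Your single-pass argument on the max is slightly more direct (it yields the uniform individual bound immediately and avoids the re-substitution step), at the cost of a marginally lossier estimate $\norm{g^N_t}_{L^2}\le C N^{1-\beta}M^N_t$ versus the paper's $C N^{1-\beta}(\kappa^N_t)^{1/2}$; since both are absorbed by the same $N^{\beta-1}$ prefactor, nothing is lost for the purposes of this lemma, and your identification of the learning-rate cancellation as the crux is exactly right.
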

\begin{proof}[Proof of Lemma \ref{L:C_Bound}]

We start with a bound on $\norm{\hat{u}^{N}(t)}_{H^{1}_{0}(U)}$. Using standard elliptic estimates based on the implied coercivity by Assumption \ref{A:Assumption0} and the Cauchy--Schwarz inequality, we obtain that there is a constant $C<\infty$ such that

\begin{align}
\norm{\hat{u}^{N}(t)}^{2}_{H^{1}_{0}(U)}&\leq \norm{ ( \la u^N(t), m \ra - \la h, m \ra )^{\top} m }_{L^{2}(U)}^2  \nonumber \\
&\leq C \norm{\la u^N(t), m \ra - \la h, m \ra }_2^2 \norm{  m }_{L^{2}(U)}^2  \nonumber \\
&\leq C\left[\norm{u^{N}(t)}^{2}_{L^{2}(U)}+\norm{h}^{2}_{L^{2}(U)}\right]\nonumber\\
&\leq C\left[\norm{u^{N}(t)}^{2}_{H^{1}_{0}(U)}+\norm{h}^{2}_{L^{2}(U)}\right]\nonumber\\
&\leq C\left[\norm{g^{N}_{t}}^{2}_{L^{2}(U)}+\norm{h}^{2}_{L^{2}(U)}\right]\nonumber\\
&\leq C\left[\frac{1}{N^{2\beta}}\int_{U}N\sum_{i=1}^{N}|c^{i}_{t}|^{2}|\sigma(w^{i}_{t}x+\eta_t^i)|^{2}dx+\norm{h}^{2}_{L^{2}(U)}\right]\nonumber\\
&\leq C\left[\frac{1}{N^{2\beta-2}}\frac{1}{N}\sum_{i=1}^{N}|c^{i}_{t}|^{2}+\norm{h}^{2}_{L^{2}(U)}\right],\nonumber
\end{align}
where in the last line we used the uniform boundedness of $\sigma$. Let us set $\kappa_{t}^{N}= \displaystyle \frac{1}{N}\sum_{i=1}^{N}|c^{i}_{t}|^{2}$. The last display gives us
\begin{align}
\norm{\hat{u}^{N}(t)}^{2}_{H^{1}_{0}(U)}&\leq  C\left[\frac{1}{N^{2\beta-2}}\kappa_{t}^{N}+\norm{h}^{2}_{L^{2}(U)}\right].\nonumber
\end{align}

Next, we calculate
\begin{align}
c^{i}_{t}&=c^{i}_{0}-\frac{\alpha}{N^{1-\beta}}\int_{0}^{t}\int_{U}\hat{u}^{N}(s,x)\sigma(w^{i}_{s} x+\eta_s^i)dxds,\nonumber
\end{align}
which then implies due to the assumed uniform boundedness of $\sigma$ that
\begin{align}
|c^{i}_{t}|&\leq |c^{i}_{0}|+\frac{\alpha\norm{\sigma}}{N^{1-\beta}} \int_0^t \int_{U}|\hat{u}^{N}(s,x)|dx ds.\nonumber
\end{align}

This then gives
\begin{align}
|c^{i}_{t}|^{2}&\leq 2 |c^{i}_{0}|^{2}
+\frac{C}{N^{2(1-\beta)}}t\int_{0}^{t}\norm{\hat{u}^{N}(s)}^{2}_{L^{2}(U)}ds\nonumber\\
&\leq 2 |c^{i}_{0}|^{2}
+\frac{C}{N^{2(1-\beta)}}t\int_{0}^{t}\left[\frac{1}{N^{2\beta-2}}\kappa_{s}^{N}+\norm{h}^2_{L^{2}(U)}\right]ds\label{Eq:C_estimate}
\end{align}

Normalizing  by $N$ and summing up over all $i=1,\cdots, N$ gives
\begin{align}
\kappa_{t}^{N}&\leq 2 \kappa_{0}^{N}
+\frac{C}{N^{2(1-\beta)}}t\int_{0}^{t}\left[\frac{1}{N^{2\beta-2}}\kappa_{s}^{N}+\norm{h}^2_{L^{2}(U)}\right]ds\nonumber\\
&\leq 2 \kappa_{0}^{N}
+Ct\int_{0}^{t}\kappa_{s}^{N}ds+\frac{C t^{2}}{N^{2-2\beta}}\norm{h}^2_{L^{2}(U)}\nonumber
\end{align}

Since $c^{i}_{0}$ take values in a compact set, there exists a constant $C<\infty$ such that $\kappa_{0}^{N}\leq C$. Hence, using the discrete Gronwall lemma we get that there is a constant (still denoted by) $C$ that may depend on $T$, but not on $N$, such that
\begin{align}
\sup_{t\in[0,T]}\kappa_{t}^{N}&\leq C.\nonumber
\end{align}

Combining the latter with (\ref{Eq:C_estimate}) gives for some unimportant constant $C<\infty$
\begin{align}
|c^{i}_{t}|^{2}
&\leq 2|c^{i}_{0}|^{2}+ C+ \frac{C}{N^{2(1-\beta)}}\norm{h}^2_{L^{2}(U)},
\end{align}
from which the result follows, concluding the bound for $|c^{i}_{t}|$.
Now, we turn to the bound for $\norm{ w^i_t }$. Using the uniform bound for $|C^{i}_{k}|$ and Assumptions \ref{A:Assumption0}-\ref{A:Assumption1} on the boundedness of the domain $U$ and on $\sigma'$ we get
\begin{align}
\norm{w^{i}_{t}}&\leq\norm{w^{i}_{0}}+\frac{\alpha}{N^{1-\beta}}\int_{0}^{t}\int_{U}|\hat{u}^{N}(s,x)||c^{i}_{s}|\sigma'(w^{i}_{s} x+\eta_s^i)x|dxds\nonumber\\
&\leq\norm{w^{i}_{0}}+\frac{C}{N^{1-\beta}}\int_{0}^{t}\norm{\hat{u}^{N}(s)}_{L^{1}(U)}|c^{i}_{s}|ds\nonumber\\
&\leq \norm{w^{i}_{0}}+ \frac{C}{N^{1-\beta}}
\int_{0}^{t}\left[\frac{1}{N^{\beta-1}}+\norm{h}_{L^{2}(U)}\right]ds\nonumber\\
&\leq \norm{w^{i}_{0}}+ C\left[1+\frac{1}{N^{1-\beta}}  \right],\nonumber
\end{align}
for a constant $C<\infty$ that may change from line to line. From this we directly obtain that
\begin{align*}
\sup_{i\in\{1,\cdots,N\}, N\in\mathbb{N}, t\in[0,T]}\mathbb{E}\norm{ w^i_t}\leq C<\infty. 
\end{align*}
The proof for the bound of $\sup_{i\in\{1,\cdots,N\}, N\in\mathbb{N}, t\in[0,T]}\mathbb{E}|\eta^i_t|$ is exactly the same as the uniform bound for $\mathbb{E}\norm{ w^i_t}$, concluding the proof of the lemma.
\end{proof}

Next, in Lemma \ref{L:Bound_g}, we show an important a-priori bound relating the neural network output $g^{N}$ with the solution to the adjoint equation $\hat{u}^{N}$.
\begin{lemma}\label{L:Bound_g}
 For every $T<\infty$, we have that there is some constant $C<\infty$ (independent of N) such that for all  $t\in[0,T]$ such that
\begin{align}
\max_{s \in[0,t]} \mathbb{E}  \bigg{[} \norm{g_{s}^N}^{2}_{L^{2}(U)}  \bigg{]} &\leq  C \bigg{(} 1+\int_{0}^{t} \max_{\tau \in[0,s]} \mathbb{E} \bigg{[} \norm{\hat{u}^{N}(\tau)}^{2}_{L^{2}(U)} \bigg{]} ds \bigg{)}.\nonumber
\end{align}

\end{lemma}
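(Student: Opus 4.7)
The plan is to first derive the integral representation (\ref{Eq:Rep_g0}) for $g_t^N$, and then obtain the desired bound by applying Cauchy--Schwarz together with uniform bounds on $B(\cdot,\cdot;\mu_t^N)$ that follow from Lemma \ref{L:C_Bound}.

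First I would differentiate $g_t^N(x)=N^{-\beta}\sum_{i=1}^N c_t^i \sigma(w_t^i x+\eta_t^i)$ in $t$ and substitute the gradient-descent equation (\ref{Eq:thetaEvolution}) together with the gradient formula (\ref{GradientObjIntro}). With $\alpha^N=\alpha N^{2\beta-1}$, the prefactor $\alpha^N N^{-\beta}\cdot N^{-\beta}$ collapses the three contributions from $\dot c_t^i$, $\dot w_t^i$, $\dot\eta_t^i$ into an empirical average of the form $\frac{1}{N}\sum_i(\cdots)$, which is exactly $\alpha\langle \cdot,\mu_t^N\rangle$. The three terms combine to yield the kernel $B(x,x';\mu_t^N)$ of (\ref{Eq:Bfcn}), where the $c^2(x'x+1)$ factor precisely bundles the $w$-gradient contribution (which produces the $x'x$ piece) with the $\eta$-gradient contribution (which produces the $+1$). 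Integrating in time gives the representation (\ref{Eq:Rep_g0}).

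Next, I would apply the triangle inequality in $L^2(U)$ together with $(a+b)^2\leq 2a^2+2b^2$ to obtain
\begin{align*}
\|g_s^N\|_{L^2(U)}^2 \leq 2\|g_0^N\|_{L^2(U)}^2 + 2\Bigl\|\int_0^s\!\!\int_U \hat u^N(r,x') B(\cdot,x';\mu_r^N)\,dx'\,dr\Bigr\|_{L^2(U)}^2.
\end{align*}
For the initial term I would expand the square, invoke independence and the mean-zero property of the initial weights $c_0^i$ so only diagonal terms survive, and use boundedness of $\sigma$ to get $\mathbb{E}\|g_0^N\|_{L^2(U)}^2 \leq C N^{1-2\beta}\leq C$ since $\beta>1/2$. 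For the time-integral term, applying Cauchy--Schwarz in $s$ gives a prefactor $s\leq T$, and Cauchy--Schwarz in $x'$ produces $\int_U |\hat u^N(r,x')|^2 dx' \cdot \int_U |B(x,x';\mu_r^N)|^2 dx'$.

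The key pointwise bound is $|B(x,x';\mu_r^N)|\leq C(1+\kappa_r^N(|x\cdot x'|+1))$ where $\kappa_r^N=\frac{1}{N}\sum_i|c_r^i|^2$; Lemma \ref{L:C_Bound} shows $\sup_{r\in[0,T]}\kappa_r^N\leq C$ almost surely, and boundedness of $U$ bounds $|x\cdot x'|$, so $|B|$ is uniformly bounded on $U\times U\times[0,T]$. Combining these estimates yields
\begin{align*}
\mathbb{E}\|g_s^N\|_{L^2(U)}^2 \leq C + C\int_0^s \mathbb{E}\|\hat u^N(r)\|_{L^2(U)}^2\,dr \leq C\Bigl(1+\int_0^t \max_{\tau\in[0,r]}\mathbb{E}\|\hat u^N(\tau)\|_{L^2(U)}^2\,dr\Bigr),
\end{align*}
and taking the maximum over $s\in[0,t]$ on the left (noting the right side is monotone in $t$) gives the claim. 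The only slightly delicate step is the bookkeeping in the derivation of (\ref{Eq:Rep_g0}), where one must verify that the interaction kernel from the three gradient flows assembles exactly into $B$; everything else is Cauchy--Schwarz combined with the a priori moment bound on $\kappa_t^N$ already established in Lemma \ref{L:C_Bound}.
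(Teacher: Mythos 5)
Your proposal is correct and follows essentially the same route as the paper: derive the representation (\ref{Eq:Rep_g0}) by differentiating $g_t^N$ along the gradient flow, bound the kernel $B(\cdot,\cdot;\mu_s^N)$ uniformly on $U\times U\times[0,T]$ via the a priori bound on $\kappa_t^N$ from Lemma \ref{L:C_Bound}, control $\mathbb{E}\|g_0^N\|_{L^2(U)}^2$ by independence, mean-zero initialization, and $\beta>1/2$, and finish with Cauchy--Schwarz in time. The only cosmetic difference is your explicit use of $(a+b)^2\leq 2a^2+2b^2$, which is if anything slightly cleaner than the paper's write-up.
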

\begin{proof}[Proof of Lemma \ref{L:Bound_g}]
Starting with $g_{t}^{N}(x)= \displaystyle \frac{1}{N^{\beta}} \sum_{i=1}^N c^i_{t} \sigma( w^i_{t} x+\eta^i_t)$ and differentiating using (\ref{Eq:thetaEvolution}) we obtain
\begin{align}
\dot{g}_{t}^{N}(x)&=\frac{1}{N^{\beta}} \sum_{i=1}^N \left[\dot{c}^i_{t} \sigma( w^i_{t} x+\eta^i_t)+c^{i}_{t}\sigma'(w^i_{t} x+\eta^i_t)(\dot{w}^i_{t} x+\dot{\eta}^i_t)\right]\nonumber\\
&=-\frac{\alpha}{N}\sum_{i=1}^{N}\left[\int_{U}\hat{u}^{N}(t,x')\left(\sigma(w^i_{t} x+\eta^i_t)\sigma(w^i_{t} x'+\eta^i_t)+|c^{i}_{t}|^{2}\sigma'(w^i_{t} x+\eta^i_t)\sigma'(w^i_{t} x'+\eta^i_t)(x x'+1)\right)dx'\right]\nonumber
\end{align}

Recalling now the empirical measure $\mu^{N}_{t}= \displaystyle \frac{1}{N}\sum_{i=1}^{N}\delta_{c^{i}_{t}, w^{i}_{t},\eta^i_t}$ and the definition of $B(x,x';\mu)$ by (\ref{Eq:Bfcn}) we obtain the representation
\begin{align}
\dot{g}_{t}^{N}(x)&=-\int_{U}\hat{u}^{N}(t,x') B(x,x';\mu^{N}_{s})dx'\label{Eq:Rep_g}
\end{align}

The latter implies that
\begin{align}
\norm{g^{N}_{t}}^{2}_{L^{2}(U)}&\leq \norm{g^{N}_{0}}^{2}_{L^{2}(U)}+\int_{U}\left(\int_{0}^{t}\int_{U}\hat{u}^{N}(s,x')B(x,x';\mu^{N}_{s})dx'ds\right)^{2}dx\nonumber
\end{align}

Notice now that for an appropriate constant $C$ (independent of $N$) that may change from line to line
\begin{align}
|B(x,x';\mu^{N}_{s})|&\leq\alpha \frac{1}{N}\sum_{i=1}^{N}\left|\sigma(w^i_{s} x+\eta^i_s)\sigma(w^i_{s} x'+\eta^i_s)+|c^{i}_{s}|^{2}\sigma'(w^i_{s} x+\eta^i_s)\sigma'(w^i_{s} x'+\eta^i_s)(x x'+1)\right|\nonumber\\
&\leq C\left[1+\frac{1}{N}\sum_{i=1}^{N}|c^{i}_{s}|^{2}\right]\nonumber\\
&\leq C,\nonumber
\end{align}
due to the boudnedness of $\sigma,\sigma', U$ and by Lemma \ref{L:C_Bound}.    Thus, we have for an appropriate constant $C$ that
\begin{align}
\norm{g^{N}_{t}}^{2}_{L^{2}(U)}&\leq \norm{g^{N}_{0}}^{2}_{L^{2}(U)}+C\int_{U}\left(\int_{0}^{t}\int_{U}|\hat{u}^{N}(s,x')|dx'ds\right)^{2}dx\nonumber\\
&\leq \norm{g^{N}_{0}}^{2}_{L^{2}(U)}+C\left(\int_{0}^{t}\norm{\hat{u}^{N}(s)}_{L^{2}(U)}ds\right)^{2}\nonumber
\end{align}

Because now $c_{0}^{i},w_{0}^{i},\eta^i_0$ are sampled independently, $c_{0}^{i}$ take values in a compact set and $\sigma$ is uniformly bounded  we have
\begin{align}
\mathbb{E}\norm{g^{N}_{0}}^{2}_{L^{2}(U)} &=\frac{1}{N^{2\beta}}\int_{U}\mathbb{E}\left(\sum_{i=1}^{N}c^{i}_{0}\sigma(w^{i}_{0}x+\eta^i_0)\right)^{2}dx\nonumber\\
&=\frac{1}{N^{2\beta}}\int_{U}\mathbb{E}\sum_{i=1}^{N}|c^{i}_{0}\sigma(w^{i}_{0}x+\eta^i_0)|^{2}dx\nonumber\\
&\leq \frac{C}{N^{2\beta-1}}\nonumber\\
&\leq C,\label{Eq:Boundg0}
\end{align}
where the last line follows because $\beta\in(1/2,1)$.

Therefore, there is a constant $C$ that is independent of $N$ such that
\begin{align}
\max_{s \in[0,t]}\mathbb{E}\norm{g^{N}_{s}}^{2}_{L^{2}(U)}&\leq C \bigg{(} 1+\int_{0}^{t}  \max_{\tau \in[0,s]} \mathbb{E} \bigg{[} \norm{\hat{u}^{N}(\tau)}_{L^{2}(U)}^2 \bigg{]} ds \bigg{)},\nonumber
\end{align}
concluding the proof of the lemma.

\end{proof}

\section{Well-posedness of limit PDE system, proof of Theorem \ref{T:WellPosednessLimitSystem}} \label{S:WellPosedness}

Let us define the linear operator $T_B: L^2 (U) \rightarrow L^2(U)$ where
\begin{eqnarray}
[T_B u](x) = \int_{U} B(x, x';\mu_{0}) u(x') dx'.\nonumber
\end{eqnarray}

The goal of this section is to prove well-posedness for the pair $(u^{\ast},\hat{u}^{\ast})$ solving the system of equations
\begin{align}
A u^{\ast} (t,x) &= g_{t}(x), \quad x\in U\nonumber\\
u^{\ast}(t,x)&= 0,\quad x\in \partial U,\label{Eq:LimitPDE1_WP}
\end{align}
and
\begin{align}
A^{\dagger} \hat u^{\ast}(t,x)&= ( \la u^{\ast}(t), m \ra - \la h, m \ra  )^{\top} m(x), \quad x\in U\nonumber\\
\hat u^{\ast}(t,x)&= 0,\quad x\in \partial U, \label{Eq:LimitPDE2_WP}
\end{align}
where
\begin{eqnarray}
g_t(x) &=&-\int_{0}^{t} [T_B \hat{u}^{\ast}](s,x) ds\label{Eq:Limiting_g_WP}
\end{eqnarray}

\begin{proof}[Proof of Theorem \ref{T:WellPosednessLimitSystem}]
We first prove that any solution to (\ref{Eq:LimitPDE1_WP})-(\ref{Eq:LimitPDE2_WP}) is unique. Let's suppose there are two solutions $(u^{1,\ast}, \hat u^{1,\ast})$ and $(u^{2,\ast}, \hat u^{2,\ast})$. Let $g^{i}$ be the solution to (\ref{Eq:Limiting_g_WP}) corresponding to $\hat{u}^{i,\ast}$. Define
\begin{eqnarray}
\phi(t,x) &=& u^{1,\ast}(t,x) - u^{2,\ast}(t,x), \notag \\
\psi(t,x) &=& \hat u^{1,\ast}(t,x) - \hat u^{2,\ast}(t,x), \notag \\
\zeta(t,x) &=& g^1_{t}(x) - g^2_{t}(x) .\nonumber
\end{eqnarray}

Then, $(\phi, \psi, \zeta)$ satisfy
\begin{eqnarray}
A \phi(t,x) &=& \zeta(t,x), \notag \\
A^{\dagger}  \psi(t,x) &=&  ( \la \phi(t), m \ra  )^{\top} m(x), \notag \\
\frac{d \zeta}{dt}(t,x) &=& - [T_B \psi](t,x),
\label{LimitPDEdiff}
\end{eqnarray}
with initial condition $\zeta(0,x) = 0$.

By standard elliptic systems made possible due to Assumption \ref{A:Assumption0}, we have the bounds
\begin{eqnarray}
\norm{ \phi(t) }_{H^1_{0}(U)} &\leq& C \norm{ \zeta(t) }_{L^2(U)}, \notag \\
\norm{ \psi(t) }_{H^1_{0}(U)} &\leq& C \norm{ \phi(t) }_{H^1_{0}(U)}.
\label{LaxMilBound}
\end{eqnarray}

Let's now study the evolution of $ \frac{1}{2} (\zeta(t), \zeta(t) )$. Due to the boundedness of the domain $U$ and consequently the boundedness of $B(x, x';\mu_{0})$ we obtain
\begin{eqnarray}
\frac{d}{dt} \frac{1}{2} (\zeta(t), \zeta(t) ) &=& (\zeta(t), \frac{d \zeta}{dt}(t) ) \notag \\
&=& - \int_{U\times U}   \zeta(t,x) B(x, x';\mu_{0}) \psi(t,x')  dx'  dx \notag \\
&\leq& C \norm{ \zeta(t) }_{L^2(U)} \norm{\psi(t) }_{L^2(U)},\nonumber
\end{eqnarray}
where we have used the Cauchy--Schwarz inequality in the last line.

Using the inequality (\ref{LaxMilBound}),
\begin{eqnarray}
\frac{d}{dt} \frac{1}{2} (\zeta(t), \zeta(t) ) &\leq& C \norm{ \zeta(t) }_{L^2(U)} \norm{\psi(t) }_{L^2(U)} \notag \\
&\leq& C \norm{ \zeta(t) }_{L^2(U)} \norm{ \zeta(t) }_{L^2(U)} \notag \\
&=& C ( \zeta(t), \zeta(t) ).\nonumber
\end{eqnarray}

By Gronwall's inequality, for $t \geq 0$,
\begin{eqnarray}
\norm{\zeta(t)}_{L^2(U)} = 0.\nonumber
\end{eqnarray}

Due to the inequalities (\ref{LaxMilBound}), for $t \geq 0$,
\begin{eqnarray}
\norm{ \phi(t) }_{H^1_0(U)} &=& 0, \notag \\
\norm{ \psi(t) }_{H^1_0(U)} &=& 0.\nonumber
\end{eqnarray}
Therefore, we have proven that any solution $(u^{\ast}, \hat u^{\ast})$ to the PDE system (\ref{Eq:LimitPDE1_WP})-(\ref{Eq:LimitPDE2_WP}) must be unique.

Let us now show existence. We will use a standard fixed point theorem argument. For given $T>0$, let $U_{T}=U\times[0,T]$ and $V_{T}=C([0,T];H^{1}_{0}(U))$ be the Banach space consisting of elements of $H^{1}_{0}(U_{T})$ having a finite norm
\[
\norm{v}_{V_{T}}=\max_{t\in[0,T]} \norm{v(t)}_{H^{1}_{0}(U)}.
\]

Let $T>0$ be given and $\ell\in L^{2}([0,T]\times U)$  be a given function. Consider the system of PDEs
\begin{align}
A w (t,x) &= -\int_{0}^{t}\ell(s,x)ds, \quad x\in U\nonumber\\
w(t,x)&= 0,\quad x\in \partial U,\label{Eq:LimitPDE1_WP_E}
\end{align}
and
\begin{align}
A^{\dagger} \hat w(t,x)&= ( \la w(t) - h, m \ra  )^{\top} m(x) , \quad x\in U\nonumber\\
\hat w(t,x)&= 0,\quad x\in \partial U.\label{Eq:LimitPDE2_WP_E}
\end{align}

Due to Assumption \ref{A:Assumption0}, the classical Lax--Milgram theorem guarantees that there is a solution system $(w(t),\hat w(t))\in H^{1}_{0}(U)\times H^{1}_{0}(U) $ of (\ref{Eq:LimitPDE1_WP_E})-(\ref{Eq:LimitPDE2_WP_E}) for all $t\in[0,T]$. As a matter of fact this means that $(w,\hat w)\in V_T\times V_T $.

Now for an arbitrarily given $\hat{u}^{\ast}(t)\in H^{1}_{0}(U)$, let us set $\ell(t,x)=[T_{B}\hat u^{\ast}](t,x)$. Notice that Cauchy--Schwarz inequality gives the estimate
\begin{align}
\norm{\ell(t)}_{L^{2}(U)}&\leq \int_{U}  ([T_{B}\hat u^{\ast}](t,x))^{2}dx\nonumber\\
&\leq    \int_{U}\left(  \int_{U} B(x,x';\mu_{0})\hat{u}^{\ast}(t,x')dx' \right)^{2}dx\nonumber\\
&\leq    \norm{B(\cdot,\cdot;\mu_{0})}_{L^{2}(U\times U)} \norm{\hat u^{\ast}(t)}_{L^{2}(U)}\nonumber\\
&\leq C\norm{\hat u^{\ast}(t)}_{L^{2}(U)}\nonumber
\end{align}

Consider now the map $L: V_T\times V_T\mapsto V_T\times V_T$ by setting $L(u^{\ast},\hat{u}^{\ast})=(w,\hat{w})$.  For given $M<\infty$ and $T<\infty$, let us further define the space
\[
V_{T}(M)=\{u\in V_{T} \text{ and }\norm{u}_{V_{T}}\leq M\}.
\]

We first show that $L$ maps $V_{T}(M)\times V_{T}(M)$ into itself for appropriate choices of $T$ and $M$. By standard elliptic estimates implied by Assumptions \ref{A:Assumption0} and \ref{A:AssumptionTargetProfile} we get for some unimportant constant $C_{1}<\infty$ that
\begin{align}
\norm{w}_{V_T}+\norm{\hat w}_{V_T}&\leq C_{1} T\left[\norm{ u^{\ast}}_{V_T}+\norm{ \hat u^{\ast}}_{V_T}\right]+C_{1}\norm{h}_{L^{2}(U)}\label{Eq:LocalBound}
\end{align}

This means that if $u^{\ast},\hat{u}^{\ast}\in V_{T}(M)$, then $w,\hat{w}\in V_{T}(M)$ if
\[
2C_{1}TM+C_{1}\norm{h}_{L^{2}(U)}\leq M
\]

Such a choice does exist and in particular we choose $T,M$ so that $0<T<\frac{M-C_{1}\norm{h}_{L^{2}(U)}}{2C_{1}M}$. Let's denote such a time $T$ by $T_{1,1}$ and such $M=M_{1}$.

Next, we claim that if $T<\infty$ is small enough, then $L$ is a contraction mapping. For this purpose, consider two pairs $(u_{1}^{\ast},\hat{u}_{1}^{\ast})$, $(u_{2}^{\ast},\hat{u}_{2}^{\ast})$ and their corresponding $(w_{1},\hat{w}_{1})$ and $(w_{2},\hat{w}_{2})$ through the mapping $L$. We have that
\begin{align}
A (w_{1}-w_{2}) (t,x) &= -\int_{0}^{t}[T_{B}(\hat{u}_{1}^{\ast}-\hat{u}_{2}^{\ast})](s,x)ds, \quad x\in U\nonumber\\
w_1(t,x)&=w_2(t,x)= 0,\quad x\in \partial U,\nonumber
\end{align}
and
\begin{align}
A^{\dagger} (\hat w_{1}-\hat w_{2})(t,x)&= ( \la w_1(t) - w_2(t), m \ra  )^{\top} m(x), \quad x\in U\nonumber\\
\hat w_1(t,x)&= \hat w_2(t,x)=0,\quad x\in \partial U.\nonumber
\end{align}

By standard elliptic estimates implied by Assumptions \ref{A:Assumption0} and \ref{A:AssumptionTargetProfile} we get for some unimportant constant $C_{2}<\infty$ that may change from line to line
\begin{align}
\norm{\hat w_1-\hat w_2}_{V_T}&\leq C_{2} \norm{ w_1-w_2}_{V}\nonumber\\
&\leq C_{2} \int_{0}^{T}\norm{ [T_{B}(\hat u_1^{\ast}-\hat u_2^{\ast})](s)}_{L^2_{0}(U)}ds\nonumber\\
&\leq C_{2} \int_{0}^{T}\norm{ (\hat u_1^{\ast}-\hat u_2^{\ast})(s)}_{L^{2}_{0}(U)}ds\nonumber\\
&\leq C_{2} T \norm{ (\hat u_1^{\ast}-\hat u_2^{\ast})}_{V_T}\nonumber
\end{align}

The latter implies that there exists a constant $C_{2}<\infty$ such that
\begin{align}
\norm{w_1- w_2}_{V_T}+\norm{\hat w_1-\hat w_2}_{V_T}&\leq C_{2} T\left[\norm{ u_1^{\ast}-u_2^{\ast}}_{V_T}+\norm{ \hat u_1^{\ast}-\hat u_2^{\ast}}_{V_T}\right]\label{Eq:ContractionProperty}
\end{align}

This means that the mapping $L$ is indeed a contraction on the space $V_{T}\times V_{T}$ if $T$ is chosen such that $C_{2}T<1$. Let's denote such a time $T$ by $T_{1,2}$.

Let's set now $T_{1}=\min\{T_{1,1}, T_{1,2}\}$. Hence, by Banach's fixed point theorem there is a solution $(u^{\ast},\hat{u}^{\ast})$ on $V_{T_{1}}(M_{1})\times V_{T_{1}}(M_{1})$.

Therefore, we have established the existence of a local in time solution to the system (\ref{Eq:LimitPDE1_WP})-(\ref{Eq:LimitPDE2_WP}). Let us now extend this \textcolor{black}{to a solution in $[0,T]$ for an arbitrary $T<\infty$}.

In order to proceed to the next time interval, fixing $M_1$ and $T_1$ be the upper bound and the length of the time interval from the first iteration, the second iteration (here we use that for $[0,T_1]$ we have a solution bounded in the respective norm by $M_1$) gives that we need to choose $T^*$ and $M^*$ such that
    \begin{align*}
    0<T_1&<T^*\nonumber\\
    2C_{1}T_1 M_1+2C_1(T^*-T_1)M^*+C_{1}\norm{h}_{L^{2}(U)}&\leq M^*\nonumber\\
    C_2(T^*-T_1)&<1\nonumber\\
    \end{align*}
    So can set $T_{2,1}$, $T_{2,2}$ and $M_2$ such that
    \begin{align*}
    0&<T_{2,1}-T_1<\frac{M_2-C_1 \norm{h}_{L^{2}(U)}-2C_1 T_1 M_1}{2C_1 M_2}\nonumber\\
    0&<T_{2,2}-T_1<\frac{1}{C_2}
    \end{align*}
and then let $T_2=\min\{T_{2,1},T_{2,2}\}$. Clearly, one can choose $M_2>M_1$ and by Banach's fixed point theorem there will be a solution in $V_{T_2}(M_2)\times V_{T_2}(M_2)$.  Then, we proceed iteratively the same way.

In general to extend the solution to the time interval $[T_k,T_{k+1}]$, we shall set
 $T_{k+1,1}$, $T_{k+1,2}$ and $M_{k+1}$ such that
    \begin{align*}
    0&<T_{k+1,1}-T_k<\frac{M_{k+1}-C_1 \norm{h}_{L^{2}(U)}-2C_1 T_k M_k}{2C_1 M_{k+1}}\nonumber\\
    0&<T_{k+1,2}-T_k<\frac{1}{C_2}
    \end{align*}

The latter display shows that given that one has constructed $(T_j, M_{j})_{j\leq k}$, a viable choice for $M_{k+1}$ is
\[
M_{k+1}=2\left(M_k+2C_1 T_{k-1}(M_k-M_{k-1})\right)>0
\]
which then shows that $T_{k+1,1}$ can be chosen so that $ 0<T_{k+1,1}-T_k<\frac{1}{4C_1}$. Then let $T_{k+1}=\min\{T_{k+1,1},T_{k+1,2}\}$. Clearly, one can always choose $M_{k+1}>M_k$ and by Banach's fixed point theorem there will be a solution in $V_{T_{k+1}}(M_{k+1})\times V_{T_{k+1}}(M_{k+1})$.Notice also that we can establish a global upper bound for the $\norm{\cdot}_{H^{1}_{0}(U)}-$norm of solutions. Indeed, we first have that
\begin{align}
\norm{g_{t}}^{2}_{L^2(U)}&\leq \int_{U}\left(\int_{0}^{t}  [T_{B}\hat u^{\ast}](s,x)ds\right)^{2}dx\nonumber\\
&\leq    \int_{U}\left(\int_{0}^{t}  \int_{U} B(x,x';\mu_{0})\hat{u}^{\ast}(s,x')dx' ds\right)^{2}dx\nonumber\\
&\leq    C \left(\int_{0}^{t}  \int_{U} |\hat{u}^{\ast}(s,x')|dx' ds\right)^{2}\nonumber\\
&\leq    C \left(\int_{0}^{t}  \norm{\hat{u}^{\ast}(s)}_{L^{1}(U)} ds\right)^{2}\nonumber
\end{align}
where we used the boundedness of the kernel $B(x,x';\mu_{0})$ and of $U$.
From this we obtain that for some constant $C<\infty$
\begin{align}
\norm{g_{t}}_{L^2(U)}
&\leq    C \int_{0}^{t}  \norm{\hat{u}^{\ast}(s)}_{L^{1}(U)} ds\nonumber
\end{align}

Going back to the elliptic estimates guaranteed by Assumption \ref{A:Assumption0}  we subsequently obtain
\begin{align}
\norm{\hat u^{\ast}(t)}_{H^{1}_{0}(U)}&\leq C \left(\norm{ u^{\ast}(t)}_{H^{1}_{0}(U)}+ \norm{h}_{L^{2}(U)}\right)\nonumber\\
&\leq C \left(\norm{ g_{t}}_{L^2(U)}+ \norm{h}_{L^{2}(U)}\right)\nonumber\\
&\leq C \left( \int_{0}^{t} \norm{\hat u^{\ast}(s)}_{L^{1}(U)}ds + \norm{h}_{L^{2}(U)}\right)\nonumber\\
&\leq C \left( \int_{0}^{t} \norm{\hat u^{\ast}(s)}_{H^{1}_{0}(U)}ds + \norm{h}_{L^{2}(U)}\right)\nonumber
\end{align}

By Grownall lemma now we have
\begin{align}
\norm{\hat u^{\ast}(t)}_{H^{1}_{0}(U)}
&\leq \norm{h}_{L^{2}(U)} e^{Ct},\nonumber
\end{align}
and as a matter of fact we have the global upper bound for some constants $C_1,C_2<\infty$
\begin{align}
\norm{u^{\ast}(t)}_{H^{1}_{0}(U)}+\norm{\hat u^{\ast}(t)}_{H^{1}_{0}(U)}
&\leq C_{1}\norm{h}_{L^{2}(U)} \left(e^{C_{2}t}+1\right).\nonumber
\end{align}

Thus for any given $T<\infty$ one has
\begin{align*}
\sup_{0\leq t< T}\left(\norm{u^{\ast}(t)}_{H^{1}_{0}(U)}+\norm{\hat u^{\ast}(t)}_{H^{1}_{0}(U)}\right)&<\infty
\end{align*}

Notice that this a-priori global upper bound immediately implies that if the maximal interval of existence of the solution $(u^{\ast},\hat{u}^{\ast})$, say $[0,T^*)$ were to be bounded, i.e., if $T^{*}<\infty$, then $(u^{\ast},\hat{u}^{\ast})$ would be uniformly bounded in $H^{1}_{0}(U)$ in $[0,T^*]$ and would not explode in finite time, see \cite{Amann1985,Nolen}.

The previous construction of the sequence $(T_k,M_k)_{k\geq 1}$ and the aforementioned a-priori bound say that the solution can be extended in time for any time interval $[0,T]$ with $T<\infty$ (see also Section 9.2 of \cite{Evans}, Proposition 6.1 of \cite{Nolen}  and Section 5 of \cite{Amann1985} for related results). This concludes the proof of the theorem.

\end{proof}

\section{Proof of Theorem \ref{T:ConvergenceNTheorem}}\label{S:ProofConvergencePDE}

We will now show that the solution to (\ref{Eq:PreLimitPDE1})-(\ref{Eq:PreLimitPDE2}) converges to the solution to (\ref{Eq:LimitPDE1})-(\ref{Eq:LimitPDE2}) as $N\rightarrow\infty$. We start with the following uniform boundedness lemma with respect to N for the system (\ref{Eq:PreLimitPDE1})-(\ref{Eq:PreLimitPDE2}).
\begin{lemma}\label{L:UniformBoundedness}
Let $T>0$ be given and assume the Assumption \ref{A:Assumption0} holds. Then there exists a constant $C<\infty$ independent of $N$ such that
\begin{align}
\max_{t\in[0,T]}\mathbb{E} \bigg{[} \norm{u^{N}(t)}_{H^{1}_{0}(U)}^2 \bigg{]} +\max_{t\in[0,T]}\mathbb{E} \bigg{[} \norm{\hat u^{N}(t)}_{H^{1}_{0}(U)}^2 \bigg{]} &\leq C.\nonumber
\end{align}
\end{lemma}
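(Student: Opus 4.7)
The plan is to close a Gronwall loop that couples the three quantities $\|u^N(t)\|_{H^1_0}$, $\|\hat u^N(t)\|_{H^1_0}$, and $\|g^N_t\|_{L^2}$, making essential use of Lemma \ref{L:Bound_g}.

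First, I would apply standard elliptic estimates to (\ref{Eq:PreLimitPDE1}). By Assumption \ref{A:Assumption0} the operator $A$ is uniformly coercive, so the Lax--Milgram bound yields a constant $C<\infty$ (independent of $N$) such that
\begin{align*}
\norm{u^N(t)}_{H^1_0(U)} \leq C \norm{g^N_t}_{L^2(U)}.
\end{align*}
Applying the same reasoning to (\ref{Eq:PreLimitPDE2}), using Assumption \ref{A:AssumptionTargetProfile} for the boundedness of the $m_\ell$, I get
\begin{align*}
\norm{\hat u^N(t)}_{H^1_0(U)} \leq C \bigl( \norm{u^N(t)}_{L^2(U)} + \norm{h}_{L^2(U)} \bigr).
\end{align*}
Poincar\'e's inequality on $H^1_0(U)$ then gives $\norm{u^N(t)}_{L^2(U)} \leq C \norm{u^N(t)}_{H^1_0(U)}$ and likewise $\norm{\hat u^N(t)}_{L^2(U)} \leq C\norm{\hat u^N(t)}_{H^1_0(U)}$.

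Second, I would define the monotone quantity
\begin{align*}
F(t) = \max_{s \in [0,t]} \mathbb{E}\norm{\hat u^N(s)}_{H^1_0(U)}^2.
\end{align*}
Squaring and combining the two elliptic bounds together with Poincar\'e gives
\begin{align*}
\max_{s \in [0,t]}\mathbb{E}\norm{\hat u^N(s)}_{H^1_0(U)}^2 \leq C \Bigl( \max_{s \in [0,t]} \mathbb{E}\norm{g^N_s}_{L^2(U)}^2 + \norm{h}_{L^2(U)}^2 \Bigr).
\end{align*}
Substituting the bound of Lemma \ref{L:Bound_g} into the right-hand side and using $\mathbb{E}\norm{\hat u^N}_{L^2}^2 \leq C \mathbb{E}\norm{\hat u^N}_{H^1_0}^2$, I obtain
\begin{align*}
F(t) \leq C \Bigl( 1 + \int_0^t F(s)\, ds \Bigr),
\end{align*}
with a constant independent of $N$.

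Third, Gronwall's inequality applied to the above yields $F(t) \leq C e^{Ct}$, so $F$ is uniformly bounded on $[0,T]$. Feeding this back into the bound on $\max_{s \in [0,t]}\mathbb{E}\norm{u^N(s)}_{H^1_0(U)}^2$ via the elliptic estimate for $u^N$ and Lemma \ref{L:Bound_g} produces the same type of uniform bound for $u^N$. No step here is delicate; the only point that requires a little care is keeping track of the fact that Lemma \ref{L:Bound_g} already involves a running maximum in time on its left-hand side, so the quantity $F(t)$ must be defined with a running maximum for the Gronwall iteration to close. Since $\beta \in (1/2,1)$ enters only through the a priori bounds of Lemma \ref{L:C_Bound} and Lemma \ref{L:Bound_g}, all constants remain $N$-free, giving the stated uniform bound.
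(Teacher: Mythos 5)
Your proposal is correct and follows essentially the same route as the paper: elliptic (coercivity) estimates for the two boundary-value problems to chain $\norm{\hat u^N}_{H^1_0}\lesssim\norm{u^N}_{H^1_0}+\norm{h}_{L^2}\lesssim\norm{g^N_t}_{L^2}+\norm{h}_{L^2}$, then Lemma \ref{L:Bound_g} to close a Gronwall inequality for the running maximum of $\mathbb{E}\norm{\hat u^N(t)}^2_{H^1_0(U)}$, and finally feeding the result back to bound $u^N$. The only cosmetic difference is that you make the $L^2$-versus-$H^1_0$ comparison via Poincar\'e explicit, which the paper leaves implicit.
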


\begin{proof}[Proof of Lemma \ref{L:UniformBoundedness}]
By standard energy estimates based on the coercivity property implied by Assumption \ref{A:Assumption0} we have that there is a constant $C_0$ that may change from line to line such that
\begin{eqnarray}
\max_{s\in[0,t]}\mathbb{E} \norm{\hat{u}^{N}(t)}_{H^{1}_{0}(U)}^2 &\leq& C_{0}\left[\max_{s\in[0,t]}\mathbb{E}\norm{ u^{N}(s)}_{H^{1}_{0}(U)}^2 +\norm{ h}_{L^{2}(U)}^2   \right] \notag \\
&\leq& C_{0}\left[\max_{s\in[0,t]}\mathbb{E}\norm{ g^N_s}_{L^2(U)}^2+\norm{ h}_{L^{2}(U)}^2 \right] \notag \\
&\leq& C_{0}\left[1+\int_{0}^{t} \max_{\tau \in [0,s]} \mathbb{E} \norm{\hat{u}^{N}(\tau)}_{H^{1}_{0}(U)}^2 ds+\norm{ h}_{L^{2}(U)}^2 \right],
\label{uHatBoundMay}
\end{eqnarray}
where the third line uses Lemma \ref{L:Bound_g}. The bound for $\displaystyle \max_{t\in[0,T]}\mathbb{E}\norm{\hat{u}^{N}(t)}_{H^{1}_{0}(U)}^2$ then follows by Gronwall's lemma using the fact that, by assumption, $h\in L^{2}$. The bound for $\displaystyle \max_{t\in[0,T]}\mathbb{E}\norm{u^{N}(t)}_{H^{1}_{0}(U)}^2$ follows directly from equation (\ref{uHatBoundMay}) and the bound for $\displaystyle \max_{t\in[0,T]}\mathbb{E}\norm{\hat{u}^{N}(t)}_{H^{1}_{0}(U)}^2$.
This concludes the proof of the lemma.
\end{proof}

Now we can proceed with the proof of Theorem \ref{T:ConvergenceNTheorem}.

\begin{proof}[Proof of Theorem \ref{T:ConvergenceNTheorem}]

Recall that the parameters $c^i_t$ satisfy the equation:
\begin{align}
c^{i}_{t}&=c^{i}_{0}-\frac{\alpha}{N^{1-\beta}}\int_{0}^{t}\int_{U}\hat{u}^{N}(s,x)\sigma(w^{i}_{s} x+\eta_s^i)dxds.\nonumber
\end{align}
Re-arranging and using the Cauchy--Schwarz inequality yields:
\begin{eqnarray}
| c^{i}_{t} - c^{i}_{0} | &\leq&  \frac{\alpha}{N^{1-\beta}}\int_{0}^{t}\int_{U} | \hat{u}^{N}(s,x) | |\sigma(w^{i}_{s} x+\eta_s^i) | dxds \notag \\
&\leq&  \frac{C}{N^{1-\beta}}\int_{0}^{t}\int_{U} | \hat{u}^{N}(s,x) | dxds \notag \\
&\leq& \frac{C}{N^{1-\beta}}\int_{0}^{t} \bigg{(} \int_{U} | \hat{u}^{N}(s,x) |^2 dx \bigg{)}^{\frac{1}{2}} ds \notag \\
&=& \frac{C}{N^{1-\beta}}\int_{0}^{t}  \norm{ \hat{u}^{N}(s)}_{L^2(U)}  ds.
\label{CboundNewMay}
\end{eqnarray}

Taking expectations of both sides of the above equation leads to the following uniform bound:
\begin{eqnarray}
\sup_{ i \in \{1, \ldots, N \}, t \in [0, T]}  \mathbb{E} | c^{i}_{t} - c^{i}_{0} | &\leq&   \frac{C}{N^{1-\beta}}\int_{0}^{T}  \mathbb{E}  \norm{ \hat{u}^{N}(s)}_{L^2(U)}  ds \notag \\
&\leq& \frac{C}{N^{1 - \beta}}.
\label{UniformCparticleBound}
\end{eqnarray}

Let us now consider the parameters $w^i_t$. Using the uniform bound $\displaystyle \sup_{N \in \mathbb{N}, i \in \{1, \ldots, N \}, t \in [0,T] } |c_t^i | \leq C$ from Lemma \ref{L:C_Bound} and the Cauchy--Schwarz inequality,
\begin{eqnarray}
| w^{i}_{t} -w^{i}_{0} | &\leq&\frac{C}{N^{1-\beta}}\int_{0}^{t}\int_{U}|\hat{u}^{N}(s,x)||c^{i}_{s}|\sigma'(w^{i}_{s} x+\eta_s^i)x|dxds, \notag \\
&\leq& \frac{C}{N^{1-\beta}}\int_{0}^{t}\int_{U}|\hat{u}^{N}(s,x)| dxds \notag \\
&\leq& \frac{C}{N^{1-\beta}}\int_{0}^{t}  \norm{ \hat{u}^{N}(s)}_{L^2(U)}  ds.
\label{WboundNewMay}
\end{eqnarray}

Taking expectations, we have the uniform bound
\begin{eqnarray}
\sup_{i \in \{1, \ldots, N \}, t \in [0, T]}  \mathbb{E} | w^{i}_{t} - w^{i}_{0} | &\leq& \frac{C}{N^{1 - \beta}}.
\label{UniformWparticleBound}
\end{eqnarray}
A similar uniform bound holds for the parameters $\eta_t^i$.

Let $f \in C_b^2(\mathbb{R}^{2+d})$. Then, using the global Lipschitz property of $f$,
\begin{eqnarray}
| \la f, \mu^N_t - \mu_0 \ra | &\leq& | \la f, \mu^N_t - \mu_0^N \ra | + | \la f, \mu^N_0- \mu_0 \ra | \notag \\
&\leq& \frac{C}{N} \sum_{i=1}^N \bigg{(} | w^i_t - w^i_0 | + | c^i_t - c^i_0 | + | \eta^i_t - \eta^i_0 |  \bigg{)}  + | \la f, \mu^N_0- \mu_0 \ra |.\nonumber
\end{eqnarray}

Combining the uniform bounds with the equation above, taking expectations, using Markov's inequality and the fact that $\mu^N_0\rightarrow \mu_0$ as $N\rightarrow \infty$, proves that $\la f, \mu^N_t \ra \overset{p} \rightarrow \la f, \mu_0 \ra$ as $N \rightarrow \infty$ for each $t \geq 0$.

Let us next turn our attention to (\ref{Eq:SystemConvergence}). Due to the boundedness of $U$ and of $\sigma,\sigma'$, we obtain
\begin{align}
\norm{ B(\cdot,\cdot;\mu^{N}_{s})}_{L^{2}(U\times U)}&\leq C\left[1+\frac{1}{N}\sum_{i=1}^{N}\left| c^{i}_{s}\right|^{2}\right]\nonumber
\end{align}

By Lemma \ref{L:C_Bound} we then have that there is a universal constant $C<\infty$ such that
\begin{align}
\sup_{s\in[0,T]}\norm{ B(\cdot,\cdot;\mu^{N}_{s})}_{L^{2}(U\times U)}&\leq C\label{Eq:BoundB}
\end{align}

Let $(u^{\ast},\hat{u}^{\ast})$ be the unique solution to the system given in Theorem \ref{T:WellPosednessLimitSystem}. Let us define
\begin{align}
\phi^N(t,x) &=  u^N(t,x)-u^{\ast}(t,x), \notag \\
\psi^N(t,x) &=  \hat u^N(t,x)-\hat u^{\ast}(t,x), \notag \\
\zeta^N(t,x) &=  g^N_{t}(x) -g_{t}(x) .\nonumber
\end{align}

We recall from (\ref{Eq:Rep_g}) the representation
\begin{align}
g_t^N(x) &= g_0^N(x) -\int_{0}^{t} \left(\int_{U} \hat u^{N}(s,x') B(x,x';\mu^{N}_{s}) dx'\right) ds\label{Eq:g_prelimitRep}
\end{align}

Then, we write
\begin{align}
\zeta^N(t,x) &=  g^N_{t}(x) -g_{t}(x)\nonumber\\
&=\zeta^N(0,x)-\int_{0}^{t} \int_{U} \hat u^{N}(s,x') \left(B(x,x';\mu^{N}_{s})-B(x,x';\mu_{0})\right) dx' ds \nonumber\\
&\quad-\int_{0}^{t} \int_{U} \psi^{N}(s,x') B(x,x'; \mu_{0}) dx' ds\nonumber
\end{align}

By taking $L^{2}$ bounds we obtain
\begin{align}
\norm{\zeta^N(t)}_{L^{2}(U)} &\leq \norm{\zeta^N(0)}_{L^{2}(U)}+  \int_{0}^{t}  \norm{\hat u^{N}(s)}_{L^{2}(U)} \norm{B(\cdot,\cdot;\mu^{N}_{s}-\mu_{0})}_{L^{2}(U\times U)} ds \nonumber\\
&+ \int_{0}^{t} \norm{\psi^{N}(s)}_{L^{2}(U)} \norm{B(\cdot,\cdot;\mu_{0})}_{L^{2}(U\times U)} ds\label{Eq:Zbound}
\end{align}

Standard elliptic PDE estimates based on coercivity, the Cauchy--Schwarz inequality, and taking expectations yields the following bound:
\begin{eqnarray}
\mathbb{E}\norm{\psi^{N}(t)}_{H^{1}_{0}(U)} &\leq& \mathbb{E}\norm{\psi^{N}(t)}_{L^2(U)} \notag \\
&=& \mathbb{E}\norm{  \la \phi^N(t), m \ra^{\top} m }_{L^2(U)} \notag \\
&\leq& C \mathbb{E}\norm{\phi^{N}(t)}_{L^2(U)} \notag \\
&\leq& C \mathbb{E}\norm{\phi^{N}(t)}_{H^1_0(U)} \notag \\
&\leq& C_{0}\mathbb{E}\norm{\zeta^{N}(t)}_{L^2(U)} \notag \\
&\leq& C \bigg{(} \mathbb{E}\norm{\zeta^N(0)}_{L^{2}(U)}+  \int_{0}^{t}  \mathbb{E} \bigg{[} \norm{\hat u^{N}(s)}_{L^{2}(U)} \norm{B(\cdot,\cdot;\mu^{N}_{s}-\mu_{0})}_{L^{2}(U\times U)} \bigg{]} ds \notag \\
&+& \int_{0}^{t} \mathbb{E}\norm{\psi^{N}(s)}_{H^{1}_{0}(U)} \norm{B(\cdot,\cdot;\mu_{0})}_{L^{2}(U\times U)} ds \bigg{)}.
\end{eqnarray}

Let us now examine the second term on the second line of the above equation. By the Cauchy--Schwarz inequality and Lemma \ref{L:UniformBoundedness},
\begin{eqnarray}
 \mathbb{E} \bigg{[} \norm{\hat u^{N}(s)}_{L^{2}(U)} \norm{B(\cdot,\cdot;\mu^{N}_{s}-\mu_{0})}_{L^{2}(U\times U)} \bigg{]} &\leq&  \mathbb{E} \bigg{[} \norm{\hat u^{N}(s)}_{L^{2}(U)}^2 \bigg{]}^{\frac{1}{2}} \mathbb{E} \bigg{[} \norm{B(\cdot,\cdot;\mu^{N}_{s}-\mu_{0})}_{L^{2}(U\times U)}^{2} \bigg{]}^{\frac{1}{2}} \notag \\
 &\leq& C \mathbb{E} \bigg{[} \norm{B(\cdot,\cdot;\mu^{N}_{s}-\mu_{0})}_{L^{2}(U\times U)}^{2} \bigg{]}^{\frac{1}{2}}.\nonumber
\end{eqnarray}
Let $f(x,x', c, w, \eta) = \alpha  \sigma(w x' + \eta) \sigma(w x + \eta)+    \sigma'(w x' + \eta)   \sigma'(w x  + \eta) c^2 (x'x+1)$. Using a Taylor expansion, the fact that $U$ is a bounded set, and the uniform bound on $c_t^i$ produces
\begin{eqnarray}
\norm{B(\cdot,\cdot;\mu^{N}_{s}-\mu_{0})}_{L^{2}(U\times U)}^{2} &=& \int_{U} \int_{U} \la f(x, x', \cdot) , \mu^N_s - \mu_0 \ra^2 dx dx' \notag \\
&=& \int_{U} \int_{U} \la f(x, x', \cdot) , \mu^N_s - \mu_0^N \ra^2 dx dx'  +  \int_{U} \int_{U} \la f(x, x', \cdot) , \mu^N_0 - \mu_0 \ra^2 dx dx'. \notag \\
&=& \int_{U} \int_{U} \frac{C}{N^2} \bigg{(} \sum_{i=1}^N [ | w_s^i - w_0^i | + | c_s^i - c_0^i | + | \eta_s^i - \eta_0^i | ] \bigg{)}^2 dx dx' \notag \\
&+& \int_{U} \int_{U} \bigg{(} \frac{1}{N} \sum_{i=1}^N ( f(x, x', c_0^i, w_0^i, \eta_0^i) - \la f(x, x', \cdot) , \mu_0 \ra \bigg{)}^2.\nonumber
\end{eqnarray}

The expectation of the last term is bounded since $(c_0^i, w_0^i, \eta_0^i)$ are i.i.d.\ samples from $\mu_0$. The Cauchy--Schwarz inequality and the inequalities (\ref{CboundNewMay}) and (\ref{WboundNewMay}) yield:
\begin{align}
\int_{U} \int_{U} \frac{C}{N^2} \bigg{(} \sum_{i=1}^N [ | w_s^i - w_0^i | + | c_s^i - c_0^i | + | \eta_s^i - \eta_0^i | ] \bigg{)}^2 dx dx' &\leq \int_{U} \int_{U} \frac{C}{N^2} \bigg{(} \sum_{i=1}^N \frac{C}{N^{1-\beta}}\int_{0}^{T}  \norm{ \hat{u}^{N}(t)}_{L^2(U)}  dt \bigg{)}^2 dx dx'  \notag \\
&\leq  \frac{C}{N^{2 - 2 \beta}} \times \bigg{(} \int_{0}^{T}  \norm{ \hat{u}^{N}(s)}_{L^2(U)}  ds \bigg{)}^2 \notag \\
&\leq \frac{C}{N^{2 - 2 \beta}} \int_{0}^{T}  \norm{ \hat{u}^{N}(s)}_{L^2(U)}^2  ds.\nonumber
\end{align}

Therefore, we get
\begin{eqnarray}
\mathbb{E} \bigg{[} \norm{B(\cdot,\cdot;\mu^{N}_{s}-\mu_{0})}_{L^{2}(U\times U)}^{2}  \bigg{]} &\leq& \frac{C}{N^{2 - 2\beta}} \int_{0}^{T} \mathbb{E} \bigg{[} \norm{ \hat{u}^{N}(s)}_{L^2(U)}^2 \bigg{]}  ds + \frac{C}{N}.\nonumber
\end{eqnarray}

Combining these estimates, we now have the bound
\begin{eqnarray}
\mathbb{E}\norm{\psi^{N}(t)}_{H^{1}_{0}(U)} &\leq& C \bigg{(} \mathbb{E}\norm{\zeta^N(0)}_{L^{2}(U)}+   \frac{C}{N} + \int_{0}^{t} \mathbb{E}\norm{\psi^{N}(s)}_{H^{1}_{0}(U)} ds \bigg{)}.\nonumber
\end{eqnarray}

This bound along with $\norm{\zeta^N(0)}_{L^{2}(U)}\overset{p} \rightarrow 0$ (recall by (\ref{Eq:Boundg0}) that $ \mathbb{E}\norm{g^{N}_{0}}^{2}_{L^{2}(U)} \leq \frac{C}{N^{2\beta-1}}\rightarrow 0$) allow us to conclude via Gronwall's lemma that
 \[
 \max_{t\in[0,T]}\mathbb{E}\norm{\psi^N(t)}_{H^{1}_{0}(U)}\rightarrow 0, \text{ as }N\rightarrow\infty.
 \]

 The latter then implies that
 \[
 \max_{t\in[0,T]}\mathbb{E}\norm{\zeta^N(t)}_{L^{2}(U)}\rightarrow 0, \text{ as }N\rightarrow\infty.
 \]

Consequently,  elliptic estimates based on the coercivity of the operator $A$ give
 \[
  \max_{t\in[0,T]}\mathbb{E}\norm{\phi^N(t)}_{H^{1}_{0}(U)}\leq C \max_{t\in[0,T]}\mathbb{E}\norm{\zeta^N(t)}_{L^{2}(U)}\rightarrow 0, \text{ as }N\rightarrow\infty,
 \]
 concluding the proof of the theorem.
\end{proof}

\section{Properties of the limiting system and related preliminary calculations}\label{S:PreliminaryCalc}

Let us define the linear operator $T_B: L^2 (U) \rightarrow L^2(U)$ where
\begin{eqnarray}
[T_B u](x) = \int_{U} B(x, x';\mu_{0}) u(x') dx'.\nonumber
\end{eqnarray}

Due to the fact that the kernel $B$ is symmetric and of Hilbert-Schmidt type, we immediately get that the integral operator $T_B$ is a self-adjoint, compact linear operator whose eigenfunctions $\{ e_k \}_{k=1}^{\infty}$ form an orthonormal basis for $L^2(U)$, see for example Chapter 8.5 in \cite{Renardy}.

Recall our objective function $\bar{J}(t) = \frac{1}{2} \la u^{\ast}(t) -h, m \ra^{\top} \la u^{\ast}(t) -h, m \ra$. Differentiating with respect to time yields
\begin{eqnarray}
\frac{d \bar{J}}{d t} &=&  \la u^{\ast}(t) -h, m \ra^{\top} \la \frac{d u^{\ast}}{dt}, m \ra \notag \\
&=& \int_{U}  \frac{d u^{\ast}}{dt}  \la u^{\ast}(t) -h, m \ra^{\top} m(x) dx \notag \\
&=& \int_U \frac{d u^{\ast}}{dt} A^{\dagger} \hat u^{\ast} dx \notag \\
&=& \int_U \frac{d A u^{\ast}}{dt} \hat u^{\ast} dx \notag \\
&=& \int_U \frac{dg}{dt} \hat u^{\ast} dx \notag \\
&=& - \int_U \hat u^{\ast}(x) \int_U B(x, x') \hat u^{\ast}(x') dx' dx \notag \\
&=& - ( \hat u^{\ast}, T_B \hat u^{\ast} ).
\end{eqnarray}

Notice that
\begin{eqnarray}
\frac{1}{2} ( A^{\dagger} \hat u^{\ast}(t), A^{\dagger} \hat u^{\ast}(t) ) &=&  \frac{1}{2} ( \la u^{\ast}(t) -h, m \ra^{\top} m,  \la u^{\ast}(t) -h, m \ra^{\top} m ) \notag \\
&=& \frac{1}{2} \la u^{\ast}(t) -h, m \ra^{\top} ( m,  m^{\top} )  \la u^{\ast}(t) -h, m \ra.
\end{eqnarray}

Differentiating with respect to time yields
\begin{eqnarray}
\frac{d}{dt} ( A^{\dagger} \hat u^{\ast}(t), A^{\dagger} \hat u^{\ast}(t) ) &=& \frac{d}{dt} \la u^{\ast}(t) -h, m \ra^{\top} ( m,  m^{\top} )  \la u^{\ast}(t) -h, m \ra.
\end{eqnarray}

By Assumption \ref{A:AssumptionTargetProfile}, we get due to the orthonormal assumption on the elements $m_\ell$ of the vector $\{m_{\ell}\}_{\ell=1}^{L}$
\begin{eqnarray}
\frac{d}{dt} ( A^{\dagger} \hat u^{\ast}(t), A^{\dagger} \hat u^{\ast}(t) ) &=& \frac{d}{dt} \la u^{\ast}(t) -h, m \ra^{\top} I  \la u^{\ast}(t) -h, m \ra \notag \\
&=&  \frac{d \bar{J}}{dt} \notag \\
&=& - ( \hat u^{\ast}, T_B \hat u^{\ast} )\notag\\
&\leq & 0
\end{eqnarray}

Therefore, for all $t\geq 0$ we will have
\begin{eqnarray}
\frac{1}{2} ( A^{\dagger} \hat u^{\ast}(t), A^{\dagger} \hat u^{\ast}(t) )  &=& \frac{1}{2} ( A^{\dagger} \hat u^{\ast}(0), A^{\dagger} \hat u^{\ast}(0) )  - \int_0^t (  \hat u^{\ast}(s),  [T_{B} \hat u^{\ast}](s)   ) ds.
\label{AdjointInequalityNN}
\end{eqnarray}

Then, using Theorem \ref{T:WellPosednessLimitSystem} and Assumption
\ref{A:AssumptionTargetProfile} in order for the boundary terms to vanish, and then  that assuming $A A^{\dagger}$ is coercive, we obtain
\begin{eqnarray}
( A^{\dagger} \hat u^{\ast}(t), A^{\dagger} \hat u^{\ast}(t) )  &=&  ( \hat u^{\ast}(t), A A^{\dagger} \hat u^{\ast}(t) )  \notag \\
&\geq& \gamma (\hat u^{\ast}(t), \hat u^{\ast}(t) ),\nonumber
\end{eqnarray}
where $\gamma > 0$.

Note that this establishes a uniform bound for $\norm{ \hat u^{\ast}(t) }^2$.
\begin{eqnarray*}
(\hat u^{\ast}(t), \hat u^{\ast}(t) ) &\leq& \frac{1}{\gamma} ( A^{\dagger} \hat u^{\ast}(t), A^{\dagger} \hat u^{\ast}(t) )  \notag \\
&\leq&  \frac{1}{\gamma} ( A^{\dagger} \hat u^{\ast}(0), A^{\dagger} \hat u^{\ast}(0) ) \notag \\
&\leq& C < \infty.
\end{eqnarray*}

Note that the constant $C$ does not depend upon time, and therefore
\begin{eqnarray}
\sup_{ t \geq 0} \norm{ \hat  u^{\ast}(t)} < C.
\label{UhatUniformBound}
\end{eqnarray}

The coercivity condition for $A A^{\dagger}$ holds if $A$ is self-adjoint. Indeed, let us use the representation
\begin{eqnarray}
\hat u^{\ast}(t,x) = \sum_{k=1}^{\infty} c_k(t) v_k(x),\nonumber
\end{eqnarray}
where $v_k \in H_0^1(U)$ are the eigenfunctions of $A$ and form an orthonormal basis of $L^2(U)$ \cite{Evans}. We know that the corresponding eigenvalues $\lambda_k \geq \lambda  > 0$. We will prove the coercivity condition when $A$ is self-adjoint, i.e. $A^{\dagger} = A$. Define
\begin{eqnarray}
\hat u_K^{\ast}(t,x)  = \displaystyle \sum_{k=1}^{K} c_k(t) v_k(x).\nonumber
\end{eqnarray}
Then,
\begin{eqnarray}
 \lim_{K \rightarrow \infty} (  \hat u^{\ast},  A A^{\dagger} \hat u_K^{\ast} ) &=&  \lim_{K \rightarrow \infty} ( AA^{\dagger} \hat u^{\ast}, \hat u_K^{\ast} ).
 \label{Kequality1a}
\end{eqnarray}

Using the Cauchy--Schwarz inequality,
\begin{eqnarray}
| ( AA^{\dagger} \hat u^{\ast}, \hat u_K^{\ast} ) - ( AA^{\dagger} \hat u^{\ast}, \hat u^{\ast} ) | &\leq& | ( |AA^{\dagger} \hat u^{\ast} |, | \hat u_K^{\ast}  - \hat u^{\ast} | ) \notag \\
&\leq&  \norm{ AA^{\dagger} \hat u^{\ast} } \norm{ \hat u_K^{\ast}  - \hat u^{\ast} }.\nonumber
\end{eqnarray}

Note that
\begin{eqnarray}
A A^{\dagger} \hat u^{\ast} &=& A (\la  u - h,m\ra)^{\top} m \notag
\end{eqnarray}
Consequently, $A A^{\dagger} \hat u^{\ast}$ is in $L^2$ as long as $A m_{\ell}$ are in $L^2$ (which is assumed in Assumption \ref{A:AssumptionTargetProfile}).

Since $\displaystyle \lim_{K \rightarrow \infty} \norm{ \hat u_K^{\ast}  - \hat u^{\ast} }_{L^{2}(U)} = 0$,
\begin{eqnarray}
\lim_{K \rightarrow \infty} | ( AA^{\dagger} \hat u^{\ast}, \hat u_K^{\ast} ) - ( AA^{\dagger} \hat u^{\ast}, \hat u^{\ast} ) | = 0.\nonumber
\end{eqnarray}

Therefore,
\begin{eqnarray}
 \lim_{K \rightarrow \infty} (  \hat u^{\ast},  A A^{\dagger} \hat u_K^{\ast} ) &=&  \lim_{K \rightarrow \infty} ( AA^{\dagger} \hat u^{\ast}, \hat u_K^{\ast} )  \notag \\
 &=& ( AA^{\dagger} \hat u^{\ast}, \hat u^{\ast} ).
 \label{Kequality1}
\end{eqnarray}

On the other hand,
\begin{eqnarray}
(  \hat u^{\ast}(t),  A A^{\dagger}  \hat u_K^{\ast}(t) )  &=& (  \hat u^{\ast}(t),  A A^{\dagger} \sum_{k=1}^{K} c_k(t) v_k(x) ) \notag \\
&=& (  \hat u^{\ast}(t),   \sum_{k=1}^{K} c_k (t) \lambda_k^2 v_k(x) ) \notag \\
&=& (  \sum_{k=1}^{\infty} c_k (t) v_k(x) ,   \sum_{k=1}^{K} c_k (t)\lambda_k^2 v_k(x) ) \notag \\
&=&  \sum_{k=1}^{K} c_k^2 (t)\lambda_k^2.\nonumber
\end{eqnarray}

From equation (\ref{Kequality1}), we know that
\begin{eqnarray}
\lim_{K \rightarrow \infty}  (  \hat u^{\ast}(t),  A A^{\dagger}  \hat u_K^{\ast}(t) ) = (  \hat u^{\ast}(t), AA^{\dagger} \hat u^{\ast}(t) ).\nonumber
\end{eqnarray}

Since it is also true that $(  \hat u^{\ast}(t),  A A^{\dagger}  \hat u_K^{\ast} (t)) = \displaystyle \sum_{k=1}^{K} c_k^2(t) \lambda_k^2$,
\begin{eqnarray}
\lim_{K \rightarrow \infty} \sum_{k=1}^{K} c_k^2 (t) \lambda_k^2  = (  \hat u^{\ast} (t), AA^{\dagger} \hat u^{\ast} (t)).\nonumber
\end{eqnarray}

Consequently, since $\lambda_k \geq \lambda$,
\begin{eqnarray}
 (  \hat u^{\ast} (t), AA^{\dagger} \hat u^{\ast} (t)) \geq \lambda^2 \sum_{k=1}^{\infty} c_k^2 (t).\nonumber
\end{eqnarray}

Finally, we know that
\begin{eqnarray}
( \hat u^{\ast} (t), \hat u^{\ast} (t)) &=& ( \sum_{k=1}^{\infty} c_k (t) v_k(x), \sum_{k=1}^{\infty} c_k (t) v_k(x) ) \notag \\
&=&   \sum_{k=1}^{\infty} c_k^2 (t).\nonumber
\end{eqnarray}

Therefore,
\begin{eqnarray}
 (  \hat u^{\ast} (t), AA^{\dagger} \hat u^{\ast} (t)) \geq \lambda^2 ( \hat u^{\ast} (t), \hat u^{\ast} (t) ),\nonumber
\end{eqnarray}
which proves the coercivity condition with $\gamma = \lambda^2$ when $A$ is self-adjoint.

Therefore, using the coercivity property of $A A^{\dagger}$,
\begin{eqnarray}
(\hat u^{\ast}(t), \hat u^{\ast}(t) ) &\leq& \frac{2}{ \gamma} \frac{1}{2} ( A^{\dagger} \hat u^{\ast}(t), A^{\dagger} \hat u^{\ast}(t) )   \notag \\
&\leq& \frac{1}{\gamma} ( A^{\dagger} \hat u^{\ast}(0), A^{\dagger} \hat u^{\ast}(0) )  - \frac{1}{\gamma} \int_0^t (  \hat u^{\ast}(s),B  \hat u^{\ast}(s)   ) ds.
\label{InequalityHatU}
\end{eqnarray}

Recall now the linear operator $T_B: L^2(U) \rightarrow L^2(U)$ and let $e_k$ and $\lambda_k$ be the eigenfunctions and eigenvalues, respectively, of the operator $T_B$.

We will next prove that $\lambda_k > 0$. Consider a function $u$ which is non-zero everywhere (i.e., there is a set with positive Lebesgue measure where the function is non-zero). Then,
\begin{eqnarray}
(u, [T_B u]) &=& \int_{U} \int_{U} u(x) B(x,y;\mu_{0}) u(y) dx dy \notag \\
 &=&  \int_{\Theta} \bigg{(} \int_{U} u(x) \sigma(wx+\eta) dx \bigg{)}^2 \mu_{0}(dcdwd\eta) +\int_{\Theta} \left(\int_{U}    u(x)c \sigma'(w x+\eta)x dx\right)^{2} \mu_{0}(dcdwd\eta)\nonumber\\
 & &+\int_{\Theta} \left(\int_{U}    u(x)c \sigma'(w x+\eta) dx\right)^{2} \mu_{0}(dcdwd\eta)\nonumber
\end{eqnarray}
where  $\mu_{0}(dcdwd\eta)$ assigns positive probability to every set with positive Lebesgue measure.

It is clear now that $(u, [T_B u]) \geq 0$. Suppose $(u, [T_B u])  = 0$. Notice that by \cite{Hornik2}, the fact that $\sigma$ is bounded and non-constant by Assumption \ref{A:Assumption1} implies that $\sigma$ is also discriminatory in the sense of \cite{Cybenko,Hornik2}. Namely, if we have that
\[
\int_{U}u(x)\sigma(w x+\eta)\pi(dx)=0 \text{  for all  }(w,\eta) \in \mathbb{R}^{d+1},
\]
then $u(x)=0$  in $U$. Then, $g(w,\eta) = \int_{U} u(x) \sigma(wx+\eta) dx = 0$ almost everywhere in $\mathbb{R}^{d+1}$ and by continuity and the fact that $\sigma$ is also discriminatory we get that $g(w,\eta)=0$ for all $(w,\eta)\in\mathbb{R}^{d+1}$. Then, $u(x) = 0$  everywhere in $U$ by \cite{Cybenko} . However, this is a contradiction, since $u$ is non-zero everywhere. Therefore, if $u$ is non-zero everywhere,
\begin{eqnarray}
(u, [T_B u]) > 0.
\label{Positivity1}
\end{eqnarray}

Let us now consider the eigenfunction $e_k(x)$ with eigenvalue $\lambda_k$.
\begin{eqnarray}
( e_k, [T_B e_k]) &=& (e_k, \lambda_k e_k) \notag \\
&=& \lambda_k (e_k, e_k) \notag \\
&=& \lambda_k.\nonumber
\end{eqnarray}

Equation (\ref{Positivity1}) implies that
\begin{eqnarray}
( e_k, [T_B e_k])  > 0.\nonumber
\end{eqnarray}

Therefore, we have indeed obtained that $\lambda_k > 0$. In addition, we can prove that $\lambda_k < C_{\sigma} < \infty$, i.e. the eigenvalues are upper bounded.
\begin{eqnarray}
(u, [T_B u]) &=&  \int_{\Theta} \bigg{(} \int_{U} u(x) \sigma(wx+\eta) dx \bigg{)}^2 \mu_{0}(dcdwd\eta) +\int_{\Theta} \left(\int_{U}    u(x)c \sigma'(w x+\eta)x dx\right)^{2} \mu_{0}(dcdwd\eta) \notag \\
& &+\int_{\Theta} \left(\int_{U}    u(x)c \sigma'(w x+\eta) dx\right)^{2} \mu_{0}(dcdwd\eta)\notag\\
&\leq&  \int_{\Theta} \norm{u}^{2}_{L^{2}(U)} \bigg{(} \int_{U} |\sigma(wx+\eta)|^{2} dx \bigg{)} \mu_{0}(dcdwd\eta)+ \int_{\Theta} \norm{u}^{2}_{L^{2}(U)}  \bigg{(} \int_{U} \left|c \sigma'(w x+\eta)x\right|^{2} dx \bigg{)} \mu_{0}(dcdwd\eta) \notag \\
& &+ \int_{\Theta} \norm{u}^{2}_{L^{2}(U)}  \bigg{(} \int_{U} \left|c \sigma'(w x+\eta)\right|^{2} dx \bigg{)} \mu_{0}(dcdwd\eta) \notag \\
&\leq& \int_{\Theta}\left(\int_{U} |\sigma(wx+\eta)|^{2} dx+\int_{U} \left|c \sigma'(w x+\eta)x\right|^{2} dx+\int_{U} \left|c \sigma'(w x+\eta)\right|^{2} dx\right)\mu_{0}(dcdwd\eta) \times  \norm{u}^{2}_{L^{2}(U)} \notag \\
&=& C_{\sigma} \norm{u}^2_{L^{2}(U)},\nonumber
\end{eqnarray}

Since $\sigma(\cdot)$ and $U$ are bounded, we will have that $C_{\sigma} < \infty$. Under this assumption,
\begin{eqnarray}
C_{\sigma} &\geq& ( e_k, [T_B e_k]) \notag \\
&=& \lambda_k.\nonumber
\end{eqnarray}

Let $\hat u(t,x) = \displaystyle \sum_{k=1}^{\infty} c_k(t) e_k(x)$ where $e_k$ are the eigenvectors of the operator $T_B$ (which form an orthonormal basis in $L^2$). The inequality (\ref{InequalityHatU}) produces
\begin{eqnarray}
\sum_{k=1}^{\infty} c_k(t)^2 \leq C_0 - \frac{1}{\gamma}  \sum_{i=1}^{\infty} \int_0^t \lambda_i c_i(s)^2 ds,
\label{InequalityHatU2}
\end{eqnarray}
where $C_0 =  \frac{1}{\gamma} ( A^{\dagger} \hat u^{\ast}(0), A^{\dagger} \hat u^{\ast}(0) )$. Therefore,
\begin{eqnarray}
c_k(t)^2 &\leq& C_0 - \frac{1}{\gamma}  \sum_{i=1}^{\infty} \int_0^t \lambda_i c_i(s)^2 ds \notag \\
&\leq& C_0 -  \frac{\lambda_k}{\gamma} \int_0^t c_k(s)^2 ds.\nonumber
\end{eqnarray}

By Gronwall's inequality,
\begin{align}
\int_0^t c_k(s)^2 ds &\leq C_0 \frac{\gamma}{\lambda_k}\left(1- e^{- \frac{\lambda_k}{\gamma} t }\right).\label{Eq:ControlIntegral0}
\end{align}

Therefore, for any $\epsilon > 0$ and any $t>0$, there exists a $\tau > t$ such that $| c_k(\tau) | < \epsilon$. We can use this fact to prove the following lemma.

\begin{lemma} \label{WeakConvergenceHatULemma}
For any function $f \in L^2$, any $\epsilon > 0$, and any $t >0$, there exists a time $\tau > t$ such that
\begin{eqnarray}
| (f, \hat u^{\ast}(\tau) )| < \epsilon.
\label{WeakConvergenceHatU}
\end{eqnarray}
\end{lemma}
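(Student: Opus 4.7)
The plan is to expand both $f$ and $\hat u^{\ast}(\tau)$ in the $T_B$-eigenbasis $\{e_k\}_{k\geq 1}$, which is an orthonormal basis of $L^2(U)$, and to combine two facts already established earlier in this section: the uniform bound $\sup_{\tau \geq 0}\norm{\hat u^{\ast}(\tau)}_{L^2(U)} \leq C$ from (\ref{UhatUniformBound}), and the time integrability of each Fourier coefficient $c_k(\cdot) = (\hat u^{\ast}(\cdot), e_k)$ implicit in (\ref{Eq:ControlIntegral0}), which in particular gives $\int_t^\infty c_k(s)^2\, ds < \infty$ for each $k$.

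Concretely, I would write $f = \sum_k f_k e_k$ with $\sum_k f_k^2 = \norm{f}_{L^2(U)}^2 < \infty$ and split
\begin{equation*}
(f, \hat u^{\ast}(\tau)) \;=\; \sum_{k=1}^K f_k c_k(\tau) \;+\; \sum_{k > K} f_k c_k(\tau),
\end{equation*}
for a truncation level $K$ to be chosen. Applying Cauchy--Schwarz together with the uniform bound,
\begin{equation*}
\Bigl|\sum_{k > K} f_k c_k(\tau)\Bigr| \;\leq\; \Bigl(\sum_{k > K} f_k^2\Bigr)^{1/2} \norm{\hat u^{\ast}(\tau)}_{L^2(U)} \;\leq\; C \Bigl(\sum_{k > K} f_k^2\Bigr)^{1/2},
\end{equation*}
uniformly in $\tau \geq 0$, so choosing $K$ large enough makes this tail strictly smaller than $\epsilon/2$.

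For the remaining (now finite) head, set $F(s) = \sum_{k=1}^K c_k(s)^2$. By (\ref{Eq:ControlIntegral0}) we have $\int_t^\infty F(s)\, ds < \infty$, hence for any $\delta > 0$ the set $\{s > t : F(s) \geq \delta\}$ has finite Lebesgue measure, while $(t, \infty)$ has infinite measure. Thus a $\tau > t$ with $F(\tau) < \delta$ exists; taking $\delta = (\epsilon/(2\norm{f}_{L^2(U)}))^2$ and applying Cauchy--Schwarz to the finite sum gives $|\sum_{k=1}^K f_k c_k(\tau)| \leq \norm{f}_{L^2(U)} F(\tau)^{1/2} < \epsilon/2$. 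Adding the two estimates yields $|(f, \hat u^{\ast}(\tau))| < \epsilon$.

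The delicate point is coordinating a single time $\tau$ across infinitely many modes: the modewise statement ``for each $k$ there exists $\tau_k > t$ with $|c_k(\tau_k)|$ small'' does not by itself deliver a common $\tau$, and because the eigenvalues $\lambda_k$ of $T_B$ carry no spectral gap in general, one cannot hope for exponential-in-$k$ decay to automate a diagonal argument. The truncation above circumvents this obstacle by exploiting that the uniform $L^2$-bound on $\hat u^{\ast}(\tau)$ handles the tail for all $\tau$ at once, leaving only finitely many modes to be controlled simultaneously --- a task carried out by a single Lebesgue measure-theoretic selection.
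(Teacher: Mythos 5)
Your proof is correct and follows essentially the same route as the paper's: expand $f$ in the $T_B$-eigenbasis, control the tail uniformly in time via the bound (\ref{UhatUniformBound}), and select a late time at which the finitely many head coefficients are simultaneously small. Your measure-theoretic selection of a single $\tau$ from the integrability of $F(s)=\sum_{k\leq K} c_k(s)^2$ makes explicit a step the paper only asserts (a common $\tau$ for all modes $k\leq M$), and your Cauchy--Schwarz bound on the head is marginally cleaner than the paper's $(1+b_k^2)$ estimate.
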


\begin{proof}
Let $e_k$ be the orthonormal basis corresponding to the eigenfunctions of the operator $T_B$. Since $f \in L^2$, it can be represented as
\begin{eqnarray}
f(x) = \sum_{k=1}^{\infty} b_k e_k(x).\nonumber
\end{eqnarray}

For any $\epsilon > 0$, there exists an $M$ such that
\begin{eqnarray}
 \sum_{k=M}^{\infty} b_k^2  < \frac{ \epsilon^2}{2 \norm{f}^2}.\nonumber
\end{eqnarray}

Then,
\begin{eqnarray}
(f, \hat u^{\ast}(t) ) &=& (  \sum_{k=1}^{M} b_k e_k, \hat u^{\ast}(t) ) + (  \sum_{k=M+1}^{\infty} b_k e_k, \hat u^{\ast}(t) ).\nonumber
\end{eqnarray}

Using the Cauchy--Schwarz inequality and the uniform bound (\ref{UhatUniformBound}),
\begin{eqnarray}
\sup_{t \geq 0} \bigg{|} (  \sum_{k=M+1}^{\infty} b_k e_k, \hat u^{\ast}(t) ) \bigg{|} &\leq& \sup_{t \geq 0}  \bigg{(} \sum_{k=M+1}^{\infty} b_k^2 \bigg{)}^{\frac{1}{2}} (\hat u^{\ast}(t), \hat u^{\ast}(t) )^{\frac{1}{2}} \notag \\
&<& \frac{\epsilon}{2}.\nonumber
\end{eqnarray}

For any $t$, there exists a $\tau > t$ such that $| c_k(\tau) | <  \frac{\epsilon}{2M}$. Consequently,
\begin{eqnarray}
 (  \sum_{k=1}^{M} b_k e_k, \hat u^{\ast}(\tau) ) \bigg{|} &\leq&  \sum_{k=1}^{M} | b_k c_k(\tau) | \notag \\
&\leq&  \sum_{k=1}^{M} (1 +  b_k^2) | c_k(\tau) | \notag \\
&\leq&  (1 + \norm{f}^2 ) \sum_{k=1}^M | c_k(\tau) | \notag \\
&<& \frac{\epsilon}{2}.
\end{eqnarray}

Therefore, for any $t > 0$, there exists a $\tau > t$ such that
\begin{eqnarray}
\bigg{|} (f, \hat u^{\ast}(\tau) )  \bigg{|} < \epsilon.
\end{eqnarray}

\end{proof}

Recall the adjoint equation is
\begin{eqnarray}
A^{\dagger} \hat u^{\ast}  &=& (\la u^{\ast} - h,m\ra)^{\top}m.
\label{AdjointRepeat}
\end{eqnarray}

Multiplying by the test function $f$ from the set $\mathcal{F} = \{f \in H_0^1(U): A f \in L^2 \}$ and taking an inner product,
\begin{eqnarray}
(f, A^{\dagger} \hat u^{\ast} ) &=& (f, (\la u^{\ast} - h,m\ra)^{\top}m).\nonumber
\end{eqnarray}

This leads to the equation
\begin{eqnarray}
(A f,  \hat u^{\ast} ) &=& (f, (\la u^{\ast} - h,m\ra)^{\top}m).\nonumber
\end{eqnarray}

Therefore, due to Lemma \ref{WeakConvergenceHatULemma}, we have the following proposition.
\begin{proposition}
For any $\epsilon > 0$, any $f \in \mathcal{F}$, and any $ t > 0$, there exists a time $\tau > t$ such that
\begin{eqnarray}
| (f, (\la u^{\ast}(\tau) - h,m\ra)^{\top}m) | < \epsilon.
\label{EpsilonBound}
\end{eqnarray}
\end{proposition}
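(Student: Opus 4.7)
The plan is to derive the proposition immediately from Lemma \ref{WeakConvergenceHatULemma} by integrating by parts against the adjoint equation for $\hat u^{\ast}$. Given $f \in \mathcal{F}$, I would start by pairing $f$ with both sides of the limit adjoint PDE (\ref{AdjointRepeat}) evaluated at time $\tau$:
\begin{equation*}
(f, A^{\dagger}\hat u^{\ast}(\tau)) = \bigl(f, (\la u^{\ast}(\tau) - h, m\ra)^{\top} m\bigr).
\end{equation*}
Since $A$ is self-adjoint by Assumption \ref{A:Assumption0}, and since $f \in H^{1}_0(U)$ with $Af \in L^{2}(U)$ while $\hat u^{\ast}(\tau) \in H^{1}_0(U)$ by Theorem \ref{T:WellPosednessLimitSystem}, the boundary terms in integration by parts vanish and I can shift the operator to obtain
\begin{equation*}
(Af, \hat u^{\ast}(\tau)) = \bigl(f, (\la u^{\ast}(\tau) - h, m\ra)^{\top} m\bigr).
\end{equation*}

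At this point the right-hand side is the quantity to be controlled and the left-hand side is exactly a pairing of a fixed $L^2$ function against $\hat u^{\ast}(\tau)$. Setting $\phi := Af \in L^{2}(U)$, I would invoke Lemma \ref{WeakConvergenceHatULemma} with this particular $\phi$: for the prescribed $\epsilon > 0$ and any $t > 0$, the lemma furnishes some $\tau > t$ such that $|(\phi, \hat u^{\ast}(\tau))| < \epsilon$. Substituting back gives $|(f, (\la u^{\ast}(\tau) - h, m\ra)^{\top} m)| < \epsilon$, which is the claim.

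The only step that requires any real care is the integration by parts, since this is the point where the membership $f \in \mathcal{F}$ (rather than merely $f \in L^2$) is essential: we need $Af$ to belong to $L^2(U)$ in order to apply Lemma \ref{WeakConvergenceHatULemma}, and we need $f \in H^1_0(U)$ so that the boundary trace vanishes and one can pass $A^{\dagger}$ over without picking up a surface contribution against $\hat u^{\ast}$. This is standard, but worth spelling out: one can first verify the identity on $C^{\infty}_c(U)$ test functions using Green's formula, and then extend by density together with the $H^1_0 \times H^1_0$ continuity of the bilinear form associated with $A$. Once this identity is in hand, the rest of the proof is purely algebraic and the conclusion follows in one line from the weak-convergence-along-a-subsequence result of Lemma \ref{WeakConvergenceHatULemma}.
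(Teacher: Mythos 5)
Your proposal is correct and follows exactly the paper's argument: the paper likewise pairs $f\in\mathcal{F}$ with the adjoint equation, moves the operator across to obtain $(Af,\hat u^{\ast})=(f,(\la u^{\ast}-h,m\ra)^{\top}m)$, and concludes by applying Lemma \ref{WeakConvergenceHatULemma} to $Af\in L^{2}(U)$. Your additional care about the vanishing boundary terms and the role of $f\in H^1_0(U)$ is a welcome elaboration of a step the paper leaves implicit, but the route is the same.
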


Let $f = m_{\ell}$ in equation (\ref{EpsilonBound}). Then, for any $\delta > 0$ and any $ t > 0$, there exists a time $\tau > t$ such that
\begin{eqnarray}
| \la u^{\ast}(\tau) - h,m_{\ell} \ra | < \delta.
\label{DeltaBound}
\end{eqnarray}
Consequently, for any $\epsilon > 0$ and any $ t > 0$, there exists a time $\tau > t$ such that
\begin{eqnarray}
\la u^{\ast}(\tau) - h,m \ra^{\top} \la u^{\ast}(\tau) - h,m \ra < \epsilon.
\label{EpsilonBound2}
\end{eqnarray}

\section{Global Convergence of Pre-limit Objective Function} \label{GlobalConvergencePrelimit}

Define the pre-limit objective error function and limit error functions:
\begin{eqnarray}
Q^N(t) &=& ( \la  u^N(t) - h, m \ra^{\top} \la  u^N(t) - h, m \ra )^{\frac{1}{2}}, \notag \\
Q(t) &=& ( \la  u^{\ast}(t) - h, m \ra^{\top} \la  u^{\ast}(t) - h, m \ra )^{\frac{1}{2}}.
\end{eqnarray}

We can now prove the global convergence of the objective function $J^N(t)$.
\begin{proof}[Proof of Theorem \ref{T:PrelimitGlobalMinimumConvTheorem}]

Due to equation (\ref{EpsilonBound2}), for any $\epsilon > 0$,  there exists a time $\tau> 0$ such that
\begin{eqnarray}
Q(\tau) \leq \frac{\epsilon}{4}.
\end{eqnarray}

Due to Theorem \ref{T:ConvergenceNTheorem}, for any $\kappa > 0$, there exists an $N_0$ such that, for $N \geq N_0$,
\begin{eqnarray}
\mathbb{P} \bigg{[} \norm{ u^{\ast}(\tau) - u^N(\tau) } < \frac{\epsilon}{4 (m^{\top},m)^{\frac{1}{2}} } \bigg{]}  > 1- \kappa.\nonumber
\end{eqnarray}

We will select the following choice for $\kappa$:
\begin{eqnarray}
\kappa = \frac{\epsilon^2}{4 C_J},\nonumber
\end{eqnarray}
where $ \mathbb{E} \bigg{[}  (Q^N(0) )^2 \bigg{]}^{\frac{1}{2} } < C_J$. Let us take a moment to show this term is indeed uniformly bounded. Denoting with $(\cdot,\cdot)$ the usual inner product in $L^{2}$ and using the Cauchy--Schwarz inequality, Young's inequality, and a-priori elliptic PDE bounds based on the coercivity Assumption \ref{A:Assumption0}, we get that there is a constant $C<\infty$ that does not depend on $N$ such that
\begin{eqnarray}
\mathbb{E} \bigg{[}  (Q^N(0) )^2 \bigg{]} &=& \mathbb{E} \bigg{[} \la  u^N(0) - h, m \ra^{\top} \la  u^N(0) - h, m \ra  \bigg{]} \notag \\
&\leq& ( m^{\top}, m ) \mathbb{E} \bigg{[} \la  u^N(0) - h,  u^N(0) - h \ra \bigg{]} \notag \\
&=&  ( m^{\top}, m ) \mathbb{E} \bigg{[}  (u^N(0), u^N(0) ) -2 (h,  u^N(0) ) + ( h, h)  \bigg{]} \notag \\
&\leq& ( m^{\top}, m ) \mathbb{E} \bigg{[}  (u^N(0), u^N(0) ) -2 (h, h)^{\frac{1}{2}}  (u^N(0), u^N(0) )^{\frac{1}{2}} + ( h, h)  \bigg{]} \notag \\
&\leq&  ( m^{\top}, m ) \mathbb{E} \bigg{[}  2 (u^N(0), u^N(0) )  +    2( h, h)  \bigg{]} \notag \\
&\leq&  ( m^{\top}, m ) \mathbb{E} \bigg{[}  2 C (g^N_{0}, g^N_{0} )  +    2( h, h)  \bigg{]}.
\end{eqnarray}

By (\ref{Eq:Boundg0}) in the proof of Lemma \ref{L:Bound_g} we have that $\mathbb{E} \big{[}  \norm{g^N_{0}}^2 \big{]}$ is bounded. Therefore, we indeed have that
\begin{eqnarray}
\mathbb{E} \bigg{[}  (Q^N(0) )^2 \bigg{]} < C_J < \infty.\nonumber
\end{eqnarray}

By the triangle inequality and the Cauchy--Schwarz inequality,
\begin{eqnarray}
Q^N(\tau) &=& ( \la  u^N(\tau) - h, m \ra^{\top} \la  u^N(\tau) - h, m \ra )^{\frac{1}{2}} \notag \\
&=& ( \la  u^N(\tau)  -u^{\ast}(\tau) + u^{\ast}(\tau) - h, m \ra^{\top} \la  u^N(\tau)  -u^{\ast}(\tau) + u^{\ast}(\tau) - h, m \ra )^{\frac{1}{2}} \notag \\
&=& \norm{ \la  u^N(\tau)  -u^{\ast}(\tau) + u^{\ast}(\tau) - h, m \ra }_2 \notag \\
&=& \norm{ \la  u^N(\tau)  -u^{\ast}(\tau), m \ra + \la  u^{\ast}(\tau) - h, m \ra }_2 \notag \\
&\leq& \norm{ \la  u^N(\tau)  -u^{\ast}(\tau), m \ra}_2+ \norm{\la  u^{\ast}(\tau) - h, m \ra }_2 \notag \\
&=&   (m^{\top},m)^{\frac{1}{2}} \norm{  u^N(\tau)  - u^{\ast}(\tau) } + \norm{\la  u^{\ast}(\tau) - h, m \ra } \notag \\
&\leq& (m^{\top},m)^{\frac{1}{2}}  \norm{u^N(\tau) - u^{\ast}(\tau) } + \frac{\epsilon}{4}.
\end{eqnarray}

Consequently, for $N \geq N_0$,
\begin{eqnarray}
\mathbb{P} \bigg{[} Q^N(\tau)  < \frac{\epsilon}{2} \bigg{]}  > 1- \kappa.\nonumber
\end{eqnarray}

Recall that $J^N(t) =\frac{1}{2} \la  u^N(t) - h, m \ra^{\top} \la  u^N(t) - h, m \ra $ is monotonically decreasing in $t$. Consequently, $Q^N(t) = ( \la  u^N(t) - h, m \ra^{\top} \la  u^N(t) - h, m \ra )^{\frac{1}{2}}$ is also monotonically decreasing in $t$. Therefore, for $t \geq \tau$ and $N \geq N_0$,

\begin{eqnarray}
\mathbb{P} \bigg{[} Q^N(t)  < \frac{\epsilon}{2} \bigg{]}  &=& \mathbb{E} \bigg{[}  \mathbf{1}_{ Q^N(t) < \frac{\epsilon}{2} }  \bigg{]} \notag \\
&\geq& \mathbb{E} \bigg{[}  \mathbf{1}_{ Q^N(\tau) < \frac{\epsilon}{2} }  \bigg{]} \notag \\
&=& \mathbb{P} \bigg{[} Q^N(\tau)  < \frac{\epsilon}{2} \bigg{]} \notag \\
&>& 1 - \kappa.\nonumber
\end{eqnarray}

Then, for $t \geq \tau$ and $N \geq N_0$,
\begin{eqnarray}
\mathbb{E}[ Q^N(t) ]  &\leq& \mathbb{E} \bigg{[} Q^N(t) \mathbf{1}_{  Q^N(t)  < \frac{\epsilon}{2}} +  Q^N(t) \mathbf{1}_{  Q^N(t)  \geq \frac{\epsilon}{2}}  \bigg{]}  \notag \\
&\leq& \frac{\epsilon}{2} \mathbb{E} \bigg{[}  \mathbf{1}_{  Q^N(t)  < \frac{\epsilon}{2}} \bigg{]} + \mathbb{E} \bigg{[}  Q^N(t) \mathbf{1}_{  Q^N(t)  \geq \frac{\epsilon}{2}}  \bigg{]}  \notag \\
&\leq&  \frac{\epsilon}{2} + \mathbb{E} \bigg{[}  Q^N(0) \mathbf{1}_{  Q^N(t)  \geq \frac{\epsilon}{2}}  \bigg{]}  \notag \\
&\leq&  \frac{\epsilon}{2}+  \mathbb{E} \bigg{[}  (Q^N(0) )^2 \bigg{]}^{\frac{1}{2} } \mathbb{E} \bigg{[} \mathbf{1}_{  Q^N(t)  \geq \frac{\epsilon}{2}} \bigg{]}^{ \frac{1}{2} } \notag \\
&\leq&  \frac{\epsilon}{2} +  \mathbb{E} \bigg{[}  (Q^N(0) )^2 \bigg{]}^{\frac{1}{2} } \mathbb{P} \bigg{[} Q^N(t)  \geq \frac{\epsilon}{2} \bigg{]}^{ \frac{1}{2} } \notag \\
&\leq&  \frac{\epsilon}{2}  + \sqrt{C_J } \sqrt{\kappa}.\nonumber
\end{eqnarray}

Since we have selected $\kappa = \frac{\epsilon^2}{4 C_J }$,
\begin{eqnarray}
\sup_{t \geq \tau, N \geq N_0} \mathbb{E}[ Q^N(t) ]  &\leq&   \frac{\epsilon}{2}  + \frac{\epsilon}{2} = \epsilon,\nonumber
\end{eqnarray}
concluding the proof of the theorem.

\end{proof}

\section{Extensions}\label{S:Extensions}
In this work we studied global convergence for the optimization problem associated to the cost function given by (\ref{ObjectivePDEIntro}). An interesting direction is to consider the same problem but for the cost function
\begin{eqnarray}
J^N(\theta) = \frac{1}{2}   \norm{u^N - h}^{2}_{L^{2}(U)},
\label{ObjectivePDEExt}
\end{eqnarray}
where $h$ is a target profile. One can view the objective function (\ref{ObjectivePDEExt}) as a strong formulation of (\ref{ObjectivePDEIntro}). In this case, the variable $u$ satisfies the elliptic partial differential equation (PDE)
\begin{align*}
A u^N(x) &= f^N( x, \theta ), \qquad x\in U\nonumber\\
u^N(x)&= 0,\qquad x\in \partial U,
\end{align*}
and the adjoint PDE takes the form
\begin{align*}
A^{\dagger} \hat u^N(x)&=   u^N(x)-h(x), \qquad x\in U\nonumber\\
\hat u^N(x)&= 0,\qquad x\in \partial U.
\end{align*}

The methods of this paper allow us to prove theorems analogous to Theorems \ref{T:WellPosednessLimitSystem} and \ref{T:ConvergenceNTheorem}. Consider the non-local PDE system
\begin{align}
A u^{\ast}(t,x) &= g_{t}(x), \quad x\in U\nonumber\\
u^{\ast}(t,x)&= 0,\quad x\in \partial U,\label{Eq:LimitPDE1L2}
\end{align}
and
\begin{align}
A^{\dagger} \hat u^{\ast}(t,x)&= u^{\ast}(t,x) - h(x) , \quad x\in U\nonumber\\
\hat u^{\ast}(t,x)&= 0,\quad x\in \partial U.\label{Eq:LimitPDE2L2}
\end{align}
with
\begin{align}
g_t(x)
&=-\int_{0}^{t} \left(\int_{U} \hat u^{\ast}(s,x') B(x,x';\mu_{0}) dx'\right) ds.\label{Eq:Limiting_gL2}
\end{align}
The system of equations given by (\ref{Eq:LimitPDE1L2})-(\ref{Eq:LimitPDE2L2})-(\ref{Eq:Limiting_gL2}) is well-posed as our next Theorem \ref{T:WellPosednessLimitSystemL2} shows.

\begin{theorem}\label{T:WellPosednessLimitSystemL2}
Let $0<T<\infty$ be given and assume that Assumptions \ref{A:Assumption0} and \ref{A:AssumptionTargetProfile} hold. Then, there is a unique solution to the system (\ref{Eq:LimitPDE1L2})-(\ref{Eq:LimitPDE2L2}) in $C([0,T];H^{1}_{0}(U))\times C([0,T];H^{1}_{0}(U))$.
\end{theorem}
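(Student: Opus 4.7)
The plan is to adapt essentially verbatim the argument used in the proof of Theorem \ref{T:WellPosednessLimitSystem}, since the only structural change is that the forcing term $(\langle u^{\ast}-h,m\rangle)^{\top} m(x)$ in the adjoint equation is replaced by the simpler $u^{\ast}(t,x)-h(x)$. In particular, Assumption \ref{A:AssumptionTargetProfile} already supplies $h\in L^{2}(U)$ and $Ah\in L^{2}(U)$, so standard elliptic estimates based on the coercivity in Assumption \ref{A:Assumption0} yield $\|\hat u^{\ast}(t)\|_{H^{1}_{0}(U)}\leq C(\|u^{\ast}(t)\|_{L^{2}(U)}+\|h\|_{L^{2}(U)})$, which is actually a \emph{cleaner} bound than in the original proof (no orthonormal $m$ is needed anywhere).

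For uniqueness, I would take two putative solutions, form the differences $\phi=u_{1}^{\ast}-u_{2}^{\ast}$, $\psi=\hat u_{1}^{\ast}-\hat u_{2}^{\ast}$, $\zeta=g_{t}^{1}-g_{t}^{2}$, and write the triple system
\begin{align*}
A\phi(t,x) &= \zeta(t,x), \\
A^{\dagger}\psi(t,x) &= \phi(t,x), \\
\tfrac{d}{dt}\zeta(t,x) &= -[T_{B}\psi](t,x),
\end{align*}
with $\zeta(0,\cdot)=0$. Lax--Milgram gives $\|\phi(t)\|_{H^{1}_{0}}\leq C\|\zeta(t)\|_{L^{2}}$ and $\|\psi(t)\|_{H^{1}_{0}}\leq C\|\phi(t)\|_{L^{2}}\leq C\|\phi(t)\|_{H^{1}_{0}}$. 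Differentiating $\tfrac{1}{2}(\zeta,\zeta)$ in $t$, using Cauchy--Schwarz together with the uniform bound on $B(\cdot,\cdot;\mu_{0})$ from (\ref{Eq:BoundB}), and combining these estimates produces $\tfrac{d}{dt}(\zeta,\zeta)\leq C(\zeta,\zeta)$; Gronwall then forces $\zeta\equiv 0$, hence $\phi\equiv\psi\equiv 0$.

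For existence, I would again apply Banach's fixed point theorem on the space $V_{T}=C([0,T];H^{1}_{0}(U))$. Given $\hat u^{\ast}\in V_{T}$, set $\ell(t,x)=[T_{B}\hat u^{\ast}](t,x)$ (with $\|\ell(t)\|_{L^{2}(U)}\leq C\|\hat u^{\ast}(t)\|_{L^{2}(U)}$ using the Hilbert--Schmidt bound on $B$), and define $L(u^{\ast},\hat u^{\ast})=(w,\hat w)$ by solving $Aw(t,x)=-\int_{0}^{t}\ell(s,x)\,ds$ and $A^{\dagger}\hat w(t,x)=w(t,x)-h(x)$. Elliptic estimates yield an inequality of the exact form (\ref{Eq:LocalBound}) with the same constants up to the removal of factors involving $\|m\|$, so $L$ maps a ball $V_{T}(M)\times V_{T}(M)$ into itself for $T,M$ satisfying $2C_{1}TM+C_{1}\|h\|_{L^{2}(U)}\leq M$. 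The contraction estimate (\ref{Eq:ContractionProperty}) carries over essentially unchanged because the map $\phi\mapsto\phi$ is trivially Lipschitz in $L^{2}$, so for $T$ sufficiently small the mapping is a contraction and Banach's theorem gives a local solution.

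Finally, to extend to arbitrary $T<\infty$, I would establish the same a priori $H^{1}_{0}$ bound that drives the iteration in the original proof. Using $\|g_{t}\|_{L^{2}(U)}\leq C\int_{0}^{t}\|\hat u^{\ast}(s)\|_{L^{1}(U)}\,ds$ and the elliptic estimate $\|\hat u^{\ast}(t)\|_{H^{1}_{0}(U)}\leq C(\|u^{\ast}(t)\|_{H^{1}_{0}(U)}+\|h\|_{L^{2}(U)})\leq C(\|g_{t}\|_{L^{2}(U)}+\|h\|_{L^{2}(U)})$, Gronwall gives $\|\hat u^{\ast}(t)\|_{H^{1}_{0}(U)}\leq\|h\|_{L^{2}(U)}e^{Ct}$, which prevents finite-time blow-up. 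The iterative construction of $(T_{k},M_{k})$ from the original proof then applies without change, producing a global solution on $[0,T]$ for every $T<\infty$. The only mild subtlety — and the single place where one should double-check details — is that the elliptic bound for $\hat u^{\ast}$ now uses $\|u^{\ast}-h\|_{L^{2}(U)}$ directly, so one must verify that $Ah\in L^{2}$ (which is given by Assumption \ref{A:AssumptionTargetProfile}) is sufficient to close the argument; no other genuinely new ingredient is required.
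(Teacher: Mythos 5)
Your proposal is correct and follows exactly the route the paper intends: the paper itself states that the proof of Theorem \ref{T:WellPosednessLimitSystemL2} ``follows very closely'' that of Theorem \ref{T:WellPosednessLimitSystem}, and your adaptation --- replacing the adjoint forcing $(\la u^{\ast}-h,m\ra)^{\top}m$ by $u^{\ast}-h$ in the uniqueness difference system, the fixed-point map, and the global a priori bound --- is precisely the required modification. One minor remark: for this well-posedness result only $h\in L^{2}(U)$ is actually needed (the condition $Ah\in L^{2}(U)$ is used later in the spectral analysis, not here), so your closing caveat is even less of a concern than you suggest.
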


\begin{theorem} \label{T:ConvergenceNTheoremL2}
Let $0<T<\infty$ be given and assume the Assumptions \ref{A:Assumption0}, \ref{A:AssumptionTargetProfile} and \ref{A:Assumption1} hold. Then, as $N \rightarrow \infty$,
\begin{eqnarray*}
\max_{0 \leq t \leq T} \mathbb{E}\norm{u^N(t)-u^{\ast}(t)}_{H^{1}_{0}(U)} &\rightarrow& 0, \notag \\
\max_{0 \leq t \leq T} \mathbb{E}\norm{\hat u^N(t)-\hat u^{\ast}(t)}_{H^{1}_{0}(U)} & \rightarrow& 0, \notag \\
\max_{0 \leq t \leq T} \mathbb{E}\norm{g^N_{t} -g_{t}}_{L^{2}(U)} &\rightarrow& 0. 
\end{eqnarray*}
In addition, $\la f, \mu^{N}_{t} \ra \overset{p} \rightarrow \la f, \mu_0 \ra$ (convergence in probability) for each $t\geq 0$ for all $f \in C^2_b(\mathbb{R}^{2 + d })$.
\end{theorem}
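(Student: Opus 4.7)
The plan is to mirror the proof of Theorem \ref{T:ConvergenceNTheorem} step by step, since the only structural change is that the adjoint source term $(\langle u^N,m\rangle-\langle h,m\rangle)^\top m(x)$ has been replaced by $u^N(x)-h(x)$. The forward PDE, the gradient descent dynamics, and hence the representation formula for $g^N_t$ derived in Lemma \ref{L:Bound_g} (equation (\ref{Eq:Rep_g})) are unchanged. The entire proof strategy thus reduces to: (i) reestablish the a priori bounds of Section \ref{S:aprioriBounds} and Lemma \ref{L:UniformBoundedness} in the strong formulation, (ii) set up the triple $(\phi^N,\psi^N,\zeta^N)$ of differences from the limit solution, and (iii) close a Gronwall estimate.

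For the a priori bounds, elliptic coercivity of $A^{\dagger}$ now yields directly $\|\hat u^N(t)\|_{H^1_0(U)}\le C(\|u^N(t)\|_{L^2(U)}+\|h\|_{L^2(U)})$, which is actually simpler than the corresponding bound in Lemma \ref{L:C_Bound} (one no longer needs the Cauchy--Schwarz step through the vector $m$). Chaining this with $\|u^N(t)\|_{H^1_0(U)}\le C\|g^N_t\|_{L^2(U)}$ and Lemma \ref{L:Bound_g} produces a Gronwall inequality on $\max_{s\le t}\mathbb{E}\|\hat u^N(s)\|_{H^1_0(U)}^2$ identical in form to (\ref{uHatBoundMay}). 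Consequently, the uniform bounds on $|c^i_t|$, $\mathbb{E}(\|w^i_t\|+|\eta^i_t|)$ from Lemma \ref{L:C_Bound}, on $\mathbb{E}\|g^N_t\|_{L^2(U)}^2$ from Lemma \ref{L:Bound_g}, and on $\mathbb{E}\|u^N(t)\|_{H^1_0(U)}^2+\mathbb{E}\|\hat u^N(t)\|_{H^1_0(U)}^2$ from Lemma \ref{L:UniformBoundedness} all carry over with no change. In particular, equations (\ref{CboundNewMay})--(\ref{UniformWparticleBound}) still hold, so $\langle f,\mu^N_t\rangle\overset{p}{\to}\langle f,\mu_0\rangle$ exactly as before.

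To establish (\ref{Eq:SystemConvergence}), I define
\begin{equation*}
\phi^N(t,x)=u^N(t,x)-u^\ast(t,x),\qquad \psi^N(t,x)=\hat u^N(t,x)-\hat u^\ast(t,x),\qquad \zeta^N(t,x)=g^N_t(x)-g_t(x),
\end{equation*}
and subtract the pre-limit equations from (\ref{Eq:LimitPDE1L2})--(\ref{Eq:LimitPDE2L2}), using the representation (\ref{Eq:Rep_g}) to write
\begin{equation*}
\zeta^N(t,x)=\zeta^N(0,x)-\int_0^t\!\!\int_U \hat u^N(s,x')\bigl(B(x,x';\mu^N_s)-B(x,x';\mu_0)\bigr)dx'\,ds-\int_0^t\!\!\int_U\psi^N(s,x')B(x,x';\mu_0)dx'\,ds.
\end{equation*}
Coercivity of $A$ gives $\|\phi^N(t)\|_{H^1_0(U)}\le C\|\zeta^N(t)\|_{L^2(U)}$, and coercivity of $A^\dagger$ with the new source $\phi^N$ gives $\|\psi^N(t)\|_{H^1_0(U)}\le C\|\phi^N(t)\|_{L^2(U)}\le C\|\zeta^N(t)\|_{L^2(U)}$ directly, without passing through an auxiliary projection onto $\mathrm{span}\{m_\ell\}$. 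Taking $L^2(U)$ norms in the above display, then expectations, and applying Cauchy--Schwarz yields a bound of the form
\begin{equation*}
\mathbb{E}\|\psi^N(t)\|_{H^1_0(U)}\le C\Bigl(\mathbb{E}\|\zeta^N(0)\|_{L^2(U)}+\int_0^t \mathbb{E}\bigl[\|\hat u^N(s)\|_{L^2(U)}\|B(\cdot,\cdot;\mu^N_s-\mu_0)\|_{L^2(U\times U)}\bigr]ds+\int_0^t \mathbb{E}\|\psi^N(s)\|_{H^1_0(U)}\,ds\Bigr).
\end{equation*}
The second term is controlled exactly as in the proof of Theorem \ref{T:ConvergenceNTheorem}: a Taylor expansion of $B$ combined with the uniform particle bounds (\ref{UniformCparticleBound}), (\ref{UniformWparticleBound}) and the i.i.d.\ initialization yields $\mathbb{E}\|B(\cdot,\cdot;\mu^N_s-\mu_0)\|_{L^2(U\times U)}^2\le C N^{-(2-2\beta)}\int_0^T\mathbb{E}\|\hat u^N(s)\|_{L^2(U)}^2ds+C/N\to 0$. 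Together with the bound $\mathbb{E}\|\zeta^N(0)\|_{L^2(U)}^2\le C/N^{2\beta-1}\to 0$ from (\ref{Eq:Boundg0}), Gronwall's lemma gives $\max_{0\le t\le T}\mathbb{E}\|\psi^N(t)\|_{H^1_0(U)}\to 0$, and then $\max_{0\le t\le T}\mathbb{E}\|\zeta^N(t)\|_{L^2(U)}\to 0$ and $\max_{0\le t\le T}\mathbb{E}\|\phi^N(t)\|_{H^1_0(U)}\to 0$ follow from the coercivity bounds.

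The only place where care is needed, and thus the main potential obstacle, is verifying that the removal of the $m$-projection in the adjoint does not disturb the chain of elliptic bounds. In fact it helps: the single-step coercivity estimate $\|\psi^N\|_{H^1_0(U)}\le C\|\phi^N\|_{L^2(U)}$ is stronger than the two-step inequality used in (\ref{LaxMilBound}), because the $L^2$-norm dominates any projection onto a fixed finite-dimensional subspace. Thus all other estimates in the proof of Theorem \ref{T:ConvergenceNTheorem} transfer verbatim, and the conclusion (including $\langle f,\mu^N_t\rangle\overset{p}{\to}\langle f,\mu_0\rangle$) follows.
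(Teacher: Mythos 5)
Your proposal is correct and matches the paper's approach exactly: the paper does not write out a separate proof of Theorem \ref{T:ConvergenceNTheoremL2}, stating only that it ``follows very closely'' the proof of Theorem \ref{T:ConvergenceNTheorem}, and your argument is precisely that adaptation, correctly identifying that the only change is the elliptic estimate $\norm{\psi^N(t)}_{H^1_0(U)} \leq C \norm{\phi^N(t)}_{L^2(U)}$ for the adjoint difference with source $\phi^N$ in place of $(\la \phi^N, m\ra)^\top m$, after which the representation (\ref{Eq:Rep_g}), the particle bounds, and the Gronwall closure carry over verbatim.
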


The proofs of Theorems \ref{T:WellPosednessLimitSystemL2} and \ref{T:ConvergenceNTheoremL2} follow very closely those of Theorems \ref{T:WellPosednessLimitSystem} and \ref{T:ConvergenceNTheorem} and thus will not be repeated here.

We can also prove the time-averaged weak convergence of $\hat u^{\ast}(t,x)$ and $u^{\ast}(t,x)$.
\begin{theorem}
For any function $f \in L^2$ and any $\phi \in \{\psi \in H_0^1(U) : A \psi \in L^2 \}$,
\begin{eqnarray}
\lim_{t \rightarrow \infty} \frac{1}{t} \int_0^t  ( f, \hat u^{\ast}(s) )^2 ds &=& 0, \notag \\
\lim_{t \rightarrow \infty} \frac{1}{t} \int_0^t  ( \phi, u^{\ast}(s) -h )^2 ds &=& 0.\notag
\end{eqnarray}

\end{theorem}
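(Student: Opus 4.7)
The plan is to exploit the Lyapunov structure of the $L^2$-formulation to obtain a spectral decay estimate analogous to the bound $\int_0^t c_k(s)^2\,ds \leq C_0 e^{-\lambda_k t/\gamma}$ derived in Section~\ref{S:PreliminaryCalc}. Repeating the Lyapunov calculation leading to \eqref{Eq:LyapunovFnct} for the $L^2$ objective $\bar J(t) = \tfrac{1}{2}\|u^{\ast}(t)-h\|_{L^2(U)}^2$, using that $A^{\dagger}\hat u^{\ast} = u^{\ast}-h$ and the representation \eqref{Eq:Limiting_gL2} for $g_t$, one obtains $\tfrac{d\bar J}{dt} = -(\hat u^{\ast}, T_B\hat u^{\ast})\leq 0$. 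Integrating gives the crucial global-in-time bound
\begin{equation*}
\int_0^{\infty}(\hat u^{\ast}(s), T_B \hat u^{\ast}(s))\,ds \;\leq\; \bar J(0) \;<\;\infty.
\end{equation*}

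Next I would expand $\hat u^{\ast}(s,x) = \sum_{k=1}^{\infty} c_k(s) e_k(x)$ in the orthonormal eigenbasis of $T_B$ (with eigenvalues $\lambda_k > 0$, as established in Section~\ref{S:PreliminaryCalc}). The Lyapunov bound becomes $\sum_{k=1}^{\infty}\lambda_k\int_0^{\infty} c_k(s)^2\,ds \leq \bar J(0)$, so for every fixed $k$, $\int_0^{\infty} c_k(s)^2\,ds \leq \bar J(0)/\lambda_k < \infty$, and in particular $\tfrac{1}{t}\int_0^t c_k(s)^2\,ds \to 0$ as $t\to\infty$. I also need a uniform-in-time bound on $\|\hat u^{\ast}(t)\|_{L^2}$; this follows exactly as in \eqref{InequalityHatU} from coercivity of $AA^{\dagger}$ (valid under Assumption~\ref{A:Assumption0}) combined with the fact that $\|A^{\dagger}\hat u^{\ast}(t)\|_{L^2}^2 = \|u^{\ast}(t)-h\|_{L^2}^2 \leq 2\bar J(0)$, yielding $\sup_{t\geq 0}\|\hat u^{\ast}(t)\|_{L^2} \leq C < \infty$.

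With these two ingredients the first assertion follows by a standard high/low frequency truncation. Given $f\in L^2(U)$, write $f=\sum_k b_k e_k$, split $f = f_M + f_M^{\perp}$ with $f_M = \sum_{k\leq M} b_k e_k$, and use Cauchy--Schwarz to estimate
\begin{equation*}
\frac{1}{t}\int_0^t (f, \hat u^{\ast}(s))^2\,ds \;\leq\; 2\|f\|_{L^2}^2\sum_{k=1}^M \frac{1}{t}\int_0^t c_k(s)^2\,ds \;+\; 2\|f_M^{\perp}\|_{L^2}^2\,\sup_{s\geq 0}\|\hat u^{\ast}(s)\|_{L^2}^2.
\end{equation*}
Given $\epsilon>0$, first choose $M$ large enough so the second term is below $\epsilon/2$ (using $\|f_M^{\perp}\|_{L^2}\to 0$), and then choose $t$ large enough so the first term (a finite sum whose each summand vanishes) is below $\epsilon/2$. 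This yields the first claim. The second claim reduces to the first: for $\phi\in\{\psi\in H_0^1(U): A\psi\in L^2(U)\}$, integration by parts (legitimate because both $\phi$ and $\hat u^{\ast}(s)$ lie in $H^1_0(U)$) gives $(\phi, u^{\ast}(s)-h) = (\phi, A^{\dagger}\hat u^{\ast}(s)) = (A\phi, \hat u^{\ast}(s))$, and $A\phi\in L^2(U)$ so applying the first part with test function $A\phi$ completes the proof.

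The only non-routine step is verifying that the spectral framework of Section~\ref{S:PreliminaryCalc} transfers verbatim to the $L^2$-objective; this is essentially automatic because $T_B$ and its eigenstructure are identical, and the only change is that $A^{\dagger}\hat u^{\ast}$ is now $u^{\ast}-h$ rather than $(\langle u^{\ast}-h, m\rangle)^{\top}m$, which still lies in $L^2$ under Assumption~\ref{A:AssumptionTargetProfile}. The truncation argument is then standard and the upgrade from the exponential-in-$k$ decay of Section~\ref{S:PreliminaryCalc} to merely $L^1$-in-time (which is what we obtain here) is harmless, since only Cesàro vanishing is claimed.
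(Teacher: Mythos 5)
Your proposal is correct and follows the same overall architecture as the paper's proof: expand $\hat u^{\ast}$ in the orthonormal eigenbasis $\{e_k\}$ of $T_B$, establish mode-wise time-integrability of the coefficients $c_k$, combine this with the uniform-in-time bound $\sup_{t\geq 0}\norm{\hat u^{\ast}(t)}_{L^2}\leq C$ (obtained, as in (\ref{UhatUniformBound}), from the coercivity of $AA^{\dagger}$), run a high/low frequency truncation, and reduce the second assertion to the first via $(\phi, u^{\ast}-h)=(A\phi,\hat u^{\ast})$. The one place where you genuinely deviate is the derivation of the key integrability statement: the paper asserts it ``similar to (\ref{Eq:ControlIntegral0}),'' i.e.\ via the Gronwall route of Section~\ref{S:PreliminaryCalc}, which passes through the inequality $c_k(t)^2\leq C_0-\frac{\lambda_k}{\gamma}\int_0^t c_k(s)^2\,ds$ and hence needs the $AA^{\dagger}$-coercivity already at this stage (and whose displayed conclusion in the paper, with a non-decreasing left side bounded by a decaying exponential, is evidently a typo for a bound of the form $\int_0^t c_k(s)^2\,ds\leq C_0\gamma/\lambda_k$). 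You instead integrate the Lyapunov identity $\frac{d\bar J}{dt}=-(\hat u^{\ast},T_B\hat u^{\ast})$ against $\bar J\geq 0$ to get $\sum_k\lambda_k\int_0^{\infty}c_k(s)^2\,ds\leq \bar J(0)$ directly, which is more elementary, avoids Gronwall entirely for this step, and sidesteps the problematic displayed inequality; coercivity of $AA^{\dagger}$ is then needed only for the uniform $L^2$ bound on $\hat u^{\ast}$, where you invoke it correctly. Both routes deliver exactly what is needed, namely $\frac{1}{t}\int_0^t c_k(s)^2\,ds\to 0$ for each fixed $k$ as in (\ref{TimeAveConv}), and your truncation estimate and the duality step for the second limit match the paper's.
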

\begin{proof}
Similar to (\ref{Eq:ControlIntegral0}), it can be proven that
\begin{eqnarray*}
\int_0^t c_k(s)^2 ds \leq C_0 \frac{\gamma}{\lambda_k}\left(1- e^{- \frac{\lambda_k}{\gamma} t }\right).
\end{eqnarray*}

Therefore,
\begin{eqnarray}
\lim_{t \rightarrow \infty} \frac{1}{t} \int_0^t c_k(s)^2 ds = 0.
\label{TimeAveConv}
\end{eqnarray}

Since $f \in L^2$, it can be represented as
\begin{eqnarray}
f(x) = \sum_{k=1}^{\infty} b_k e_k(x).\nonumber
\end{eqnarray}

For any $\epsilon > 0$, there exists an $M$ such that
\begin{eqnarray}
 \sum_{k=M}^{\infty} b_k^2  < \frac{ \epsilon^2}{2 \norm{f}}.\nonumber
\end{eqnarray}

Then,
\begin{eqnarray}
(f, \hat u^{\ast}(t) ) &=& (  \sum_{k=1}^{M} b_k e_k, \hat u^{\ast}(t) ) + (  \sum_{k=M+1}^{\infty} b_k e_k, \hat u^{\ast}(t) ).\nonumber
\end{eqnarray}

Using the Cauchy--Schwarz inequality and the uniform bound (\ref{UhatUniformBound}),
\begin{eqnarray}
\sup_{t \geq 0} \bigg{|} (  \sum_{k=M+1}^{\infty} b_k e_k, \hat u^{\ast}(t) ) \bigg{|} &\leq& \sup_{t \geq 0}  \bigg{(} \sum_{k=M+1}^{\infty} b_k^2 \bigg{)}^{\frac{1}{2}} (\hat u^{\ast}(t), \hat u^{\ast}(t) )^{\frac{1}{2}} \notag \\
&<& \frac{\epsilon}{2}.\nonumber
\end{eqnarray}

By the Cauchy--Schwarz inequality,
\begin{eqnarray*}
 \bigg{(}  \sum_{k=1}^{M} b_k e_k, \hat u^{\ast}(t) \bigg{)}^2 &\leq& \bigg{(}  \sum_{k=1}^{M} | b_k c_k(t) |  \bigg{)}^2 \notag \\
&\leq& \norm{f}^2 \sum_{k=1}^M c_k(t)^2.
\end{eqnarray*}

Then,
\begin{eqnarray*}
\frac{1}{t} \int_0^t  ( f, \hat u^{\ast}(s) )^2 ds &\leq& \frac{C}{t} \int_0^t \bigg{(} \sum_{k=1}^M c_k(s)^2 + \epsilon^2   \bigg{)} ds.
\end{eqnarray*}

Due to equation (\ref{TimeAveConv}),
\begin{eqnarray*}
\lim \sup_{t \rightarrow \infty} \frac{1}{t} \int_0^t  ( f, \hat u^{\ast}(s) )^2 ds \leq  C \epsilon^2.
\end{eqnarray*}

Since $\epsilon$ was arbitrary,
\begin{eqnarray*}
\lim_{t \rightarrow \infty} \frac{1}{t} \int_0^t  ( f, \hat u^{\ast}(s) )^2 ds  = 0.
\end{eqnarray*}

Consequently,
\begin{eqnarray*}
\lim_{t \rightarrow \infty} \frac{1}{t} \int_0^t  ( \phi, u^{\ast}(s) -h )^2 ds &=& \lim_{t \rightarrow \infty}  \frac{1}{t} \int_0^t  ( A \phi, \hat u^{\ast}(s) )^2 ds \notag \\
&=& 0.
\end{eqnarray*}

\end{proof}

An open question for future research is to obtain a global convergence theorem for the pre-limit PDE $u^N(t,x)$ analogous to the global convergence result Theorem \ref{T:PrelimitGlobalMinimumConvTheorem}. This theorem would require proving a decay rate for $c_k(s)$, which remains a significant technical challenge. As we also mentioned in Section \ref{SS:Applications}, another interesting future research topic is extending these theoretical results to nonlinear partial differential equations.

\label{Eq:ControlIntegral}

\end{document}